\newcommand{\mytitle}{Who Is the Strongest Enemy? Towards Optimal and Efficient Evasion Attacks in Deep RL}
\def\fighome{figures}
\pgfplotsset{compat=newest}
\newcommand{\RNum}[1]{\uppercase\expandafter{\romannumeral #1\relax}}
\newcommand*{\norm}[1]{\|#1\|}
\newcommand{\mdp}{\mathcal{M}}
\newcommand{\states}{\mathcal{S}}
\newcommand{\actions}{\mathcal{A}}
\newcommand{\dynamics}{P}
\newcommand{\rewards}{R}
\newcommand{\prob}{\Delta}
\newcommand{\ball}{\mathcal{B}}
\newcommand{\sadv}{h}
\newcommand{\aadv}{h^{(\actions)}}
\newcommand{\adv}{h}
\newcommand{\advset}{H_{\epsilon}}
\newcommand{\sadvset}{H_{\epsilon}}
\newcommand{\advpi}{\pi_h}
\newcommand{\advpia}{\pi_{h^{(\actions)}}}
\newcommand{\advpiopt}{\pi_{h^*}}
\newcommand{\advmaxd}{h^{\textrm{MaxDiff}}}
\newcommand{\advmaxworst}{h^{\textrm{MaxWorst}}}
\newcommand{\advminbest}{h^{\textrm{MinBest}}}
\newcommand{\advminq}{h^{\textrm{MinQ}}}
\newcommand{\advsetmaxd}{H^{\textrm{MaxDiff}}}
\newcommand{\advsetmaxworst}{H^{\textrm{MaxWorst}}}
\newcommand{\advsetminbest}{H^{\textrm{MinBest}}}
\newcommand{\advsetminq}{H^{\textrm{MinQ}}}
\newcommand{\advsetours}{H^{\textrm{\ours}}}
\newcommand{\typeone}{_{P}}
\newcommand{\typetwo}{_{DP}}
\newcommand{\pvfunc}{f_v}
\newcommand{\omboundary}{\partial_\pi}
\newcommand{\pbset}{\ball^{H}_{\epsilon}(\pi)}
\newcommand{\ypis}{Y^\pi_{\states \backslash \{s\}}}
\newcommand{\bpis}{\mathcal{T}^\pi_{\states \backslash \{s\}}}
\newcommand{\ypisk}{Y^\pi_{s_1, \cdots, s_k}}
\newcommand{\bpisk}{\mathcal{T}^\pi_{s_1, \cdots, s_k}}
\newcommand{\fpisk}{F^\pi_{s_1, \cdots, s_k}}
\newcommand{\pia}{\hat{\pi}}
\newcommand{\subpi}{\mathcal{T}}
\newcommand{\saname}{{SA-RL}\xspace}
\newcommand{\ours}{{PA-AD}\xspace}
\newcommand{\ouratla}{{PA-ATLA}\xspace}
\newcommand{\oursfull}{{Policy Adversarial Actor Director}\xspace}
\newcommand{\minbest}{{MinBest}\xspace}
\newcommand{\maxworst}{{MaxWorst}\xspace}
\newcommand{\minq}{{MinQ}\xspace}
\newcommand{\maxdiff}{{MaxDiff}\xspace}
\newcommand{\sappo}{{SA-PPO}\xspace}
\newcommand{\atlappo}{{ATLA-PPO}\xspace}
\newcommand{\oursppo}{{PA-ATLA-PPO}\xspace}
\newcommand{\pbsetlarge}{{Admissible Adversarial Policy Set}\xspace}
\newcommand{\pbsetname}{{admissible adversarial policy set}\xspace}
\newcommand{\pbsetshort}{{Adv-policy-set}\xspace}
\newcommand{\ombname}{{outermost boundary}\xspace}
\newcommand{\specialcell}[2][c]{%
  \begin{tabular}[#1]{@{}c@{}}#2\end{tabular}}
\def\beq{\begin{equation}}
\def\eeq{\end{equation}\noindent}
\newcommand{\bp}{\begin{psfrags}}
\newcommand{\ep}{\end{psfrags}}
\newcommand{\bc}{\begin{center}}
\newcommand{\ec}{\end{center}}
\newtheorem{theorem}{Theorem}
\newtheorem{lemma}[theorem]{Lemma}
\newtheorem{definition}[theorem]{Definition}
\newtheorem{proposition}[theorem]{Proposition}
\definecolor{lightgray}{rgb}{0.8, 0.8, 0.8} 
\definecolor{mygreen}{rgb}{0.14, 0.53, 0.25} 
\definecolor{myred}{rgb}{0.89, 0.26, 0.20}
\title{\mytitle}
\author{%
  Yanchao Sun\textsuperscript{\rm 1} 
  \qquad
  Ruijie Zheng\textsuperscript{\rm 2} 
  \qquad
  Yongyuan Liang\textsuperscript{\rm 3} 
  \qquad
  Furong Huang\textsuperscript{\rm 4}  \\
  \textsuperscript{\rm 1,2,4} University of Maryland, College Park \qquad
  \textsuperscript{\rm 3} Sun Yat-sen University \\
  \textsuperscript{\rm 1,2,4}\texttt{\{ycs,rzheng12,furongh\}@umd.edu} \qquad
  \textsuperscript{\rm 3}\texttt{liangyy58@mail2.sysu.edu.cn} 
}
\begin{document}

\maketitle

\begin{abstract}
Evaluating the worst-case performance of a reinforcement learning (RL) agent under the strongest/optimal adversarial perturbations on state observations (within some constraints) is crucial for understanding the robustness of RL agents. However, finding the optimal adversary is challenging, in terms of both whether we can find the optimal attack and how efficiently we can find it. Existing works on adversarial RL either use heuristics-based methods that may not find the strongest adversary, or directly train an RL-based adversary by treating the agent as a part of the environment, which can find the optimal adversary but may become intractable in a large state space. 
This paper introduces a novel attacking method to find the optimal attacks through collaboration between a designed function named ``actor'' and an RL-based learner named ``director''. The actor crafts state perturbations for a given policy perturbation direction, and the director learns to propose the best policy perturbation directions. Our proposed algorithm, PA-AD, is theoretically optimal and significantly more efficient than prior RL-based works in environments with large state spaces. Empirical results show that our proposed PA-AD universally outperforms state-of-the-art attacking methods in various Atari and MuJoCo environments. By applying PA-AD to adversarial training, we achieve state-of-the-art empirical robustness in multiple tasks under strong adversaries. The codebase is released at \href{https://github.com/umd-huang-lab/paad_adv_rl}{https://github.com/umd-huang-lab/paad\_adv\_rl}.
\end{abstract}


\section{Introduction}
\label{sec:intro}





Deep Reinforcement Learning (DRL) has achieved incredible success in many applications. However, recent works~\citep{huang2017adversarial,pattanaik2018robust} reveal that a well-trained RL agent may 
be vulnerable to test-time \textit{evasion attacks}, making it risky to deploy RL models in high-stakes applications.
As in most related works, we consider a \textit{state adversary} which adds imperceptible noise to the observations of an agent such that its cumulative reward is reduced during test time. 

In order to understand the vulnerability of an RL agent and to improve its certified robustness, it is important to evaluate the worst-case performance of the agent under any adversarial attacks with certain constraints. 
In other words, it is crucial to find the strongest/optimal adversary that can minimize the cumulative reward gained by the agent with fixed constraints, as motivated in a recent paper by~\citet{zhang2021robust}.
Therefore, we focus on the following question:

\textbf{Given an arbitrary attack radius (budget) $\boldsymbol{\epsilon}$ for each step of the deployment, what is the worst-case performance of an agent under the strongest adversary? }

\begin{wrapfigure}{r}{0.35\textwidth}
\vspace{-2.5em}
  \centering
  \includegraphics[width=0.3\textwidth]{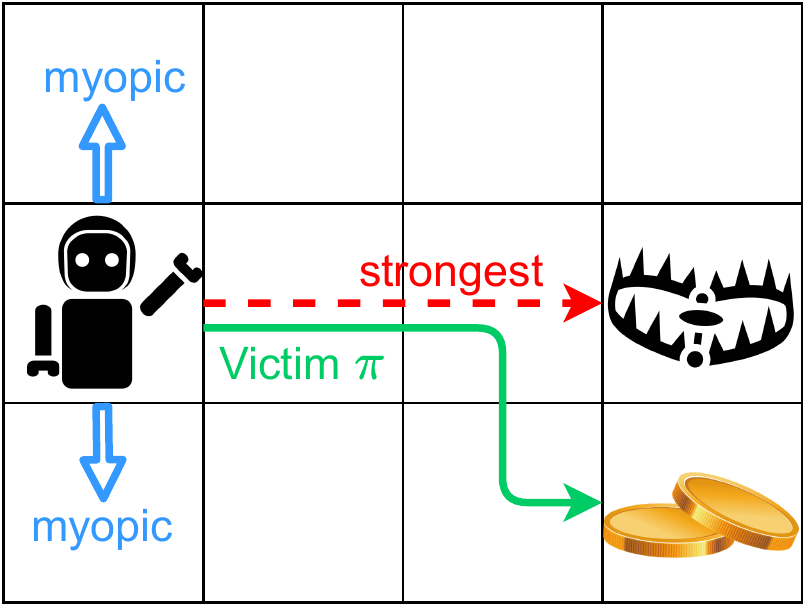}
  \vspace{-1em}
  \caption{\small{An example that a myopic adversary is not the strongest.}}
  \label{fig:myopic}
\vspace{-2em}
\end{wrapfigure}

Finding the strongest adversary in RL is challenging. 
Many existing attacks~\citep{huang2017adversarial,pattanaik2018robust} are based on heuristics, crafting adversarial states at every step independently, although steps are interrelated in contrast to image classification tasks.
These heuristic methods can often effectively reduce the agent's reward,
but are not guaranteed to achieve the strongest attack under a given budget.
This type of attack is ``myopic'' since it does not plan for the future.
Figure~\ref{fig:myopic} shows an intuitive example, where myopic adversaries only prevent the agent from selecting the best action in the current step, but the strongest adversary can strategically ``lead'' the agent to a trap, which is the worst event for the agent.


Achieving computational efficiency arises as another challenge in practice, even if the strongest adversary can be found in theory. 
A recent work~\citep{zhang2020robust} points out that learning the optimal state adversary is equivalent to learning an optimal policy in a new Markov Decision Process (MDP). A follow-up work~\citep{zhang2021robust} shows that the learned adversary significantly outperforms prior adversaries in MuJoCo games.
However, the state space and the action space of the new MDP are both as large as the state space in the original environment, which can be high-dimensional in practice. For example, video games and autonomous driving systems use images as observations. In these tasks, learning the state adversary directly 
becomes computationally intractable.



To overcome the above two challenges, \textbf{we propose a novel attack method called \oursfull (\ours)}, where we design a ``director'' and an ``actor'' that collaboratively finds the optimal state perturbations.
In \ours, a director learns an MDP named \emph{Policy Adversary MDP (PAMDP)}, and an actor is embedded in the dynamics of PAMDP.
At each step, the director proposes a perturbing direction in the policy space, and the actor crafts a perturbation in the state space to lead the victim policy towards the proposed direction. Through a trail-and-error process, the director can find the optimal way to cooperate with the actor and attack the victim policy.
Theoretical analysis shows that the optimal policy in PAMDP induces an optimal state adversary.
The size of PAMDP is generally smaller than the adversarial MDP defined by \citet{zhang2021robust} and thus is easier to be learned efficiently using off-the-shelf RL algorithms.
With our proposed \emph{director-actor collaborative mechanism}, \ours outperforms state-of-the-art attacking methods on various types of environments, and improves the robustness of many DRL agents by adversarial training.

\textbf{Summary of Contributions} \quad\\
\textbf{(1)} We establish a theoretical understanding of the optimality of evasion attacks from the perspective of policy perturbations, allowing a more efficient implementation of optimal attacks.\\
\textbf{(2)} We introduce a Policy Adversary MDP (PAMDP) model, whose optimal policy induces the optimal state adversary under any attacking budget $\epsilon$.\\
\textbf{(3)} We propose a novel attack method, \ours, which efficiently searches for the optimal adversary in the PAMDP. \ours is a general method that works on stochastic and deterministic victim policies, vectorized and pixel state spaces, as well as discrete and continuous action spaces. \\
\textbf{(4)} Empirical study shows that \ours universally outperforms previous attacking methods in various environments, including Atari games and MuJoCo tasks. \ours achieves impressive attacking performance in many environments using very small attack budgets, \\
\textbf{(5)} Combining our strong attack \ours with adversarial training, we significantly improve the robustness of RL agents, and achieve the \emph{state-of-the-art robustness in many tasks.}


\section{Preliminaries and Notations}
\label{sec:setup}

\textbf{The Victim RL Agent} \quad
In RL, an agent interacts with an environment modeled by a Markov Decision Process (MDP) denoted as a tuple $\mdp=\langle \states, \actions, \dynamics, \rewards, \gamma \rangle$, where $\states$ is a state space with cardinality $|\states|$, $\actions$ is an action space with cardinality $|\actions|$, $\dynamics: \states \times \actions \to \prob(\states)$ is the transition function \footnote{$\prob(X)$ denotes the the space of probability distributions over $X$.}, $\rewards: \states \times \actions \to \mathbb{R}$ is the reward function, and $\gamma \in (0, 1)$ is the discount factor. 
In this paper, we consider a setting where the state space is much larger than the action space, which arises in a wide variety of environments.  
For notation simplicity, our theoretical analysis focuses on a finite MDP, but our algorithm applies to continuous state spaces and continuous action spaces, as verified in experiments.
The agent takes actions according to its \textit{policy}, $\pi: \states \to \prob(\actions)$. 
We suppose the victim uses a fixed policy $\pi$ with a function approximator (e.g. a neural network) during test time.
We denote the \textit{space of all policies} as $\Pi$, which is a Cartesian product of $|\states|$ simplices.
The \textit{value} of a policy $\pi\in\Pi$ for state $s\in\states$ is defined as $V^\pi(s) = \mathbb{E}_{\pi,P}[\sum_{t=0}^\infty \gamma^t R(s_t,a_t)|s_0=s]$.

\textbf{Evasion Attacker} \quad
Evasion attacks are test-time attacks that aim to reduce the expected total reward gained by the agent/victim.
As in most literature~\citep{huang2017adversarial,pattanaik2018robust,zhang2020robust}, we assume the attacker knows the victim policy $\pi$ (white-box attack).
However, the attacker does not know the environment dynamics, nor does it have the ability to change the environment directly. 
The attacker can observe the interactions between the victim agent and the environment, including states, actions and rewards.
We focus on a typical \textit{state adversary}~\citep{huang2017adversarial,zhang2020robust}, which perturbs the state observations returned by the environment before the agent observes them. Note that the underlying states in the environment are not changed. 

Formally, we model a state adversary by a function $\sadv$ which perturbs state $s \in \states$ into $\tilde{s}:=\sadv(s)$, so that the input to the agent's policy is $\tilde{s}$ instead of $s$.
In practice, the adversarial perturbation is usually under certain constraints.
In this paper, we consider the common $\ell_p$ threat model~\citep{ian2015explaining}: $\tilde{s}$ should be in $\ball_\epsilon(s)$, where $\ball_\epsilon(s)$ denotes an $\ell_p$ norm ball centered at $s$ with radius $\epsilon\geq 0$, a constant called the \textit{budget} of the adversary for every step. With the budget constraint, we define the \textit{admissible state adversary} and the \textit{admissible adversary set} as below.

\begin{definition}[\textbf{Set of Admissible State Adversaries $\advset$}]
A state adversary $\adv$ is said to be \textit{admissible} if $\forall s\in\states$, we have $\sadv(s)\in\ball_\epsilon(s)$.
The set of all admissible state adversaries is denoted by $\advset$. 
\end{definition}
\vspace{-0.5em}

Then the goal of the attacker is to find an adversary $\adv^*$ in $\sadvset$ that maximally reduces the cumulative reward of the agent. 
In this work, we propose a novel method to learn the optimal state adversary through the identification of an optimal \emph{policy perturbation} defined and motivated in the next section. 

\section{Understanding Optimal Adversary via Policy Perturbations}
\label{sec:optimal}

In this section, we first motivate our idea of interpreting evasion attacks as perturbations of policies, then discuss how to efficiently find the optimal state adversary via the optimal policy perturbation.



\begin{wrapfigure}{r}{0.35\textwidth}
\vspace{-1em}
  \centering
  \includegraphics[width=0.35\textwidth]{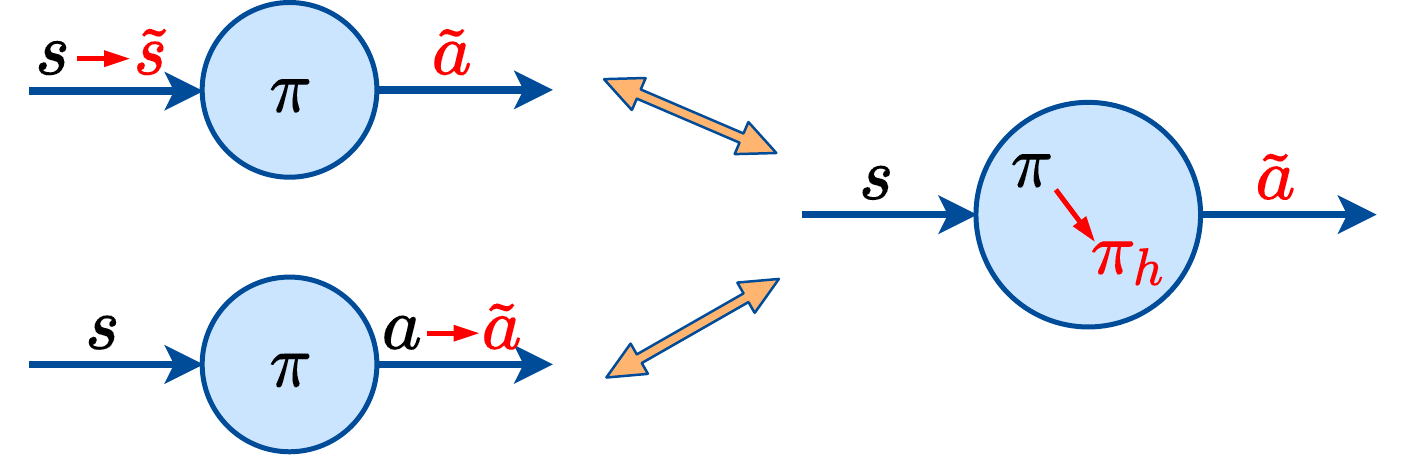}
  \vspace{-2em}
  \caption{\small{Equivalence between evasion attacks and policy perturbations.}}
  \label{fig:equiv}
\vspace{-0.5em}
\end{wrapfigure}

\textbf{Evasion Attacks Are Perturbations of Policies}\quad
\label{sec:perturb}
Although existing literature usually considers state-attacks and action-attacks separately, we point out that evasion attacks, either applied to states or actions, are essentially equivalent to perturbing the agent's policy $\pi$ into another policy $\advpi$ in the policy space $\Pi$. 
For instance, as shown in Figure~\ref{fig:equiv}, if the adversary $h$ alters state $s$ into state $\tilde{s}$, the victim selects an action $\tilde{a}$ based on $\pi(\cdot|\tilde{s})$. This is equivalent to directly perturbing $\pi(\cdot|s)$ to $\advpi(\cdot|s):=\pi(\cdot|\tilde{s})$. (See Appendix~\ref{app:relation} for more detailed analysis including action adversaries.)


In this paper, we aim to find the optimal state adversary through the identification of the ``optimal policy perturbation'', which has the following \textbf{merits}.
\textbf{(1)} $\advpi(\cdot|s)$ usually lies in a lower dimensional space than $\adv(s)$ for an arbitrary state $s\in\states$.
For example, in Atari games, the action space is discrete and small (e.g. $|\actions|=18$), while a state is a high-dimensional image.
Then the state perturbation $h(s)$ is an image, while $\advpi(\cdot|s)$ is a vector of size $|\actions|$.
\textbf{(2)} It is easier to characterize the optimality of a policy perturbation than a state perturbation.
How a state perturbation changes the value of a victim policy depends on both the victim policy network and the environment dynamics. In contrast, how a policy perturbation changes the victim value only depends on the environment.
Our Theorem~\ref{thm:boundary} in Section~\ref{sec:associate} and Theorem~\ref{thm:polytope} in Appendix~\ref{app:topology} both provide insights about how $V^\pi$ changes as $\pi$ changes continuously. 
\textbf{(3)} Policy perturbation captures the essence of evasion attacks, and unifies state and action attacks. 
Although this paper focuses on state-space adversaries, the learned ``optimal policy perturbation'' can also be used to conduct action-space attacks against the same victim.

\textbf{Characterizing the Optimal Policy Adversary}\quad
\label{sec:associate}
As depicted in Figure~\ref{fig:diagram}, the policy perturbation serves as a bridge connecting the perturbations in the state space and the value space.
Our goal is to find the optimal state adversary by identifying the optimal ``policy adversary''.
We first define an \pbsetlarge(\pbsetshort) $\pbset \subset \Pi$ as the set of policies perturbed from $\pi$ by all admissible state adversaries $\adv\in\advset$. In other words, when a state adversary perturbs states within an $\ell_p$ norm ball $\ball_\epsilon(\cdot)$, the victim policy is perturbed within $\pbset$.
\begin{definition}[\textbf{\pbsetlarge(\pbsetshort) $\pbset$}]
\label{def:pbset}
  For an MDP $\mdp$, a fixed victim policy $\pi$, we define the \pbsetname(\pbsetshort) w.r.t. $\pi$, denoted by $\pbset$, as the set of policies that are perturbed from $\pi$ by all admissible adversaries, i.e., 
  \setlength\abovedisplayskip{-5pt}
  \setlength\belowdisplayskip{-5pt}
  \begin{equation}
  \pbset := \{ \advpi \in\Pi: \exists \sadv \in \sadvset \text{ s.t } \forall s, \advpi(\cdot|s) = \pi(\cdot|\sadv(s)) \}.
  \end{equation}
\end{definition}
\textbf{Remarks} \quad 
(1) $\pbset$ is a subset of the policy space $\Pi$ and it surrounds the victim $\pi$, as shown in Figure~\ref{fig:diagram}(middle). In the same MDP, $\pbset$ varies for different victim $\pi$ or different attack budget $\epsilon$. 
(2) In Appendix~\ref{app:topology}, we characterize the topological properties of $\pbset$. We show that for a continuous function $\pi$ (e.g., neural network), $\pbset$ is connected and compact, and the value functions generated by all policies in the \pbsetshort $\pbset$ form a polytope (Figure~\ref{fig:diagram}(right)), following the polytope theorem by~\citet{dadashi2019the}.




\begin{figure*}[!t]
 \vspace{-2.5em}
  \begin{center}
   \includegraphics[width=0.8\textwidth]{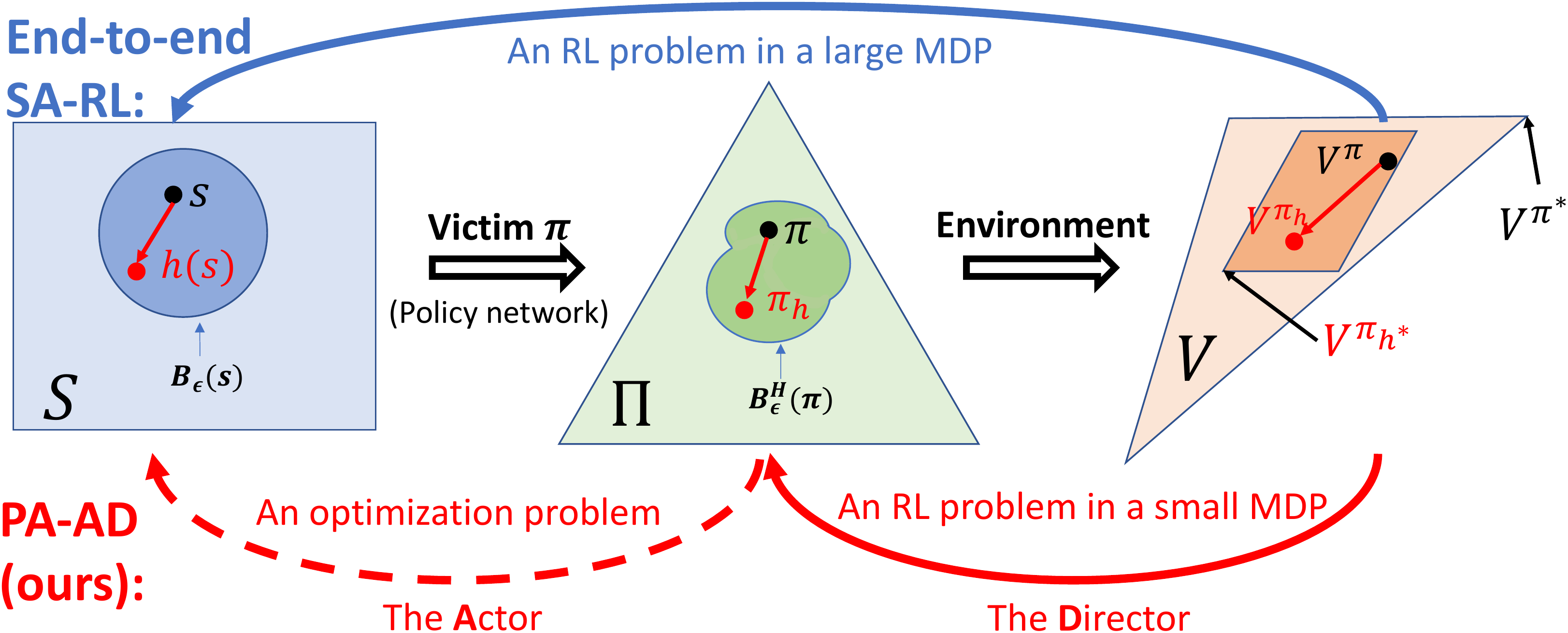}
  \vspace{-0.5em}
  \caption[center]{\small{A state adversary $\adv$ perturbs $s$ into $\adv(s) \in \ball_\epsilon(s)$ in the state space; hence, the victim's policy $\pi$ is perturbed into $\advpi$ within the \pbsetshort $\pbset$; as a result, the expected total reward the victim can gain becomes $V^{\advpi}$ instead of $V^\pi$.
  A prior work \saname~\citep{zhang2021robust} directly uses an RL agent to learn the best state adversary $\adv^*$, which works for MDPs with small state spaces, but suffers from high complexity in larger MDPs.
  In contrast, we find the optimal state adversary $\adv^*$ efficiently through identifying the optimal policy adversary $\advpiopt$.
  Our proposed attack method called \ours contains an RL-based ``director'' which learns to propose policy perturbation $\advpi$ in the policy space, and a non-RL ``actor'', which targets at the proposed $\advpi$ and computes adversarial states in the state space.
  Through this collaboration, the director can learn the optimal policy adversary $\advpiopt$ using RL methods, such that the actor executes $\adv^*$ as justified in Theorem~\ref{thm:optimality}.
  }}
  \label{fig:diagram}
  \end{center}
  \vspace{-2em}
\end{figure*}


Given that the \pbsetshort $\pbset$ contains all the possible policies the victim may execute under admissible state perturbations, we can characterize the optimality of a state adversary through the lens of policy perturbations.
Recall that the attacker's goal is to find a state adversary $\adv^* \in \advset$ that minimizes the victim's expected total reward. 
From the perspective of policy perturbation, the attacker's goal is to perturb the victim's policy to another policy $\advpiopt \in \pbset$ with the lowest value. 
Therefore, we can define the optimal state adversary and the optimal policy adversary as below.
\begin{definition}[\textbf{Optimal State Adversary $\adv^*$ and Optimal Policy Adversary $\advpiopt$}]
\label{def:opt_state_policy}
  For an MDP $\mdp$, a fixed policy $\pi$, and an admissible adversary set $\advset$ with attacking budget $\epsilon$, \\
  (1) an \textbf{optimal state adversary} $\adv^*$ satisfies $\adv^* \in \mathrm{argmin}_{\adv \in \advset} V^{\advpi} (s), \forall s\in\states$, which leads to \\
  (2) an \textbf{optimal policy adversary} $\advpiopt$ satisfies $\advpiopt \in \mathrm{argmin}_{\advpi \in \pbset} V^{\advpi} (s), \forall s\in\states$.\\
  Recall that $\advpi$ is the perturbed policy caused by adversary $h$, i.e., $\pi_h(\cdot|s)=\pi(\cdot|h(s)), \forall s\in\states$.
  \vspace{-0.25em}
\end{definition}
Definition~\ref{def:opt_state_policy} implies an equivalent relationship between the optimal state adversary and the optimal policy adversary: an optimal state adversary leads to an optimal policy adversary, and any state adversary that leads to an optimal policy adversary is optimal.
Theorem~\ref{thm:exist_opt} in Appendix~\ref{app:existence} shows that there always exists an optimal policy adversary for a fixed victim $\pi$, and learning the optimal policy adversary is an RL problem. 
(A similar result have been shown by \citet{zhang2020robust} for the optimal state adversary, while we focus on the policy perturbation.)

Due to the equivalence, if one finds an optimal policy adversary $\advpiopt$, then the optimal state adversary can be found by executing targeted attacks with target policy $\advpiopt$.
However, directly finding the optimal policy adversary in the \pbsetshort $\pbset$ is challenging since $\pbset$ is generated by all admissible state adversaries in $\advset$ and is hard to compute. 
To address this challenge, we first get insights from theoretical characterizations of the \pbsetshort $\pbset$.
Theorem~\ref{thm:boundary} below shows that the ``\ombname'' of $\pbset$ always contains an optimal policy adversary.
Intuitively, a policy $\pi^\prime$ is in the outermost boundary of $\pbset$ if and only if no policy in $\pbset$ is farer away from $\pi$ than $\pi^\prime$ in the direction $\pi^\prime-\pi$.
Therefore, if an adversary can perturb a policy along a direction, it should push the policy as far away as possible in this direction under the budget constraints. 
Then, the adversary is guaranteed to find an optimal policy adversary after trying all the perturbing directions. 
In contrast, such a guarantee does not exist for state adversaries, justifying the benefits of considering policy adversaries.
Our proposed algorithm in Section~\ref{sec:algo} applies this idea to find the optimal attack: \emph{an RL-based director searches for the optimal perturbing direction, and an actor is responsible for pushing the policy to the \ombname of $\pbset$ with a given direction.}
\begin{theorem}
  \label{thm:boundary}
  For an MDP $\mdp$, a fixed policy $\pi$, and an admissible adversary set $\advset$, define the \textbf{\ombname} of the \pbsetname $\pbset$ w.r.t $\pi$ as 
  \setlength\abovedisplayskip{2pt}
  \setlength\belowdisplayskip{2pt}
  \begin{equation}
  \resizebox{0.94\hsize}{!}{%
    $\omboundary \pbset := \{ \pi^\prime \in \pbset: \forall s \in \states, \theta>0, \nexists \hat{\pi}\in\pbset \text{ s.t. } \hat{\pi}(\cdot|s) = \pi^\prime(\cdot|s) + \theta(\pi^\prime(\cdot|s)-\pi(\cdot|s))  \}.$
  }
  \end{equation}
  Then there exists a policy $\tilde{\pi} \in \omboundary \pbset$, such that $\tilde{\pi}$ is the optimal policy adversary w.r.t. $\pi$.
  \vspace{-0.25em}
\end{theorem}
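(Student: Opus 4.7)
The plan is to exhibit an optimal policy adversary inside $\omboundary\pbset$ by an extremal argument on the set of \emph{all} optimal policy adversaries. First, I would invoke Theorem~\ref{thm:exist_opt} so that
\[
\pbset^{\star} := \bigl\{\advpi\in\pbset : V^{\advpi}(s)\leq V^{\advpi'}(s)\ \text{for every } s\in\states,\ \advpi'\in\pbset\bigr\}
\]
is non-empty; combined with continuity of $\advpi\mapsto V^{\advpi}$ on $\Pi$ and compactness of $\pbset$ (recorded after Definition~\ref{def:pbset}), $\pbset^{\star}$ is a non-empty compact subset of $\pbset$. Introduce the continuous ``displacement'' potential $\Psi(\advpi):=\sum_{s\in\states}\|\advpi(\cdot|s)-\pi(\cdot|s)\|_{2}^{2}$ and, by compactness plus continuity, pick any maximizer $\tilde\pi\in\mathrm{argmax}_{\advpi\in\pbset^{\star}}\Psi(\advpi)$. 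The theorem reduces to showing $\tilde\pi\in\omboundary\pbset$.

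I would argue this by contradiction. If $\tilde\pi\notin\omboundary\pbset$, choose $s^{\star}\in\states$, $\theta>0$ and $\hat\pi\in\pbset$ with $\hat\pi(\cdot|s^{\star})=\tilde\pi(\cdot|s^{\star})+\theta(\tilde\pi(\cdot|s^{\star})-\pi(\cdot|s^{\star}))$; note $\tilde\pi(\cdot|s^{\star})\neq\pi(\cdot|s^{\star})$, since otherwise the pushed point coincides with $\tilde\pi(\cdot|s^{\star})$ and the ``push'' is degenerate. Define the ``spliced'' policy $\tilde\pi'$ by $\tilde\pi'(\cdot|s^{\star})=\hat\pi(\cdot|s^{\star})$ and $\tilde\pi'(\cdot|s)=\tilde\pi(\cdot|s)$ for $s\neq s^{\star}$. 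If $h_{1},h_{2}\in\advset$ realize $\hat\pi$ and $\tilde\pi$ respectively, then the splice $\tilde h$ using $h_{1}$ at $s^{\star}$ and $h_{2}$ elsewhere is still admissible and realizes $\tilde\pi'$, so $\tilde\pi'\in\pbset$.

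The crux is to show $V^{\tilde\pi'}\leq V^{\tilde\pi}$ pointwise via two applications of the performance difference lemma. Because $\tilde\pi'$ and $\tilde\pi$ differ only at $s^{\star}$,
\[
V^{\tilde\pi'}(s_{0})-V^{\tilde\pi}(s_{0})=\tfrac{\theta}{1-\gamma}\,d^{\tilde\pi'}_{s_{0}}(s^{\star})\sum_{a}\bigl(\tilde\pi(a|s^{\star})-\pi(a|s^{\star})\bigr)Q^{\tilde\pi}(s^{\star},a).
\]
To sign the inner sum, consider the ``pull-back'' policy $\tilde\pi_{0}$ equal to $\pi(\cdot|s^{\star})$ at $s^{\star}$ and $\tilde\pi$ elsewhere; the admissible identity $h(s^{\star})=s^{\star}$ shows $\tilde\pi_{0}\in\pbset$. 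Optimality of $\tilde\pi$ gives $V^{\tilde\pi_{0}}(s^{\star})\geq V^{\tilde\pi}(s^{\star})$, and the performance difference lemma applied at $s_{0}=s^{\star}$ (where $d^{\tilde\pi_{0}}_{s^{\star}}(s^{\star})\geq 1-\gamma>0$ from the $t=0$ term) forces $\sum_{a}(\tilde\pi(a|s^{\star})-\pi(a|s^{\star}))Q^{\tilde\pi}(s^{\star},a)\leq 0$. Substituting back yields $V^{\tilde\pi'}\leq V^{\tilde\pi}$ pointwise, so $\tilde\pi'\in\pbset^{\star}$. But $\Psi(\tilde\pi')-\Psi(\tilde\pi)=\bigl((1+\theta)^{2}-1\bigr)\|\tilde\pi(\cdot|s^{\star})-\pi(\cdot|s^{\star})\|_{2}^{2}>0$, contradicting the maximality of $\tilde\pi$. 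The main obstacle is precisely this sign control: the auxiliary pull-back $\tilde\pi_{0}$ is what converts the global optimality of $\tilde\pi$ into a local condition on the $Q^{\tilde\pi}$-advantage, which then propagates monotonically to the farther-pushed $\tilde\pi'$ through the same one-state perturbation identity.
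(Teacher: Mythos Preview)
Your proof is correct and follows a different but related route. The paper's argument also proceeds by contradiction and uses the same per-state splicing of policies, but its analytical engine is the Monotone Policy Interpolation lemma of Dadashi et al.\ (Lemma~\ref{lem:monotone}): along a one-state interpolation segment, the value function is either constant or strictly monotone, so pushing past a supposed optimum $\pi_a$ either produces a strictly better adversary (contradiction) or another optimum farther out, and in the latter case the paper simply ``recurses'' toward the boundary --- a step it does not make rigorous. You replace the Dadashi et al.\ lemma with two direct applications of the performance difference lemma, using the pull-back policy $\tilde\pi_0$ to convert global optimality of $\tilde\pi$ into a sign condition on the one-state advantage $\sum_a(\tilde\pi-\pi)Q^{\tilde\pi}$; and you replace the informal recursion with a clean extremal selection: maximizing the displacement potential $\Psi$ over the compact optimal set $\pbset^\star$ handles the equality case in one stroke, since the pushed policy $\tilde\pi'$ is still optimal yet has strictly larger $\Psi$. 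This makes your argument more self-contained and in fact more rigorous than the paper's, at the modest cost of invoking compactness of $\pbset$ and Theorem~\ref{thm:exist_opt} at the outset. The one soft spot --- your dismissal of the degenerate witness $\tilde\pi(\cdot|s^\star)=\pi(\cdot|s^\star)$ --- is a definitional artifact that the paper's proof shares verbatim, not a flaw specific to your approach.
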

Theorem~\ref{thm:boundary} is proven in Appendix~\ref{app:boundary}, and we visualize the outermost boundary in Appendix~\ref{app:example_boundary}.

\section{\ours: Optimal and Efficient Evasion Attack}
\label{sec:algo}

In this section, we first formally define the optimality of an attack algorithm and discuss some existing attack methods.
Then, based on the theoretical insights in Section~\ref{sec:optimal}, we introduce our algorithm, \textit{\oursfull (\ours)} that has an optimal formulation and is efficient to use.

Although many attack methods for RL agents have been proposed~\citep{huang2017adversarial,pattanaik2018robust,zhang2020robust}, it is not yet well-understood how to characterize the strength and the optimality of an attack method.
Therefore, we propose to formulate the optimality of an attack algorithm, which answers the question ``whether the attack objective finds the strongest adversary''.
\begin{definition}[Optimal Formulation of Attacking Algorithm]
\label{def:optimal}
  An attacking algorithm $\mathsf{Algo}$ is said to have an optimal formulation iff for any MDP $\mdp$, policy $\pi$ and admissible adversary set $\advset$ under attacking budget $\epsilon$, the set of optimal solutions to its objective, $\advset^{\mathsf{Algo}}$, is a subset of the optimal adversaries against $\pi$, i.e., $\advset^{\mathsf{Algo}} \subseteq \advset^*:=\{\adv^*| \adv^* \in \mathrm{argmin}_{\adv \in \advset} V^{\advpi} (s), \forall s\in\states\}$.
\end{definition}

\begin{figure*}[!b]
 \vspace{-1em}
  \begin{center}
   \includegraphics[width=\textwidth]{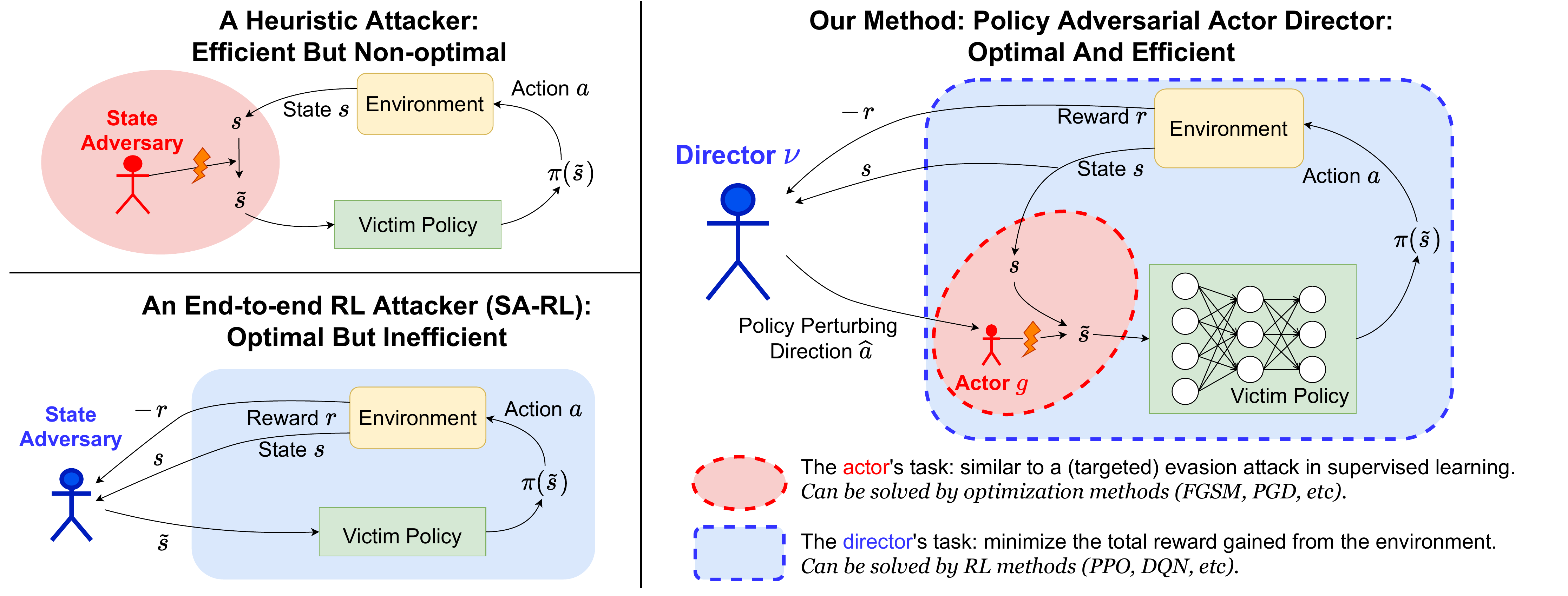}
  \vspace{-1.5em}
  \caption[center]{\small{An overview of \ours compared with a heuristic attacker and an end-to-end RL attacker. 
  Heuristic attacks are efficient, but may not find the optimal adversary as they do not learn from the environment dynamics. 
  An end-to-end RL attacker directly learns a policy to generate state perturbations, but is inefficient in large-state-space environments. 
  In contrast, our \textbf{\ours} solves the attack problem with a combination of an RL-based director and a non-RL actor, so that \ours achieves both optimality and efficiency.
  }}
  \label{fig:paad}
  \end{center}
  \vspace{-2em}
\end{figure*}

Many heuristic-based attacks, although are empirically effective and efficient, do not meet the requirements of optimal formulation.
In Appendix~\ref{app:optimal_heuristic}, we categorize existing heuristic attack methods into four types, and theoretically prove that there exist scenarios where these heuristic methods may not find the strongest adversary.
A recent paper~\citep{zhang2021robust} proposes to learn the optimal state adversary using RL methods, which we will refer to as \textit{\saname} in our paper for simplicity. \saname can be viewed as an ``end-to-end'' RL attacker, as it directly learns the optimal state adversary such that the value of the victim policy is minimized. 
The formulation of \saname satisfies Definition~\ref{def:optimal} and thus is optimal.
However, \saname learns an MDP whose state space and action space are both the same as the original state space. 
If the original state space is high-dimensional (e.g. images), learning a good policy in the adversary's MDP may become computationally intractable, as empirically shown in Section~\ref{sec:exp}.

Can we address the optimal attacking problem in an efficient manner? 
\saname treats the victim and the environment together as a black box and directly learns a state adversary. But if the victim policy is known to the attacker (e.g. in adversarial training), we can exploit the victim model and simplify the attacking problem while maintaining the optimality.
Therefore, we propose a novel algorithm, \textit{\oursfull (\ours)}, that has optimal formulation and is generally more efficient than \saname.
\ours decouples the whole attacking process into two simpler components: policy perturbation and state perturbation, solved by a ``director'' and an ``actor'' through collaboration. 
The director learns the optimal policy perturbing direction with RL methods, while the actor crafts adversarial states at every step such that the victim policy is perturbed towards the given direction. 
Compared to the black-box \saname, \ours is a white-box attack, but works for a broader range of environments more efficiently.
Note that \ours can be used to conduct black-box attack based on the transferability of adversarial attacks~\citep{huang2017adversarial}, although it is out of the scope of this paper.
Appendix~\ref{app:compare_sa} provides a comprehensive comparison between \ours and \saname in terms of complexity, optimality, assumptions and applicable scenarios.

Formally, for a given victim policy $\pi$, our proposed \ours algorithm solves a \textit{Policy Adversary MDP (PAMDP)} defined in Definition~\ref{def:adv_mdp}. An actor denoted by $g$ is embedded in the dynamics of the PAMDP, and a director searches for an optimal policy $\nu^*$ in the PAMDP.
\begin{definition} [\textbf{Policy Adversary MDP (PAMDP) $\widehat{\mdp}$}]
\label{def:adv_mdp}
  Given an MDP $\mdp= \langle \states, \actions, \dynamics, \rewards, \gamma \rangle$, a fixed \textbf{stochastic} victim policy $\pi$, an attack budget $\epsilon \geq 0$, we define a Policy Adversarial MDP $\widehat{\mdp}=\langle \states, \widehat{\actions}, \widehat{\dynamics}, \widehat{\rewards}, \gamma \rangle $,
  where the action space is {\small{$\widehat{\actions}\!:=\!\{d\in \!\!{[-1,1]}^{|\actions|}, \sum_{i=1}^{|\actions|} d_i = 0 \}$}},
  and $\forall s,s^\prime \in \states, \forall \widehat{a}\in \widehat{\actions}$,
  \setlength\abovedisplayskip{2pt}
  \setlength\belowdisplayskip{2pt}
  {\small{
  \begin{equation*}
  \widehat{\dynamics}(s^\prime|s,\widehat{a})=\sum\nolimits_{a\in \actions} \pi(a|g(\widehat{a},s)) \dynamics(s'|s,a), \quad \widehat{\rewards}(s,\widehat{a}) = - \sum\nolimits_{a\in\actions} \pi(a|g(\widehat{a},s))\rewards(s,a),
  \end{equation*}
  }}
  where $g$ is the actor function defined as 
  \setlength\abovedisplayskip{5pt}
  \setlength\belowdisplayskip{5pt}
  \begin{equation}
  \label{eq:actor}
  \tag{$G$}
    g(\widehat{a},s) = \mathrm{argmax}_{\tilde{s}\in B_{\epsilon}(s)}\norm{\pi(\tilde{s})-\pi(s)}
    \text{ subject to }\big(\pi(\tilde{s})-\pi(s)\big)^T \widehat{a}=\norm{\pi(\tilde{s})-\pi(s)}\norm{\widehat{a}}.
  \end{equation}
  If the victim policy is \textbf{deterministic}, i.e., $\pi_D:=\mathrm{argmax}_{a} \pi(a|s)$, (subscript ${}_D$ stands for deterministic), the action space of PAMDP is $\widehat{\actions}_D\!:=\!\actions$, and the actor function $g_D$ is
  \begin{equation}
  \label{eq:actor_det_main}
    \tag{$G_D$}
    g_D(\widehat{a},s) = \mathrm{argmax}_{\tilde{s}\in \mathcal{B}_{\epsilon}(s)} \big( \pi(\widehat{a}|\tilde{s}) - \mathrm{max}_{a\in\actions, a \neq \widehat{a}} \pi(a|\tilde{s}) \big).
  \end{equation}
  Detailed definition of the deterministic-victim version of PAMDP is in Appendix~\ref{app:algo_det}.
\vspace{-0.5em}
\end{definition}

A key to \ours is the director-actor collaboration mechanism. 
The input to director policy $\nu$ is the current state $s$ in the original environment, while its output $\widehat{a}$ is a signal to the actor denoting ``which direction to perturb the victim policy into''. 
$\widehat{\actions}$ is designed to contain all ``perturbing directions'' in the policy space. That is, $\forall \widehat{a}\in\widehat{A}$, there exists a constant $\theta_0 \geq 0$ such that $\forall \theta\leq \theta_0, \pi(\cdot|s) + \theta \frac{\widehat{a}}{\|\widehat{a}\|}$ belongs to the simplex $\Delta(A)$.
The actor $g$ takes in the state $s$ and director's direction $\widehat{a}$ and then computes a state perturbation within the attack budget.
Therefore, the director and the actor together induce a state adversary: $\adv(s):=g(\nu(s),s), \forall s\in\states$. 
The definition of PAMDP is slightly different for a stochastic victim policy and a deterministic victim policy, as described below.\\
\textit{For a stochastic victim $\pi$,} the director's action $\widehat{a}\in\widehat{\actions}$ is designed to be a unit vector lying in the policy simplex, denoting the perturbing direction in the policy space.
The actor, once receiving the perturbing direction $\widehat{a}$, will ``push'' the policy as far as possible by perturbing $s$ to $g(\widehat{a},s)\in\ball_\epsilon(s)$, as characterized by the optimization problem~\eqref{eq:actor}.
In this way, the policy perturbation resulted by the director and the actor is always in the outermost boundary of $\pbset$ w.r.t. the victim $\pi$, where the optimal policy perturbation can be found according to Theorem~\ref{thm:boundary}. \\
\textit{For a deterministic victim $\pi_D$,} the director's action $\widehat{a}\in\widehat{\actions}_D$ can be viewed as a target action in the original action space, and the actor conducts targeted attacks to let the victim execute $\widehat{a}$, by forcing the logit corresponding to the target action to be larger than the logits of other actions.

In both the stochastic-victim and deterministic-victim case, \ours has an optimal formulation as stated in Theorem~\ref{thm:optimality} (proven in Appendix~\ref{app:optimal}). 
\begin{theorem}[\textbf{Optimality of \ours}]
  \label{thm:optimality}
  For any MDP $\mdp$, any fixed victim policy $\pi$, and any attack budget $\epsilon\geq0$, an optimal policy $\nu^*$ in $\widehat{\mdp}$ induces an optimal state adversary against $\pi$ in $\mdp$.
  That is, the formulation of \ours is optimal, i.e., $\advsetours \subseteq \advset^*$.
\vspace{-0.5em}
\end{theorem}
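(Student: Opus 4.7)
The plan is to reduce the attack problem on $\widehat{\mdp}$ to the minimization of the victim's value over admissible policy perturbations, and then appeal to Theorem~\ref{thm:boundary} to show that the director--actor collaboration can realize an optimal policy adversary. I focus on the stochastic case; the deterministic case is analogous and simpler because $\widehat{\actions}_D=\actions$ directly indexes target actions.

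\textbf{Step 1: value equivalence.} For any director policy $\nu$ in $\widehat{\mdp}$, define the induced state adversary $h_\nu(s):=g(\nu(s),s)$. By construction of $g$ in Eq.~\eqref{eq:actor}, $h_\nu(s)\in\ball_\epsilon(s)$, so $h_\nu\in\advset$ and the induced perturbed policy is $\pi_{h_\nu}(\cdot|s)=\pi(\cdot|h_\nu(s))$. Unrolling the PAMDP definitions, $\widehat{R}(s,\nu(s))=-\sum_a\pi_{h_\nu}(a|s)R(s,a)$ and $\widehat{P}(\cdot|s,\nu(s))=\sum_a\pi_{h_\nu}(a|s)P(\cdot|s,a)$. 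A one-step Bellman expansion followed by induction on the horizon gives $\widehat{V}^\nu(s)=-V^{\pi_{h_\nu}}(s)$ for every $s$. Hence maximizing $\widehat{V}^\nu$ over director policies is equivalent to minimizing $V^{\pi_{h_\nu}}$ over the set of policy adversaries reachable by the director--actor pair.

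\textbf{Step 2: reachability of an optimum.} By Theorem~\ref{thm:boundary}, there exists an optimal policy adversary $\advpiopt\in\omboundary\pbset$. I construct a candidate director policy $\nu^\sharp$ by setting $\nu^\sharp(s)$ to be the unit vector along $\advpiopt(\cdot|s)-\pi(\cdot|s)$ (and to any admissible unit vector in $\widehat{\actions}$ at states where $\advpiopt(\cdot|s)=\pi(\cdot|s)$). The key claim is $\pi(\cdot|g(\nu^\sharp(s),s))=\advpiopt(\cdot|s)$ for all $s$. Since $\advpiopt\in\pbset$, some $\tilde s\in\ball_\epsilon(s)$ realizes $\pi(\cdot|\tilde s)=\advpiopt(\cdot|s)$, so $\tilde s$ is feasible for the optimization defining $g$. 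The outermost-boundary property of $\advpiopt$ ensures no admissible state perturbation along direction $\nu^\sharp(s)$ yields a strictly larger magnitude, so the argmax in Eq.~\eqref{eq:actor} attains the perturbation corresponding to $\advpiopt(\cdot|s)$. Combining with Step~1, $\widehat{V}^{\nu^\sharp}(s)=-V^{\advpiopt}(s)=-\min_{\advpi\in\pbset}V^{\advpi}(s)$, which is the minimum achievable victim value. Any PAMDP-optimal $\nu^*$ satisfies $\widehat{V}^{\nu^*}\ge\widehat{V}^{\nu^\sharp}$ pointwise, so $V^{\pi_{h_{\nu^*}}}\le V^{\advpiopt}$; by the definition of $\advpiopt$ this is an equality and $h_{\nu^*}\in\advset^*$, giving $\advsetours\subseteq\advset^*$.

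\textbf{Deterministic case.} With $\widehat{\actions}_D=\actions$, Step~1 still yields $\widehat{V}^\nu=-V^{\pi_{h_\nu}}$ because $\pi_D(\cdot|g_D(\widehat{a},s))$ puts mass on $\widehat{a}$ whenever the targeted attack succeeds; the reachability step is immediate since every deterministic victim response to some $\tilde s\in\ball_\epsilon(s)$ is an action in $\actions$, so the director can propose exactly the action the optimal adversary wants the victim to take.

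\textbf{Main obstacle.} The delicate step is Step~2: one must verify that the constrained argmax in Eq.~\eqref{eq:actor}, which only sweeps out policies \emph{on a ray} from $\pi(\cdot|s)$, actually reaches $\advpiopt(\cdot|s)$. Without the outermost-boundary guarantee from Theorem~\ref{thm:boundary}, the actor could generically produce policies interior to $\pbset$ and miss the optimum; the whole optimality claim hinges on the interplay between the actor's directional maximization and the geometric characterization of where the optimum lives.
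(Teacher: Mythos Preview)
Your proof is correct and shares the same essential ingredients as the paper's: value equivalence between $\widehat{\mdp}$ and the victim's (negated) value under the induced adversary, together with Theorem~\ref{thm:boundary} to guarantee that the actor's directional maximization in Eq.~\eqref{eq:actor} lands on an optimal policy adversary. The paper organizes the argument slightly differently: it introduces an auxiliary \emph{Policy Perturbation MDP} $\mdp\typeone$ whose actions are full action-distributions (with $-\infty$ reward for infeasible ones), first shows that an optimal policy in $\mdp\typeone$ is an optimal policy adversary, and then proves by contradiction that a PAMDP-optimal $\nu^*$ induces an optimal policy in $\mdp\typeone$. Your direct route---establishing $\widehat{V}^\nu=-V^{\pi_{h_\nu}}$ and then exhibiting a concrete director $\nu^\sharp$ that realizes $\advpiopt$---bypasses this auxiliary MDP but relies on exactly the same geometric fact (uniqueness of the outermost-boundary point along a fixed ray, which both you and the paper invoke). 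The paper's detour through $\mdp\typeone$ buys a clean separation between ``optimal policy adversary exists'' and ``PAMDP can reach it,'' while your version is shorter and more self-contained; neither approach avoids the delicate Step~2 you correctly flag as the crux.
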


\vspace{-0.5em}
\begin{algorithm}[!hbp]
\SetAlgoLined
\textbf{Input:} Initialization of director's policy $\nu$; victim policy $\pi$; budget $\epsilon$; start state $s_0$ 
\\
\For{$t=0,1,2,...$}{
	\textit{Director} samples a policy perturbing direction $\widehat{a}_t \sim \nu(\cdot|s_t)$\\
	\textit{Actor} perturbs $s_t$ to $\tilde{s}_t=g_D(\widehat{a}_t,s_t)$ if \textit{Victim} is deterministic, otherwise to $\tilde{s}_t=g(\widehat{a}_t,s_t)$ \\
	\textit{Victim} takes action $a_t\sim \pi(\cdot|\tilde{s}_t)$, proceeds to {{$s_{t+1}$}}, receives {{$r_t$}}\\
	\textit{Director} saves $(s_t,\widehat{a}_t,-r_t,s_{t+1})$ to its buffer \\	
	\textit{Director} updates its policy $\nu$ using any RL algorithm
}
\caption{\oursfull (\ours)}
\label{alg:main_abs}
\end{algorithm}

\textbf{Efficiency of \ours} \quad
As commonly known, the sample complexity and computational cost of learning an MDP usually grow with the cardinalities of its state space and action space. Both \saname and \ours have state space $\states$, the state space of the original MDP. But the action space of \saname is also $\states$, while our \ours has action space $\mathbb{R}^{|\actions|}$ for stochastic victim policies, or $\actions$ for deterministic victim policies.
In most DRL applications, the state space (e.g., images) is much larger than the action space, then \ours is generally more efficient than \saname as it learns a smaller MDP. 

The attacking procedure is illustrated in Algorithm~\ref{alg:main_abs}. At step $t$, the director observes a state $s_t$, and proposes a policy perturbation $\widehat{a}_t$, then the actor searches for a state perturbation to meet the policy perturbation. Afterwards, the victim acts with the perturbed state $\tilde{s}_t$, then the director updates its policy based on the opposite value of the victim's reward.
Note that the actor solves a constrained optimization problem, \eqref{eq:actor_det_main} or \eqref{eq:actor}.
Problem~\eqref{eq:actor_det_main} is similar to a targeted attack in supervised learning, while the stochastic version~\eqref{eq:actor} can be approximately solved with a Lagrangian relaxation.
In Appendix~\ref{app:algo_imple}, we provide our implementation details for solving the actor's optimization, which empirically achieves state-of-the-art attack performance as verified in Section~\ref{sec:exp}.

\textbf{Extending to Continuous Action Space} \quad
Our \ours can be extended to environments with continuous action spaces, where
the actor minimizes the distance between the policy action and the target action, i.e., $\mathrm{argmin}_{s^\prime\in B_{\epsilon}(s)} \|\pi(s^\prime) - \widehat{a}\|$.
More details and formal definitions of the variant of \ours in continuous action space are provided in Appendix~\ref{app:algo_cont}.
In Section~\ref{sec:exp}, we show experimental results in MuJoCo tasks, which have continuous action spaces.

\section{Related Work}
\label{sec:related}

\textbf{Heuristic-based Evasion Attacks on States} \quad
There are many works considering evasion attacks on the state observations in RL. 
\citet{huang2017adversarial} first propose to use FGSM~\citep{ian2015explaining} to craft adversarial states such that the probability that the agent selects the ``best'' action is minimized.
The same objective is also used in a recent work by~\citet{ezgi2020nesterov}, which adopts a Nesterov momentum-based optimization method to further improve the attack performance.
\citet{pattanaik2018robust} propose to lead the agent to select the ``worst'' action based on the victim's Q function and use gradient descent to craft state perturbations.
\citet{zhang2020robust} define the concept of a state-adversarial MDP (SAMDP) and propose two attack methods: Robust SARSA and Maximal Action Difference.
The above heuristic-based methods are shown to be effective in many environments, although might not find the optimal adversaries, as proven in Appendix~\ref{app:optimal_heuristic}.

\textbf{RL-based Evasion Attacks on States} \quad
As discussed in Section~\ref{sec:algo}, \saname~\citep{zhang2021robust} uses an end-to-end RL formulation to learn the optimal state adversary, which achieves state-of-the-art attacking performance in MuJoCo tasks. 
For a pixel state space, an end-to-end RL attacker may not work as shown by our experiment in Atari games (Section~\ref{sec:exp}). \citet{russo2019optimal} propose to use feature extraction to convert the pixel state space to a small state space and then learn an end-to-end RL attacker. But such feature extractions require expert knowledge and can be hard to obtain in many real-world applications.
In contrast, our \ours works for both pixel and vector state spaces and does not require expert knowledge.

\textbf{Other Works Related to Adversarial RL} \quad
There are many other papers studying adversarial RL from different perspectives, including limited-steps attacking~\citep{lin2017tactics,kos2017delving}, multi-agent scenarios~\citep{gleave2019adversarial}, limited access to data~\citep{Inkawhich2020SnoopingAO}, and etc.
Adversarial action attacks~\citep{xiao2019characterizing,tan2020robustifying,tessler2019action,lee2020query} are developed separately from state attacks; although we mainly consider state adversaries, our \ours can be extended to action attacks as formulated in Appendix~\ref{app:relation}.
Poisoning~\citep{behzadan2017vulnerability,huang2019deceptive,sun2020vulnerability,zhang2020adaptive,rakhsha2020policy} is another type of adversarial attacks that manipulates the training data, different from evasion attacks that deprave a well-trained policy.
Training a robust agent is the focus of many recent works~\citep{pinto2017robust,fischer2019online,lutjens2020certified,oikarinen2020robust,zhang2020robust,zhang2021robust}. 
Although our main goal is to find a strong attacker, we also show by experiments that our proposed attack method significantly improves the robustness of RL agents by adversarial training.


\section{Experiments}
\label{sec:exp}
\vspace{-0.5em}

In this section, we show that \ours produces stronger evasion attacks than state-of-the-art attack algorithms on various OpenAI Gym environments, including Atari and MuJoCo tasks. 
Also, our experiment justifies that \ours can evaluate and improve the robustness of RL agents.

\textbf{Baselines and Performance Metric} \quad
We compare our proposed attack algorithm with existing evasion attack methods, including 
\textit{\minbest}~\citep{huang2017adversarial} which minimizes the probability that the agent chooses the ``best'' action, 
\textit{\minbest+Momentum}~\citep{ezgi2020nesterov} which uses Nesterov momentum to improve the performance of \minbest,
\textit{\minq}~\citep{pattanaik2018robust} which leads the agent to select actions with the lowest action values based on the agent's Q network, 
\textit{Robust SARSA (RS)}~\citep{zhang2020robust} which performs the \minq attack with a learned stable Q network,
\textit{\maxdiff}~\citep{zhang2020robust} which maximizes the KL-divergence between the original victim policy and the perturbed policy,
as well as \textit{\saname}~\citep{zhang2021robust} which directly learns the state adversary with RL methods. 
We consider state attacks with $\ell_\infty$ norm as in most literature~\citep{zhang2020robust,zhang2021robust}.
Appendix~\ref{app:exp_implement} provides hyperparameter settings and implementation details. 




\textbf{\ours Finds the Strongest Adversaries in Atari Games} \quad
\label{sec:exp_atari}
We first evaluate the performance of \ours against well-trained DQN~\citep{mnih2015human} and A2C~\citep{mnih2016asynchronous} victim agents on Atari games with pixel state spaces.
The observed pixel values are normalized to the range of $[0,1]$.
\saname and \ours adversaries are learned using the ACKTR algorithm~\citep{wu2017scalable} with the same number of steps. (Appendix~\ref{app:exp_implement} shows hyperparameter settings.)
Table~\ref{tbl:atari} presents the experiment results, where \ours significantly outperforms all baselines against both DQN and A2C victims. 
In contrast, \saname does not converge to a good adversary in the tested Atari games with the same number of training steps as \ours, implying the importance of sample efficiency.
Surprisingly, using a relatively small attack budget $\epsilon$, \ours leads the agent to the \textbf{lowest possible reward} in many environments such as Pong, RoadRunner and Tutankham, whereas other attackers may require larger attack budget to achieve the same attack strength.
Therefore, we point out that \emph{vanilla RL agents are extremely vulnerable to carefully learned adversarial attacks.} 
Even if an RL agent works well under naive attacks, a carefully learned adversary can let an agent totally fail with the same attack budget, which stresses the importance of evaluating and improving the robustness of RL agents using the strongest adversaries.
Our further investigation in Appendix~\ref{app:rl_vulnerable} shows that RL models can be generally more vulnerable than supervised classifiers, due to the different loss and architecture designs.
In Appendix~\ref{app:exp_epsilon}, we show more experiments with various selections of the budget $\epsilon$, where one can see \emph{\ours reduces the average reward more than all baselines over varying $\epsilon$'s in various environments.} 


\begin{table}[!t]
\vspace{-0.5em}
\hspace{-1em}
\resizebox{1.03\textwidth}{!}{%
\setlength{\tabcolsep}{0.7pt}
\begin{tabular}{*{11}{c}}
\toprule
& \textbf{Environment} & \textbf{\specialcell{Natural \\Reward}} & $\boldsymbol{\epsilon}$ & \textbf{Random} & \textbf{\specialcell{\minbest } } & \textbf{\small{\specialcell{\minbest + \\ Momentum}} } & \textbf{\specialcell{\minq}} & \textbf{\specialcell{\maxdiff}} & \textbf{\specialcell{\saname}} & \textbf{\specialcell{\ours \\(ours)}} \\
 \hline
\multirow{9}{*}{\textbf{DQN}} &\textbf{Boxing} &  $ 96 \pm 4$ & $0.001$ & $95 \pm 4$ & $53 \pm 16$ & $52 \pm 18 $ & $88 \pm 7$ & $95 \pm 5$ & $94 \pm 6$ & $\boldsymbol{19 \pm 11}$ \\
\cmidrule(l){2-11}
&\textbf{Pong} &  $ 21 \pm 0$ & $0.0002$ & $21 \pm 0$ & $-10\pm 4$ & $ -14 \pm 2$ & $14 \pm 3$ & $15 \pm 4$ & $20 \pm 1$ & $\boldsymbol{-21 \pm 0}$ \\
\cmidrule(l){2-11}
&\textbf{RoadRunner} &  {\small{$46278 \pm 4447	$}} & $0.0005$  & {\small{$44725 \pm 6614$}} & {\small{$17012 \pm 6243$}} & {\small{$15823 \pm 5252$}} & {\small{$5765 \pm 12331$}} & {\small{$36074 \pm 6544$}} & {\small{$43615 \pm 7183 $}} & {\small{$\boldsymbol{0 \pm 0}$}} \\
\cmidrule(l){2-11}
&\textbf{Freeway} &  $34 \pm 1$ & $0.0003$ & $34 \pm 1$ & $12 \pm 1$ & $12 \pm 1 $ & $15 \pm 2$ & $22 \pm 3$ & $34 \pm 1$ & $\boldsymbol{9 \pm 1}$ \\
\cmidrule(l){2-11}
&\textbf{Seaquest} &  {\small{$10650 \pm 2716$}} & $0.0005$ & {\small{$8177 \pm 2962$}} & {\small{$3820 \pm 1947$}} & {\small{$2337 \pm 862$}} & {\small{$6468 \pm 2493$}} & {\small{$5718\pm 1884$}} & {\small{$8152 \pm 3113$}} & {\small{$\boldsymbol{2304 \pm 838}$}} \\
\cmidrule(l){2-11}
&\textbf{Alien} &  $1623 \pm 252$ & $0.00075$ & $1650 \pm 381$ & $819 \pm 486$ & $775 \pm 648$ & $938 \pm 446$ & $869 \pm 279$ & $1693 \pm 439$ & $\boldsymbol{256 \pm 210}$ \\
\cmidrule(l){2-11}
&\textbf{Tutankham} &  $227 \pm 29$ & $0.00075$ & $221 \pm 65$ & $30 \pm 13$ & $26 \pm 16$ & $88 \pm 74$ & $130 \pm 48$ & $202 \pm {65}$ & $\boldsymbol{0 \pm 0}$ \\
\midrule\midrule
\multirow{7}{*}{\textbf{A2C}} & \textbf{Breakout} &  $ 356 \pm 79$ & $0.0005$ & $355 \pm 79$ & $86 \pm 104$ & $74 \pm 95$ & N/A & $304 \pm 111$ & $353 \pm 79$ & $\boldsymbol{44 \pm 62}$ \\
\cmidrule(l){2-11}
& \textbf{Seaquest} & $1752 \pm 70$ & $0.005$ & $1752 \pm 73$ & $356 \pm 153$ & $179 \pm 83$ & N/A & $46 \pm 52$ & $1752 \pm 71$ & $\boldsymbol{4 \pm 13}$ \\
\cmidrule(l){2-11}
& \textbf{Pong} & $20 \pm 1$ & $0.0005$ & $20 \pm 1$ & $-4 \pm 8$ & $-11 \pm 7$ & N/A & $18 \pm 3$ & $20 \pm 1$ & $\boldsymbol{-13 \pm 6}$ \\
\cmidrule(l){2-11}
& \textbf{Alien} & $1615 \pm 601$ & $0.001$ & $1629 \pm 592$ & $1062 \pm 610$ & $940 \pm 565$ & N/A & $1482 \pm 633$ & $1661 \pm 625$ & $\boldsymbol{507 \pm 278}$ \\
\cmidrule(l){2-11}
& \textbf{Tutankham} & $258 \pm 53$ & $0.001$ & $260 \pm 54$ & $139 \pm 26$ & $134 \pm 28$ & N/A & $196 \pm 34$ & $260 \pm 54$ & $\boldsymbol{71 \pm 47}$ \\
\cmidrule(l){2-11}
& \textbf{RoadRunner} & {\small{$34367 \pm 6355$}} & $0.002$ & {\small{$35851 \pm 6675$}} & {\small{$9198 \pm 3814$}} & {\small{$5410 \pm 3058$}} & N/A & {\small{$31856 \pm 7125$}} & {\small{$36550 \pm 6848$}} & {\small{$\boldsymbol{2773 \pm 3468}$}} \\
\bottomrule
\end{tabular}
}
\vspace{-0.5em}
\caption{\small{Average episode rewards $\pm$ standard deviation of vanilla DQN and A2C agents under different evasion attack methods in Atari environments. Results are averaged over 1000 episodes. Note that RS works for continuous action spaces, thus is not included. \minq is not applicable to A2C which does not have a Q network. In each row, we bold the strongest (best) attack performance over all attacking methods. }}
\label{tbl:atari}
\vspace{-2em}
\end{table}


\begin{table}[!b]
\centering
\vspace{-2em}
\resizebox{\textwidth}{!}{%
\setlength{\tabcolsep}{5pt}
\begin{tabular}{*{9}{c}}
\toprule
\textbf{Environment} & \textbf{\specialcell{State \\Dimension}} & \textbf{\specialcell{Natural \\Reward}} & $\boldsymbol{\epsilon}$ & \textbf{Random} & \textbf{\specialcell{\maxdiff}} & \textbf{\specialcell{RS}} & \textbf{\specialcell{\saname}} & \textbf{\specialcell{\ours \\(ours)}} \\
\midrule
\textbf{Hopper} & 11 & $ 3167 \pm 542 $ & $ 0.075 $ & $ 2101 \pm 793 $ & $ 1410 \pm 655 $ & $ 794 \pm 238 $ & $ 636 \pm 9 $ & $\boldsymbol{160 \pm 136}$ \\
\midrule
\textbf{Walker} & 17 & $ 4472 \pm 635 $ & $ 0.05 $ & $ 3007 \pm 1200 $ & $ 2869 \pm 1271 $ & $ 1336 \pm 654 $ & $ 1086 \pm 516 $ & $\boldsymbol{804 \pm 130}$ \\
\midrule
\textbf{HalfCheetah} & 17 & $ 7117 \pm 98 $ & $ 0.15 $ & $ 5486 \pm 1378 $ & $ 1836 \pm 866 $ & $ 489 \pm 758 $ & $ \boldsymbol{-660 \pm 218} $ & $-356 \pm 307$ \\
\midrule
\textbf{Ant} & 111 & $ 5687 \pm 758 $ & $ 0.15 $ & $ 5261 \pm 1005 $ & $ 1759 \pm 828 $ & $ 268 \pm 227 $ & $ -872 \pm 436 $ & $\boldsymbol{-2580 \pm 872}$ \\
\bottomrule
\end{tabular}
}
\vspace{-0.5em}
\caption{\small{Average episode rewards $\pm$ standard deviation of vanilla PPO agent under different evasion attack methods in MuJoCo environments. Results are averaged over 50 episodes. Note that \minbest and \minq do not fit this setting, since \minbest works for discrete action spaces, and \minq requires the agent's Q network.}}
\label{tbl:mujoco}
\vspace{-0.5em}
\end{table}

\textbf{\ours Finds the Strongest Adversaries MuJoCo Tasks} \quad
\label{sec:exp_mujoco}
We further evaluate \ours on MuJoCo games, where both state spaces and action spaces are continuous. 
We use the same setting with \citet{zhang2021robust}, where both the victim and the adversary are trained with PPO~\citep{schulman2017proximal}. 
During test time, the victim executes a deterministic policy, and we use the deterministic version of \ours with a continuous action space, as discussed in Section~\ref{sec:algo} and Appendix~\ref{app:algo_cont}. 
We use the same attack budget $\epsilon$ as in \citet{zhang2021robust} for all MuJoCo environments.
Results in Table~\ref{tbl:mujoco} show that \ours reduces the reward much more than heuristic methods, and also outperforms \saname in most cases.
In Ant, our \ours achieves much stronger attacks than \saname, since \ours is more efficient than \saname when the state space is large.
Admittedly, \ours requires additional knowledge of the victim model, while \saname works in a black-box setting. Therefore, \saname is more applicable to black-box scenarios with a relatively small state space, whereas \ours is more applicable when the attacker has access to the victim (e.g. in adversarial training as shown in Table~\ref{tbl:atla}).
Appendix~\ref{app:exp_compare} provides more empirical comparison between \saname and \ours, which shows that \emph{\ours converges faster, takes less running time, and is less sensitive to hyperparameters than \saname} by a proper exploitation of the victim model. 


\begin{table}[!htbp]
    \centering
    \vspace{-0.5em}
    \resizebox{\textwidth}{!}{%
    \setlength{\tabcolsep}{3pt}
    \begin{tabular}{*{8}{c}c}
    \toprule
    \textbf{Environment} & \textbf{Model} & \textbf{\specialcell{Natural \\Reward}} & \textbf{Random} & \textbf{\specialcell{\maxdiff}} & \textbf{\specialcell{RS}} & \textbf{\specialcell{\saname}} & \textbf{\specialcell{\ours \\(ours)}} & {\small{\textbf{\specialcell{Average reward\\across attacks}}}} \\
    \midrule
    \multirow{4}{*}{\specialcell{\textbf{Hopper}\\(state-dim: 11)\\$\epsilon$: 0.075}} 
            & \sappo
            & $ 3705 \pm 2 $    
            & $ 2710 \pm 801 $ 
            & $ 2652 \pm 835 $ 
            & $ 1130 \pm 42  $ 
            & $ 1076 \pm 791  $ 
            & {$ \boldsymbol{856 \pm 21} $}
            & $ 1684.8 $    \\
    \cmidrule(l){2-9}
            & \atlappo
            & $ 3291 \pm 600 $  
            & $ 3165 \pm 576 $ 
            & $ 2814 \pm 725 $ 
            & $ 2244 \pm 618  $ 
            & $ {1772 \pm 802}  $ 
            & {$ \boldsymbol{1232 \pm 350} $}  
            & $ 2245.4 $ \\
    \cmidrule(l){2-9}                
        & \textbf{\oursppo(ours)}   
        & $ 3449 \pm 237 $  
        & $ {3325 \pm 239} $ 
        & $ {3145 \pm 546} $ 
        & $ {3002 \pm 129}  $ 
        & {$ \boldsymbol{1529 \pm 284}  $ }
        & $ 2521 \pm 325 $ 
        &  \cellcolor{lightgray}{$ 2704.4 $} \\
    \midrule\midrule
    \multirow{4}{*}{\specialcell{\textbf{Walker}\\(state-dim: 17)\\$\epsilon$: 0.05}}  
        & \sappo                  
        & $ 4487 \pm 61 $   
        & $ 4867 \pm 39  $ 
        & $ 3668 \pm 1789 $ 
        & $ 3808 \pm 138 $ 
        & $ 2908 \pm 1136 $ 
        & $ \boldsymbol{1042 \pm 153} $
        & $ 3258.6 $  \\
    \cmidrule(l){2-9}                 
        & \atlappo              
        & $ 3842 \pm 475 $  
        & $ 3927 \pm 368 $ 
        & $ 3836 \pm 492 $ 
        & $ 3239 \pm 894  $ 
        & $ {3663 \pm 707}  $ 
        & $ \boldsymbol{1224 \pm 770} $
        & $ 3177.8 $  \\
    \cmidrule(l){2-9}                 
        & \textbf{\oursppo(ours)}   
        & $ 4178 \pm 529 $  
        & $ {4129 \pm 78}  $ 
        & $ {4024 \pm 572} $ 
        & $ {3966 \pm 307}  $ 
        & $ 3450 \pm 478 $ 
        & $ \boldsymbol{2248 \pm 131} $ 
        & \cellcolor{lightgray}{$ 3563.4 $} \\
    \midrule\midrule
    \multirow{4}{*}{\specialcell{\textbf{Halfcheetah}\\(state-dim: 17)\\$\epsilon$: 0.15}}  
        & \sappo               
        & $ 3632 \pm 20 $   
        & $ 3619 \pm 18  $ 
        & $ 3624 \pm 23  $ 
        & $ 3283 \pm 20   $ 
        & $ 3028 \pm 23   $ 
        & $ \boldsymbol{2512 \pm 16} $
        & $ 3213.2$   \\
    \cmidrule(l){2-9}                 
        & \atlappo        
        & $ 6157 \pm 852 $  
        & $ 6164 \pm 603 $ 
        & $ 5790 \pm 174 $ 
        & $ 4806 \pm 603  $
        & $ {5058 \pm 718}  $ 
        & $ \boldsymbol{2576 \pm 1548} $ 
        & $ 4878.8 $ \\
    \cmidrule(l){2-9}                
        & \textbf{\oursppo(ours)} 
        & $ 6289 \pm 342 $  
        & $ {6215 \pm 346} $ 
        & $ {5961 \pm 53}  $ 
        & $ {5226 \pm 114}  $ 
        & $ 4872 \pm 79   $ 
        & $ \boldsymbol{3840 \pm 673} $ 
        & \cellcolor{lightgray}{$ 5222.8 $}  \\
    \midrule\midrule
    \multirow{4}{*}{\specialcell{\textbf{Ant}\\(state-dim: 111)\\$\epsilon$: 0.15}}    
        & \sappo                 
        & $ 4292 \pm 384 $   
        & $ 4986 \pm 452 $ 
        & $ 4662 \pm 522 $ 
        & $ 3412 \pm 1755 $ 
        & $ 2511 \pm 1117 $ 
        & $ \boldsymbol{-1296 \pm 923} $ 
        & $ 2855.0 $ \\
    \cmidrule(l){2-9}
        & \atlappo             
        & $ 5359 \pm 153 $  
        & $ 5366 \pm 104 $ 
        & $ 5240 \pm 170 $ 
        & $ 4136 \pm 149  $ 
        & $ {3765 \pm 101}  $ 
        & $ \boldsymbol{220 \pm 338} $   
        & $ 3745.4 $ \\
    \cmidrule(l){2-9}
        & \textbf{\oursppo(ours)}  
        & $ 5469 \pm 106 $  
        & $ {5496 \pm 158} $ 
        & $ {5328 \pm 196} $ 
        & $ {4124 \pm 291}  $ 
        & $ 3694 \pm 188  $ 
        & $ \boldsymbol{2986 \pm 864} $  
        & \cellcolor{lightgray}{$ 4325.6 $ }\\
    \bottomrule
    \end{tabular}
    }
    \vspace{-0.5em}
    \caption{\small{Average episode rewards $\pm$ standard deviation of robustly trained PPO agents under different attack methods. Results are averaged over 50 episodes. 
    In each row corresponding to a robust agent, we bold the strongest attack. 
    The \colorbox{lightgray}{gray} cells are the most robust agents with the highest average rewards across attacks. 
    Our \ours achieves the strongest attack against robust models, and our \oursppo achieves the most robust performance under multiple attacks.
    The attack budget $\epsilon$'s are the same as in \citet{zhang2021robust}.
    }}
    \label{tbl:atla}
    \vspace{-0.5em}
\end{table}



\textbf{Training and Evaluating Robust Agents}\quad
\label{sec:exp_robust}
A natural application of \ours is to evaluate the robustness of a known model, or to improve the robustness of an agent via adversarial training, where the attacker has white-box access to the victim.
Inspired by ATLA~\citep{zhang2021robust} which alternately trains an agent and an \saname attacker, 
we propose \ouratla, which alternately trains an agent and a \ours attacker.
In Table~\ref{tbl:atla}, we evaluate the performance of \ouratla for a PPO agent (namely \oursppo) in MuJoCo tasks, compared with state-of-the-art robust training methods, \sappo~\citep{zhang2020robust} and \atlappo~\citep{zhang2021robust}
\footnote{We use \textbf{\small{ATLA-PPO(LSTM)+SA Reg}}, the most robust method reported by~\citet{zhang2021robust}.}. 
From the table, we make the following observations.
\textbf{(1)} \emph{Our \ours attacker can significantly reduce the reward of previous ``robust'' agents.} 
Take the Ant environment as an example, although \sappo and \atlappo agents gain 2k+ and 3k+ rewards respectively under \saname, the previously strongest attack, our \ours still reduces their rewards to about -1.3k and 200+ with the same attack budget. 
Therefore, we emphasize the importance of understanding the worst-case performance of RL agents, even robustly-trained agents.
\textbf{(2)} \emph{Our \oursppo robust agents gain noticeably higher average rewards across attacks than other robust agents}, especially under the strongest \ours attack.
Under the \saname attack, \oursppo achieves comparable performance with \atlappo, although \atlappo agents are trained to be robust against \saname. 
Due to the efficiency of \ours, \oursppo requires fewer training steps than \atlappo, as justified in Appendix~\ref{app:atla_mujoco}. 
The results of attacking and training robust models in Atari games are in Appendix~\ref{app:robust_atari} and~\ref{app:atla_atari}, where \ouratla improves the robustness of Atari agents against strong attacks with $\epsilon$ as large as $3/255$.





\vspace{-0.5em}
\section{Conclusion}
\vspace{-0.5em}
\label{sec:conclusion}
In this paper, we propose an attack algorithm called \ours for RL problems, which achieves optimal attacks in theory and significantly outperforms prior attack methods in experiments. 
\ours can be used to evaluate and improve the robustness of RL agents before deployment.
A potential future direction is to use our formulation for robustifying agents under both state and action attacks. 

\section*{Acknowledgments}
This work is supported by National Science Foundation IIS-1850220 CRII Award 030742-00001 and DOD-DARPA-Defense Advanced Research Projects Agency Guaranteeing AI Robustness against Deception (GARD), and Adobe, Capital One and JP Morgan faculty fellowships.


\section*{Ethics Statement}

Despite the rapid advancement of interactive AI and ML systems using RL agents, the learning agent could fail catastrophically in the presence of adversarial attacks, exposing a serious vulnerability in current RL systems such as autonomous driving systems, market-making systems, and security monitoring systems. 
Therefore, there is an urgent need to understand the vulnerability of an RL model, otherwise, it may be risky to deploy a trained agent in real-life applications, where the observations of a sensor usually contain unavoidable noise.

Although the study of a strong attack method may be maliciously exploited to attack some RL systems, it is more important for the owners and users of RL systems to get aware of the vulnerability of their RL agents under the strongest possible adversary. As the old saying goes, ``if you know yourself and your enemy, you'll never lose a battle''.
In this work, we propose an optimal and efficient algorithm for evasion attacks in Deep RL (DRL), which can significantly influence the performance of a well-trained DRL agent, by adding small perturbations to the state observations of the agent. 
Our proposed method can automatically measure the vulnerability of an RL agent, and discover the ``flaw'' in a model that might be maliciously attacked. 
We also show in experiments that our attack method can be applied to improve the robustness of an RL agent via robust training. 
Since our proposed attack method achieves state-of-the-art performance, the RL agent trained under our proposed attacker could be able to ``defend'' against any other adversarial attacks with the same constraints.
Therefore, our work has the potential to help combat the threat to high-stakes systems. 

A limitation of \ours is that it requires the ``attacker'' to know the victim's policy, i.e., \ours is a white-box attack. 
If the attacker does not have full access to the victim, \ours can still be used based on the transferability of adversarial attacks~\citep{huang2017adversarial}, although the optimality guarantee does not hold in this case.
However, this limitation only restricts the ability of the malicious attackers.
In contrast, \ours should be used when one wants to evaluate the worst-case performance of one's own RL agent, or to improve the robustness of an agent under any attacks, since \ours produces strong attacks efficiently. In these cases, \ours does have white-box access to the agent.
Therefore, \textbf{\emph{\ours is more beneficial to defenders than attackers.}}

\section*{Reproducibility Statement}

\textbf{For theoretical results}, we provide all detailed technical proofs and lemmas in Appendix. 
In Appendix~\ref{app:relation}, we analyze the equivalence between evasion attacks and policy perturbations.
In Appendix~\ref{app:topology}, we theoretically prove some topological properties of the proposed \pbsetshort, and derive Theorem~\ref{thm:boundary} that the outermost boundary of $\pbset$ always contains an optimal policy perturbation.
In Appendix~\ref{app:char_optimal}, we systematically characterize the optimality of many existing attack methods. 
We theoretically show (1) the existence of an optimal adversary, (2) the optimality of our proposed \ours, and (3) the optimality of many heuristic attacks, following our Definition~\ref{def:optimal} in Section~\ref{sec:algo}.\\
\textbf{For experimental results}, the detailed algorithm description in various types of environments is provided in Appendix~\ref{app:algo}. In Appendix~\ref{app:exp}, we illustrate the implementation details, environment settings, hyperparameter settings of our experiments. Additional experimental results show the performance of our algorithm from multiple aspects, including hyperparameter sensitivity, learning efficiency, etc. 
In addition, in Appendix~\ref{app:discuss}, we provide some detailed discussion on the algorithm design, as well as a comprehensive comparison between our method and prior works.\\
\textbf{The source code and running instructions} for both Atari and MuJoCo experiments are in our supplementary materials. We also provide trained victim and attacker models so that one can directly test their performance using a test script we provide.


\bibliographystyle{iclr2022_conference}

\newpage
\appendix
{\centering{\Large Appendix: \mytitle}}


\section{Relationship between Evasion Attacks and Policy Perturbations.} 
\label{app:relation}

As mentioned in Section~\ref{sec:setup}, all evasion attacks can be regarded as perturbations in the policy space. To be more specific, we consider the following 3 cases, where we assume the victim uses policy $\pi$. 

\textit{Case 1 (attack on states)}: define the state adversary as function $\sadv$ such that $\forall s\in\states$
$$\sadv(s) = \tilde{s} \in \ball_{\epsilon}(s) := \{ s^\prime \in \states: \|s^\prime-s\| \leq \epsilon \}.$$
(For simplicity, we consider the attacks within a $\epsilon$-radius norm ball.)\\
In this case, for all $s\in\states$, the victim samples action from $\advpi(\cdot|s) = \pi(\cdot|\sadv(s)) = \pi(\tilde{s})$, which is equivalent to the victim executing a perturbed policy $\advpi \in \Pi$.

\textit{Case 2 (attack on actions for a deterministic $\pi$)}: define the action adversary as function $\aadv: \states \times \actions \to \actions$, and $\forall s\in\states, a\in\actions$
$$\aadv(a|s) = \tilde{a} \in \ball_{\epsilon}(a) := \{ a^\prime \in \actions: \|a^\prime-a\| \leq \epsilon \}.$$
In this case, there exists a policy $\advpia$ such that $\advpia(s) = \aadv(a|s) = \tilde{a}$, which is equivalent to the victim executing policy $\advpia \in \Pi$.

\textit{Case 3 (attack on actions for a stochastic $\pi$)}: define the action adversary as function $\aadv: \states \times \actions \to \actions$, and $\forall s\in\states, a\in\actions$
$$\aadv(a|s) = \tilde{a} \text{ such that } \{ \| \pi(\cdot|s) - Pr(\cdot|s) \| \leq \epsilon \},$$
where $Pr(\tilde{a}|s)$ denotes the probability that the action is perturbed into $\tilde{a}$.\\
In this case, there exists a policy $\advpia$ such that $\advpia(s) = Pr(\cdot|s)$, which is equivalent to the victim executing policy $\advpia \in \Pi$.



Most existing evasion RL works~\citep{huang2017adversarial,pattanaik2018robust,zhang2020robust,zhang2021robust} focus on state attacks, while there are also some works~\citep{tessler2019action,tan2020robustifying} studying action attacks. For example, Tessler et al.~\citep{tessler2019action} consider Case 2 and Case 3 above and train an agent that is robust to action perturbations. 

These prior works study either state attacks or action attacks, considering them in two different scenarios. However, the ultimate goal of robust RL is to train an RL agent that is robust to any threat models.
Otherwise, an agent that is robust against state attacks may still be ruined by an action attacker. 
We take a step further to this ultimate goal by proposing a framework, policy attack, that unifies observation attacks and action attacks.

Although the focus of this paper is on state attacks, we would like to point out that our proposed method can also deal with action attacks (the director proposes a policy perturbation direction, and an actor perturbs the action accordingly). It is also an exciting direction to explore hybrid attacks (multiple actors conducting states perturbations and action perturbations altogether, directed by a single director.) Our policy perturbation framework can also be easily incorporated in robust training procedures, as an agent that is robust to policy perturbations is simultaneously robust to both state attacks and action attacks.


\section{Topological Properties of the \pbsetlarge}
\label{app:topology}

As discussed in Section~\ref{sec:optimal}, finding the optimal state adversary in the admissible adversary set $\advset$ can be converted to a problem of finding the optimal policy adversary in the \pbsetshort $\pbset$.
In this section, we characterize the topological properties of $\pbset$, and identify how the value function changes as the policy changes within $\pbset$. 

In Section~\ref{app:topology_compact}, we show that under the settings we consider, $\pbset$ is a connected and compact subset of $\Pi$. 
Then, Section~\ref{app:notations}, we define some additional concepts and re-formulate the notations.
In Section~\ref{app:boundary}, we prove Theorem~\ref{thm:boundary} in Section~\ref{sec:optimal} that the outermost boundary of $\pbset$ always contains an optimal policy perturbation.
In Section~\ref{app:polytope}, we prove that the value functions of policies in $\pbset$ (or more generally, any connected and compact subset of $\Pi$) form a polytope.
Section~\ref{app:example} shows an example of the polytope result with a 2-state MDP, and Section~\ref{app:example_boundary} shows examples of the \ombname defined in Theorem~\ref{thm:boundary}.

\subsection{The Shape of \pbsetshort $\pbset$}
\label{app:topology_compact}

It is important to note that $\pbset$ is generally connected and compact as stated in the following lemma.

\begin{lemma}[\textbf{$\pbset$ is connected and compact}]
\label{lem:compact}
	Given an MDP $\mdp$, a policy $\pi$ that is a continuous mapping, and admissible adversary set $\advset:=\{\adv: \adv(s) \in \ball_\epsilon(s), \forall s\in\states\}$ (where $\epsilon>0$ is a constant), the \pbsetname $\pbset$ is a connected and compact subset of $\Pi$. 
\end{lemma}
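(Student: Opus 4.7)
The natural approach is to exhibit $\pbset$ as the continuous image of a connected compact set and then invoke the standard topological facts that continuity preserves both properties. Concretely, I would define the map
\[
F : \advset \longrightarrow \Pi, \qquad F(\adv)(\cdot \mid s) := \pi(\cdot \mid \adv(s)) \text{ for all } s \in \states,
\]
and observe that by Definition~\ref{def:pbset} one has $\pbset = F(\advset)$. So the proof reduces to three sub-claims: (i) $\advset$ is compact, (ii) $\advset$ is connected, and (iii) $F$ is continuous.

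For (i) and (ii), I would identify $\advset$ with the product space $\prod_{s \in \states} \ball_\epsilon(s)$ under the obvious coordinate bijection $\adv \mapsto (\adv(s))_{s \in \states}$. Each factor $\ball_\epsilon(s)$ is a closed $\ell_p$-ball in the ambient Euclidean space, hence closed, bounded, and convex; thus each factor is compact (Heine--Borel) and connected (convex sets are path-connected). Because $\states$ is finite in the setting of our theoretical analysis, a finite product of compact connected sets is compact and connected, so $\advset$ inherits both properties in the product topology, which coincides with the Euclidean topology on $\mathbb{R}^{d|\states|}$ for state dimension $d$.

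For (iii), continuity of $F$ is componentwise: the value of $F(\adv)$ at state $s$ is $\pi(\cdot \mid \adv(s))$, which depends only on the single coordinate $\adv(s)$ and is continuous in that coordinate by the hypothesis that $\pi$ is a continuous mapping into the simplex $\Delta(\actions)$. A map into a finite product is continuous iff each coordinate projection of it is continuous, so $F$ is continuous from $\advset$ to $\Pi = \prod_{s \in \states}\Delta(\actions)$. Applying the two standard facts (continuous images of compact sets are compact; continuous images of connected sets are connected) to $F(\advset) = \pbset$ completes the argument.

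The main obstacle I anticipate is bookkeeping about the ambient topologies rather than any genuinely deep step: one must be careful that the ``$\states$'' appearing inside $\ball_\epsilon(s)$ is the ambient Euclidean space on which $\pi$ is defined as a continuous function, not merely an abstract index set, and that the topology one puts on $\Pi$ (the product of action simplices) is the one for which neural-network-style $\pi$ is continuous. Once these ambient topologies are fixed, the product structure makes the compactness/connectedness and the continuity of $F$ essentially immediate, and the lemma follows.
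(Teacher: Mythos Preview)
Your proposal is correct and takes essentially the same approach as the paper: both arguments rest on the facts that each $\ball_\epsilon(s)$ is compact and connected, that $\pi$ is continuous, and that finite products and continuous images preserve compactness and connectedness. The only cosmetic difference is ordering---the paper first pushes each ball forward by $\pi$ and then takes the product (writing $\pbset = \prod_{s}\pi(\ball_\epsilon(s))$ directly), whereas you first form the product $\advset = \prod_s \ball_\epsilon(s)$ and then push it forward by the product map $F$; these are two ways of saying the same thing.
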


\begin{proof}[Proof of Lemma~\ref{lem:compact}]
	For an arbitrary state $s\in\states$, an admissible adversary $\adv \in \advset$ perturbs it within an $\ell_p$ norm ball $\ball_\epsilon(s)$, which is connected and compact.
	Since $\pi$ is a continuous mapping, we know $\pi(s)$ is compact and connected.

	Therefore, $\pbset$ as a Cartesian product of a finite number of compact and connected sets, is compact and connected. 
\end{proof}

\subsection{Additional Notations and Definitions for Proofs}
\label{app:notations}


We first formally define some concepts and notations.

For a stationary and stochastic policy $\pi: \states \to \prob(\actions)$, we can define the state-to-state transition function as 
$$P^{\pi}(s^\prime|s) := \sum_{a\in\actions} \pi(a|s) P(s^\prime|s,a), \forall s,s^\prime\in\states, $$
and the state reward function as
$$R^{\pi}(s) := \sum_{a\in\actions} \pi(a|s) R(s,a), \forall s \in\states.$$
Then the value of $\pi$, denoted as $V^\pi$, can be computed via the Bellman equation
$$V^\pi = R^\pi + \gamma P^\pi V^\pi = (I-\gamma P^\pi)^{-1} R^\pi. $$

We further use $\Pi_{s_i}$ to denote the projection of $\Pi$ into the simplex of the $i$-th state, i.e., the space of action distributions at state $s_i$.

Let $\pvfunc: \Pi \to \mathbb{R}^{|\states|}$ be a mapping that maps policies to their corresponding value functions. 
Let $\mathcal{V}=\pvfunc(\Pi)$ be the space of all value functions.

Dadashi et al. \citep{dadashi2019the} show that the image of $\pvfunc$ applied to the space of policies, i.e., $\pvfunc(\Pi)$, form a (possibly non-convex) polytope as defined below.

\begin{definition}[\textbf{(Possibly non-convex) polytope}]
$A$ is called a \textbf{convex polytope} iff there are $k\in\mathbb{N}$ points $x_1, x_2, \cdots, x_k \in \mathbb{R}^n$ such that $A=Conv(x_1,\cdots,x_k)$.
Furthermore, a \textbf{(possibly non-convex) polytope} is defined as a finite union of convex polytopes.
\end{definition}

And a more general concept is (possibly non-convex) polyhedron, which might not be bounded.
\begin{definition}[\textbf{(Possibly non-convex) polyhedron}]
$A$ is called a \textbf{convex polyhedron} iff it is the intersection of $k\in\mathbb{N}$ half-spaces $\hat{B}_1, \hat{B}_2, \cdots, \hat{B}_k$, i.e., $A=\cap_{i=1}^k \hat{B}_i$.
Furthermore, a \textbf{(possibly non-convex) polyhedron} is defined as a finite union of convex polyhedra.
\end{definition}

In addition, let $Y^\pi_{s_1,\cdots,s_k}$ be the set of policies that agree with $\pi$ on states $s_1,\cdots,s_k$.
Dadashi et al. \citep{dadashi2019the} also prove that the values of policies that agree on all but one state $s$, i.e., $\pvfunc(\ypis)$,  form a line segment, which can be bracketed by two policies that are deterministic on $s$. Our Lemma~\ref{lem:line} extends this line segment result to our setting where policies are restricted in a subset of policies.


\subsection{Proof of Theorem~\ref{thm:boundary}: Boundary Contains Optimal Policy Perturbations}
\label{app:boundary}



Lemma 4 in \cite{dadashi2019the} shows that policies agreeing on all but one state have certain monotone relations. We restate this result in Lemma~\ref{lem:monotone} below.

\begin{lemma}[Monotone Policy Interpolation]
\label{lem:monotone}
	For any $\pi_0, \pi_1 \in \ypis$ that agree with $\pi$ on all states except for $s\in \states$, define a function $l: [0,1] \to \mathcal{V}$ as
	$$
		l(\alpha) = \pvfunc(\alpha \pi_1 + (1-\alpha) \pi_0).
	$$
	Then we have \\
	(1) $l(0) \succcurlyeq l(1)$ or $l(1) \succcurlyeq l(0)$ ($\succcurlyeq$ stands for element-wise greater than or equal to); \\
	(2) If $l(0)=l(1)$, then $l(\alpha)=l(0), \forall \alpha \in [0,1]$;\\
	(3) If $l(0)\neq l(1)$, then there is a strictly monotonic rational function $\rho: [0,1] \to \mathbb{R}$, such that $l(\alpha) = \rho(\alpha)l(1) + (1-\rho(\alpha))l(0)$.
\end{lemma}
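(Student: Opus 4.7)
The plan is to exploit the fact that $\pi_0$ and $\pi_1$ agree with $\pi$ on $\states \setminus \{s\}$, so the intermediate policy $\pi_\alpha := \alpha\pi_1 + (1-\alpha)\pi_0$ induces transition and reward functions that differ from those of $\pi_0$ only in the single row/entry corresponding to $s$. Concretely, $P^{\pi_\alpha} = P^{\pi_0} + \alpha\, e_s d^{\top}$ with $d = P^{\pi_1}(\cdot|s) - P^{\pi_0}(\cdot|s)$, and $R^{\pi_\alpha} = R^{\pi_0} + \alpha\,\Delta r\, e_s$ with $\Delta r = R^{\pi_1}(s) - R^{\pi_0}(s)$. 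This rank-one structure is what drives all three conclusions.

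The main step is to write $V^{\pi_\alpha}$ in closed form. Partition $\states = \{s\} \cup S_-$, let $v(\alpha) := V^{\pi_\alpha}(s)$ and $V_-(\alpha) := (V^{\pi_\alpha}(s'))_{s' \in S_-}$. The Bellman equations on rows indexed by $S_-$ do not depend on $\alpha$, so
\begin{equation*}
V_-(\alpha) = (I - \gamma P^{\pi_0}_{--})^{-1}\bigl(R^{\pi_0}_- + \gamma P^{\pi_0}_{-s}\, v(\alpha)\bigr) = a + b\, v(\alpha),
\end{equation*}
where $a$ is a constant vector and $b = \gamma(I-\gamma P^{\pi_0}_{--})^{-1} P^{\pi_0}_{-s}$ is elementwise nonnegative by the Neumann-series expansion. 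Substituting this into the Bellman equation at $s$ (whose coefficients $R^{\pi_\alpha}(s)$, $P^{\pi_\alpha}_{s-}$, $P^{\pi_\alpha}_{ss}$ are all affine in $\alpha$) and solving for $v(\alpha)$ yields a Möbius-type rational function
\begin{equation*}
v(\alpha) \;=\; \frac{R^{\pi_\alpha}(s) + \gamma P^{\pi_\alpha}_{s-}\, a}{1 - \gamma P^{\pi_\alpha}_{s-}\, b - \gamma P^{\pi_\alpha}_{ss}},
\end{equation*}
whose denominator is nonzero on $[0,1]$ because $(I-\gamma P^{\pi_\alpha})$ is invertible for $\gamma<1$ (the denominator is, up to sign, the Schur complement at $s$).

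The three claims now follow cleanly. For (1), from $V_-(\alpha) - V_-(0) = b\,(v(\alpha)-v(0))$ with $b \succcurlyeq 0$, and $v(\alpha)-v(0) = V^{\pi_\alpha}(s) - V^{\pi_0}(s)$, the sign of every component of $V^{\pi_1} - V^{\pi_0}$ is governed by the single scalar $v(1) - v(0)$; so $V^{\pi_1} \succcurlyeq V^{\pi_0}$ when $v(1)\ge v(0)$ and vice versa. For (2), if $l(0)=l(1)$ then $v(0)=v(1)$; a Möbius function that coincides at two distinct points of its domain is constant, so $v(\alpha)\equiv v(0)$ and hence $V_-(\alpha)\equiv V_-(0)$. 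For (3), setting $\rho(\alpha) := (v(\alpha)-v(0))/(v(1)-v(0))$ (well defined since $l(0)\ne l(1)$ forces $v(1)\ne v(0)$), one checks directly that $V^{\pi_\alpha} = \rho(\alpha)V^{\pi_1} + (1-\rho(\alpha))V^{\pi_0}$ both at $s$ and on $S_-$, and $\rho$ is strictly monotonic on $[0,1]$ as a non-degenerate Möbius transformation with two distinct values at the endpoints.

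The main obstacle I expect is the careful bookkeeping showing that the denominator in the closed form for $v(\alpha)$ never vanishes on $[0,1]$ and therefore $\rho$ is genuinely a well-defined, strictly monotonic rational function rather than merely piecewise so; this requires invoking invertibility of $I - \gamma P^{\pi_\alpha}$ for all $\alpha$ and translating it into the Schur-complement statement. Everything else is algebraic manipulation of the Bellman equation together with non-negativity of the discounted occupancy $b$.
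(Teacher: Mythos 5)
Your proof is correct and complete, but one thing should be said up front: the paper does not prove Lemma~\ref{lem:monotone} at all — it explicitly restates it as Lemma~4 of \citet{dadashi2019the} and defers to that citation. So the real comparison is with Dadashi et al.'s argument. Their route goes through the value-difference identity $V^{\pi_1}-V^{\pi_0}=\gamma(I-\gamma P^{\pi_1})^{-1}(P^{\pi_1}-P^{\pi_0})V^{\pi_0}+(I-\gamma P^{\pi_1})^{-1}(R^{\pi_1}-R^{\pi_0})$: since the two policies differ only at $s$, the right-hand side is supported on coordinate $s$, so the difference is a scalar multiple of the $s$-th column of $(I-\gamma P^{\pi_1})^{-1}$, which is entrywise nonnegative by the Neumann series (this gives (1)); the rational reparametrization in (3) then comes from a Sherman--Morrison treatment of the rank-one update $P^{\pi_\alpha}=P^{\pi_0}+\alpha\, e_s d^{\top}$. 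Your argument reaches the same conclusions with the complementary linear-algebra tool — a bordered-system/Schur-complement decomposition instead of Sherman--Morrison. The two are essentially interchangeable for this rank-one structure, but yours has two genuine advantages: it makes completely explicit that a single scalar $v(\alpha)$ governs every coordinate of $l(\alpha)$ through the nonnegative vector $b$, which is exactly what makes (1), (2) and (3) fall out of one computation, and it isolates the non-vanishing of the M\"obius denominator as a clean determinant/Schur-complement fact rather than burying it inside the Sherman--Morrison formula. The delicate steps all check out: $b\succcurlyeq 0$ via the Neumann series; two-point rigidity of M\"obius maps (equal values at two distinct points force the determinant $AD-BC$ to vanish, hence constancy) for (2); well-definedness of $\rho$ in (3) because $v(1)=v(0)$ would force $l(1)=l(0)$; and strict monotonicity of $\rho$ because its determinant is nonzero and its pole lies outside $[0,1]$, so its derivative keeps a constant nonzero sign on the whole interval.
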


More intuitively, Lemma~\ref{lem:monotone} suggests that the value of $\pi^\alpha := \alpha \pi_1 + (1-\alpha) \pi_0$ changes (strictly) monotonically with $\alpha$, unless the values of $\pi_0,\pi_1$ and $\pi_\alpha$ are all equal. With this result, we can proceed to prove Theorem~\ref{thm:boundary}.

\begin{proof}[Proof of Theorem~\ref{thm:boundary}]
	We will prove the theorem by contradiction.

	Suppose there is a policy $\pia \in \pbset$ such that $\pia \notin \omboundary \pbset$ and $\pvfunc(\pia)=V^{\pia} < V^{\tilde{\pi}}, \forall \tilde{\pi} \in \pbset$, i.e., there is no optimal policy adversary on the outermost boundary of $\pbset$.


	Then according to the definition of $\omboundary \pbset$, there exists at least one state $s \in \states$ such that 
	we can find another policy $\pi^\prime \in \pbset$ agreeing with $\pia$ on all states except for $s$, where $\pi^\prime(s)$ satisfies
	$$
		\pia(\cdot|s) = \alpha \pi(\cdot|s) + (1-\alpha) \pi^\prime(\cdot|s)
	$$
	for some scalar $\alpha \in (0,1)$. 

	Then by Lemma~\ref{lem:monotone}, either of the following happens:

	(1) $\pvfunc(\pi) \succ \pvfunc(\pia) \succ \pvfunc(\pi^\prime)$. \\
	(2) $\pvfunc(\pi) = \pvfunc(\pia) = \pvfunc(\pi^\prime)$; \\

	Note that $\pvfunc(\pia) \succ \pvfunc(\pi)$ is impossible because we have assumed $\pia$ has the lowest value over all policies in $\pbset$ including $\pi$.

	If (1) is true, then $\pi^\prime$ is a better policy adversary than $\pia$ in $\pbset$, which contradicts with the assumption.

	If (2) is true, then $\pi^\prime$ is another optimal policy adversary. By recursively applying the above process to $\pi^\prime$, we can finally find an optimal policy adversary on the outermost boundary of $\pbset$, which also contradicts with our assumption.

	In summary, there is always an optimal policy adversary lying on the outermost boundary of $\pbset$.

\end{proof}

\subsection{Proof of Theorem~\ref{thm:polytope}: Values of Policies in \pbsetlarge Form a Polytope}
\label{app:polytope}

We first present a theorem that describes the ``shape'' of the value functions generated by all admissible adversaries (admissible adversarial policies).
\begin{theorem}[\textbf{Policy Perturbation Polytope}]
\label{thm:polytope}
	For a finite MDP $\mdp$, consider a policy $\pi$ and an \pbsetshort $\pbset$. The space of values (a subspace of $\mathbb{R}^{|\states|}$) of all policies in $\pbset$, denoted by $\mathcal{V}^{\pbset}$, is a (possibly non-convex) polytope.
\end{theorem}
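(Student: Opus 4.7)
The plan is to reduce the claim to the polytope theorem of \citet{dadashi2019the} by decomposing $\pbset$ into finitely many pieces on which that theorem directly applies. First I would exploit the fact that an admissible adversary $\adv\in\advset$ chooses $\adv(s)\in\ball_\epsilon(s)$ independently at each state, which gives the product structure
\begin{equation*}
\pbset \;=\; \prod_{s\in\states} \pbset_s, \qquad \pbset_s \;:=\; \{\pi(\cdot|s') : s'\in \ball_\epsilon(s)\} \;\subseteq\; \Delta(\actions).
\end{equation*}
Under the working assumption that the victim $\pi$ is piecewise linear (e.g., a ReLU network), each $\pbset_s$ is the image of the convex polytope $\ball_\epsilon(s)$ under a piecewise-linear map, and hence is a finite union of convex polytopes inside the simplex $\Delta(\actions)$.

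Next I would triangulate: choose finite partitions $\pbset_s = \bigcup_{j} K_{s,j}$ into convex sub-polytopes of $\Delta(\actions)$ and form the (finitely many) product pieces $K = \prod_s K_{s,j(s)}$. Each $K$ is itself a convex polytope sitting inside $\Pi$, and $\pbset = \bigcup_K K$. For each piece $K$, I would show $\pvfunc(K)$ is a (possibly non-convex) polytope by mimicking the inductive argument of \citet{dadashi2019the}: Lemma~\ref{lem:monotone} guarantees that whenever two policies in $K$ agree on all but one state, their convex combinations trace out a line segment in value space parameterized by a monotone rational function. Triangulating each $K_{s,j}$ into simplices and inducting on the number of states at which the policy is allowed to vary then yields $\pvfunc(K)$ as a finite union of convex polytopes, exactly as in the proof of the classical value-function polytope. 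Finally, taking the union over the finitely many pieces and using that a finite union of polytopes is a polytope gives $\mathcal{V}^{\pbset} = \pvfunc(\pbset) = \bigcup_K \pvfunc(K)$ as a (possibly non-convex) polytope, which is the desired conclusion.

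The main obstacle is the adaptation in the second paragraph: the Dadashi et al.\ induction is carried out on the full product of simplices, and their base case uses that each per-state slice is a \emph{full} simplex. Here the per-state slices $\pbset_s$ are only sub-polytopes of $\Delta(\actions)$ and, before triangulation, may even be non-convex. What makes the adaptation go through is that Lemma~\ref{lem:monotone} is agnostic to whether the two endpoints lie in the full simplex or in a sub-polytope: it only requires the two policies to agree on all but one state and the interpolating segment to remain in the allowed policy set, which holds on each convex piece $K$ by construction. Once this is in place, the inductive construction of $\pvfunc(K)$ as a finite union of convex polytopes proceeds exactly as in the polytope theorem, and taking the finite union over pieces completes the proof. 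A secondary subtlety is justifying the piecewise-linear assumption on $\pi$; if $\pi$ is merely continuous, one only obtains $\mathcal{V}^{\pbset}$ as a connected and compact subset of $\mathbb{R}^{|\states|}$ via Lemma~\ref{lem:compact}, and the polytope conclusion should be stated under the standard piecewise-linear function-approximator assumption used throughout the paper.
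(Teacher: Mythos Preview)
Your decomposition-into-convex-pieces strategy is coherent and would work under the piecewise-linear assumption you impose, but the paper takes a genuinely different and stronger route. Rather than decomposing $\pbset$ at all, the paper proves a more general intermediate result (Theorem~\ref{thm:polytope_subset}): for \emph{any} connected and compact subset $\mathcal{T}\subseteq\Pi$, the image $\pvfunc(\mathcal{T})$ is a (possibly non-convex) polytope. Since Lemma~\ref{lem:compact} gives that $\pbset$ is connected and compact whenever $\pi$ is continuous, Theorem~\ref{thm:polytope} follows immediately---with no piecewise-linearity hypothesis and no triangulation.

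The key conceptual difference is this: you treat ``$\pbset$ is a finite union of polytopes'' as a prerequisite for the value image to be a polytope, which forces the piecewise-linear assumption. The paper shows this is unnecessary. Even when $\pbset$ itself is \emph{not} a polytope (as can happen for a merely continuous $\pi$), its value image still is. The reason is the special structure of $\pvfunc$: restricted to policies varying at a single state, $\pvfunc$ collapses any connected compact slice onto a line segment (the paper's Lemma~\ref{lem:line}, a direct extension of the monotone interpolation Lemma~\ref{lem:monotone} to subsets). The paper then adapts the Dadashi et al.\ boundary/induction argument directly at the level of $\mathcal{T}$ (Lemmas~\ref{lem:equal_sets} and~\ref{lem:bound}), showing the relative boundary of $\pvfunc(\mathcal{T}^\pi_{s_1,\dots,s_k})$ lies in finitely many affine hyperplanes and invoking the same polyhedron-recognition proposition.

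So your ``main obstacle'' paragraph is on the right track---the adaptation of the induction to sub-polytopes is exactly what the paper carries out---but your approach pays an unnecessary price (piecewise-linearity plus an outer union over combinatorially many product pieces) to reduce to convex slices, whereas the paper runs the induction once on the whole connected compact set. Your ``secondary subtlety'' is therefore not a subtlety at all: the polytope conclusion holds for continuous $\pi$, and your caveat that one would otherwise only get ``connected and compact'' undersells what is actually true.
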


In the remaining of this section, we prove a more general version of Theorem~\ref{thm:polytope} as below.

\begin{theorem}[\textbf{Policy Subset Polytope}]
\label{thm:polytope_subset}
	For a finite MDP $\mdp$, consider a connected and compact subset of $\Pi$, denoted as $\subpi$. The space of values (a subspace of $\mathbb{R}^{|\states|}$) of all policies in $\subpi$, denoted by $\mathcal{V}^{\subpi}$, is a (possibly non-convex) polytope.
\end{theorem}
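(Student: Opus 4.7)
The plan is to extend the value-function polytope result of \citet{dadashi2019the} from the entire policy space $\Pi$ to an arbitrary connected compact subset $\subpi \subseteq \Pi$. The core ingredient is that $V^\pi = (I - \gamma P^\pi)^{-1} R^\pi$ depends on $\pi$ as a vector-valued rational function, so varying the action distribution at a single state $s$ while all other states are held fixed traces out a line segment in value space; this is precisely the monotone interpolation property of Lemma~\ref{lem:monotone}.

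First I would set up an induction on $k$, the number of states at which policies in $\subpi$ can differ. The base case $k\le 1$ is immediate from Lemma~\ref{lem:monotone}: the image lies on a single line in $\mathbb{R}^{|\states|}$, and compactness and connectedness of $\subpi$ force $\pvfunc(\subpi)$ to be a closed sub-interval of that line, which is a convex polytope. For the inductive step I would single out one state $s_0$ and slice $\subpi$ as $\subpi = \bigcup_{p \in \Pi_{s_0}} \subpi_{p}$, where $\subpi_{p} := \{\pi^\prime \in \subpi : \pi^\prime(\cdot\mid s_0) = p\}$. Each nonempty slice is a connected compact set of policies disagreeing on at most $k-1$ states, so by the inductive hypothesis $\pvfunc(\subpi_{p})$ is a polytope. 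The remaining work is to show that the family $\{\pvfunc(\subpi_{p})\}_{p\in\Pi_{s_0}}$ assembles into a polytope. To do this I would exploit the product structure inherited by $\pbset$: the adversary chooses $\tilde s$ at each state independently, so $\pbset$ is actually a Cartesian product $\prod_{s} \{\pi(\cdot\mid \tilde s) : \tilde s \in \ball_\epsilon(s)\}$, and consequently the $s_0$-slicing above is compatible with the polytopal decomposition of each factor.

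The main obstacle I expect is precisely this assembly step: for an abstract connected compact $\subpi$ without product structure, the slices can vary wildly with $p$ and their union need not be a finite union of convex polytopes. To circumvent this I would rely on the fact, already established in \citet{dadashi2019the}, that $\mathcal{V}=\pvfunc(\Pi)$ decomposes into finitely many affine ``patches'' indexed by the deterministic support patterns of $\pi$, and I would intersect $\pvfunc(\subpi)$ with each patch separately. Within a single patch, $\pvfunc$ is multilinear in the per-state action distributions, so the image of a product of polytopal factors is itself a polytope in value space, and taking the finite union over patches yields the claimed polytopal structure. If the straightforward induction still does not close for fully arbitrary $\subpi$, my fallback is to state and prove the result under the product-of-compact-sets assumption that $\pbset$ actually satisfies, which is all that Theorem~\ref{thm:polytope} requires.
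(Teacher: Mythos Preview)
Your induction skeleton matches the paper's, but the assembly step does not close, and the fix you propose is based on an incorrect fact. The map $\pvfunc(\pi) = (I-\gamma P^\pi)^{-1}R^\pi$ is \emph{not} multilinear in the per-state action distributions: $P^\pi$ is linear in each $\pi(\cdot\mid s)$, but the matrix inverse makes $\pvfunc$ a genuine rational function of the policy parameters. So there are no affine ``patches'' on which the image of a product of compact factors is automatically a polytope, and the statement ``the image of a product of polytopal factors is itself a polytope in value space'' fails (already the image of a square under a bilinear map need not be a polytope). Your fallback to the product structure of $\pbset$ therefore does not rescue the argument; even with independent per-state choices you are still taking an uncountable union of slice polytopes with no mechanism to reduce it to a finite one.

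The paper avoids the assembly problem entirely by working with the \emph{boundary} rather than the slices. For the set $\mathcal{V}^t=\pvfunc(\bpisk)$ of values of policies agreeing with $\pi$ on $s_1,\dots,s_k$, it first shows (Lemmas~\ref{lem:line}--\ref{lem:bound}) that for each free state $s_j$ there are two extreme marginals $\pi_{s_j}^+,\pi_{s_j}^-$ (on the outermost boundary of the slice) such that the relative boundary of $\mathcal{V}^t$ is contained in the finitely many affine hyperplanes $F_{s_j}^+,F_{s_j}^-$ corresponding to fixing $\pi(\cdot\mid s_j)$ at one of these extremes. Each intersection $\mathcal{V}^t\cap F_{s_j}^{\pm}$ is exactly $\pvfunc$ of a set with one more state fixed, hence a polytope by the inductive hypothesis. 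One then invokes a purely topological fact (Proposition~1 in \citet{dadashi2019the}): a closed set whose relative boundary lies in finitely many hyperplanes, with each hyperplane intersection already a polyhedron, is itself a polyhedron; boundedness upgrades this to a polytope. This argument uses only connectedness and compactness of $\subpi$, so no product-structure fallback is needed.
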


According to Lemma~\ref{lem:compact}, $\pbset$ is a connected and compact subset of $\Pi$, thus Theorem~\ref{thm:polytope} is a special case of Theorem~\ref{thm:polytope_subset}. 

\paragraph{Additional Notations}
To prove Theorem~\ref{thm:polytope_subset}, we further define a variant of $Y^\pi_{s_1,\cdots,s_k}$ as $\bpisk$, which is the set of policies that are in $\subpi$ and agree with $\pi$ on states $s_1,\cdots,s_k$, i.e., 
$$\bpisk := \{ \pi^\prime \in \subpi: \pi^\prime(s_i) = \pi(s_i), \forall i = 1, \cdots, k \}.$$
Note that different from $\pbset$, $\subpi$ is no longer restricted under an admissible adversary set and can be any connected and compact subset of $\Pi$.

The following lemma shows that the values of policies in $\subpi$ that agree on all but one state form a line segment.

\begin{lemma}
\label{lem:line}
	For a policy $\pi\in\mathcal{T}$ and an arbitrary state $s\in\states$, there are two policies in $\omboundary \bpis$, namely $\pi_s^-, \pi_s^+$, such that $\forall \pi^\prime \in \bpis$,
	\begin{equation}
		\pvfunc(\pi_s^-) \preccurlyeq \pvfunc(\pi^\prime) \preccurlyeq\pvfunc(\pi_s^+),
	\end{equation}
	where $\preccurlyeq$ denotes element-wise less than or equal to (if $a \preccurlyeq b$, then $a_i\leq b_i$ for all index $i$).
	Moreover, the image of $\pvfunc$ restricted to $\bpis$ is a line segment.
\end{lemma}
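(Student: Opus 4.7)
The plan is to parameterize $\bpis$ by its one remaining degree of freedom (the action distribution at $s$), transport the structure onto the value space via Lemma~\ref{lem:monotone}, and then extract the bracketing endpoints as extrema of a compact set that lives on a single line in $\mathbb{R}^{|\states|}$.

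First, I work in the unrestricted ambient slice $\ypis \supseteq \bpis$ of all policies in $\Pi$ agreeing with $\pi$ off $s$. The $|\actions|$ deterministic members of $\ypis$ are pairwise comparable under $\preccurlyeq$ by Lemma~\ref{lem:monotone}(1) and therefore collinear in value space; clause (3) of the same lemma, applied to convex combinations of deterministic policies, forces every value in $\pvfunc(\ypis)$ onto this common line, which thus forms a line segment $\mathcal{L}$ bracketed by two deterministic policies. In particular $\pvfunc(\bpis) \subseteq \mathcal{L}$.

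Next, $\bpis$ is the intersection of the compact set $\subpi$ with the closed affine slice $\{\pi^\prime : \pi^\prime(\cdot|s^\prime) = \pi(\cdot|s^\prime) \text{ for all } s^\prime \neq s\}$, so $\bpis$ is compact and $\pvfunc(\bpis)$ is a compact subset of $\mathcal{L}$. The element-wise order totally orders $\mathcal{L}$, so there exist a minimum $\pvfunc(\pi_s^-)$ and a maximum $\pvfunc(\pi_s^+)$ attained at some $\pi_s^-,\pi_s^+ \in \bpis$; these element-wise bracket every $\pvfunc(\pi^\prime)$ with $\pi^\prime \in \bpis$. To upgrade this containment to the full line-segment conclusion I need $\pvfunc(\bpis)$ to be connected: the projection $\pi^\prime \mapsto \pi^\prime(\cdot|s)$ identifies $\bpis$ with a subset of $\prob(\actions)$, and in the motivating case $\subpi = \pbset$ this subset equals $\pi(\ball_\epsilon(s))$, the continuous image of a connected norm-ball, so $\bpis$ and therefore $\pvfunc(\bpis)$ are connected, producing a closed interval on $\mathcal{L}$ with endpoints $\pvfunc(\pi_s^\pm)$.

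Finally, I verify $\pi_s^\pm \in \omboundary \bpis$. If there were $\hat{\pi} \in \bpis$ with $\hat{\pi}(\cdot|s) = \pi_s^+(\cdot|s) + \theta(\pi_s^+(\cdot|s) - \pi(\cdot|s))$ for some $\theta > 0$, then $\pi_s^+$ would lie on the open segment from $\pi$ to $\hat{\pi}$ inside $\ypis$, and Lemma~\ref{lem:monotone}(3) would place $\pvfunc(\hat{\pi})$ strictly beyond $\pvfunc(\pi_s^+)$ along $\mathcal{L}$, contradicting maximality; the argument for $\pi_s^-$ is symmetric. The main obstacle is the connectedness step: intersecting a connected compact set with a lower-dimensional affine subspace need not preserve connectedness in general, so the ``line segment'' conclusion (as opposed to the mere collinearity $\pvfunc(\bpis)\subseteq \mathcal{L}$) will rely on the specific structure of the admissible adversary set inducing $\pbset$, while the bracketing and collinearity statements go through for any connected compact $\subpi$.
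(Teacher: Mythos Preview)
Your proposal is correct and follows essentially the same route as the paper: both use Lemma~\ref{lem:monotone} to place $\pvfunc(\bpis)$ on the line $\pvfunc(\ypis)$ with a total order, extract extrema by compactness, and show they lie on $\omboundary\bpis$ via a monotonicity contradiction. You are in fact more careful than the paper on one point: the paper simply asserts that $\pvfunc(\bpis)$ is connected (citing smoothness of $\pvfunc$), tacitly assuming $\bpis$ itself is connected, whereas you correctly flag that slicing a general connected compact $\subpi$ by an affine subspace need not yield a connected fiber, so the full line-segment conclusion (as opposed to mere collinearity and bracketing) leans on the Cartesian-product structure of $\pbset$.
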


\begin{proof}[Proof of Lemma~\ref{lem:line}]
	Lemma 5 in~\cite{dadashi2019the} has shown that $\pvfunc$ is infinitely differentiable on $\Pi$, hence we know $\pvfunc(\bpis)$ is compact and connected.
	According to Lemma 4 in~\cite{dadashi2019the}, for any two policies $\pi_1, \pi_2 \in \ypis$, either $\pvfunc(\pi_1) \preccurlyeq \pvfunc(\pi_2)$, or $\pvfunc(\pi_2) \preccurlyeq \pvfunc(\pi_1)$ (there exists a total order). The same property applies to $\bpis$ since $\bpis$ is a subset of $\ypis$.

	Therefore, there exists $\pi_s^-$ and $\pi_s^+$ that achieve the minimum and maximum over all policies in $\bpis$. Next we show $\pi_s^-$ and $\pi_s^+$ can be found on the outermost boundary of $\bpis$.

	Assume $\pi_s^+ \notin \omboundary \bpis$, and for all $\tilde{\pi}\in\bpis$, $\pvfunc(\tilde{\pi}) \prec \pvfunc(\pi_s^+)$. Then we can find another policy $\pi^\prime \in \omboundary \bpis$ such that $\pi_s^+ = \alpha \pi + (1-\alpha) \pi^\prime$ for some scalar $\alpha \in (0,1)$. Then according to Lemma~\ref{lem:monotone}, $\pvfunc(\pi^\prime) \succcurlyeq \pvfunc(\pi_s^+)$, contradicting with the assumption. Therefore, one should be able to find a policy on the outermost boundary of $\bpis$ whose value dominates all other policies. And similarly, we can also find $\pi_s^-$ on $\omboundary \bpis$.

	Furthermore, $\pvfunc(\bpis)$ is a subset of $\pvfunc(\ypis)$ since $\bpis$ is a subset of $\ypis$. Given that $\pvfunc(\ypis)$ is a line segment, and $\pvfunc(\bpis)$ is connected, we can conclude that $\pvfunc(\bpis)$ is also a line segment.

\end{proof}

Next, the following lemma shows that $\pi_s^+$ and $\pi_s^-$ and their linear combinations can generate values that cover the set $\pvfunc(\bpis)$.

\begin{lemma}
\label{lem:equal_sets}
	For a policy $\pi\in\mathcal{T}$, an arbitrary state $s\in\states$, and $\pi_s^+, \pi_s^-$ defined in Lemma~\ref{lem:line}, the following three sets are equivalent:\\
	(1) $\pvfunc(\bpis)$;\\
	(2) $\pvfunc\big(closure(\bpis) \big)$, where $closure(\cdot)$ is the convex closure of a set; \\
	(3) $\{ \pvfunc(\alpha \pi_s^+ + (1-\alpha) \pi_s^- ) | \alpha\in[0,1] \}$;\\
	(4) $\{ \alpha \pvfunc(\pi_s^+) + (1-\alpha) \pvfunc(\pi_s^- ) | \alpha\in[0,1] \}$;\\
\end{lemma}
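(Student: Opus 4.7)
The plan is to establish the equalities (1) = (4), (3) = (4), and (1) = (2); together these identify all four sets. Most of the work has been done in Lemma~\ref{lem:line} (which tells us $\pvfunc(\bpis)$ is a line segment bracketed by $\pvfunc(\pi_s^-)$ and $\pvfunc(\pi_s^+)$) and Lemma~\ref{lem:monotone} (monotone interpolation along two-point convex combinations inside $\ypis$).

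For (1) = (4), I would observe that by Lemma~\ref{lem:line}, $\pvfunc(\bpis)$ is a line segment containing its extrema $\pvfunc(\pi_s^+)$ and $\pvfunc(\pi_s^-)$; a line segment with prescribed endpoints is exactly the set of convex combinations of those endpoints, which is (4). For (3) = (4), I would apply Lemma~\ref{lem:monotone} to $\pi_0 = \pi_s^-,\pi_1 = \pi_s^+$ (both elements of $\ypis$ since they lie in $\bpis$): either both sets collapse to a single point, or the strictly monotonic rational function $\rho:[0,1]\to[0,1]$ furnished by the lemma is a continuous bijection, so $\{\pvfunc(\alpha\pi_s^+ + (1-\alpha)\pi_s^-) : \alpha\in[0,1]\}$ sweeps out every convex combination of the two endpoint values, matching (4) exactly.

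The main obstacle is (1) = (2). The inclusion $\pvfunc(\bpis) \subseteq \pvfunc(closure(\bpis))$ is immediate, so the real work is the reverse inclusion. I would proceed by induction on the number of atoms in a convex representation. By Carath\'eodory's theorem (applied in the finite-dimensional space $\Pi$), any $\pi^\prime \in closure(\bpis)$ can be written as $\pi^\prime = \sum_{i=1}^n \alpha_i \pi_i$ with $\pi_i \in \bpis$ and weights $\alpha_i > 0$ summing to one. Since each $\pi_i$ agrees with $\pi$ outside of $s$, so does $\pi^\prime$, hence $\pi^\prime \in \ypis$. The base case $n=1$ is trivial. For $n \geq 2$, rewrite $\pi^\prime = \alpha_1 \pi_1 + (1-\alpha_1)\pi''$ where $\pi'' := (1-\alpha_1)^{-1}\sum_{i\geq 2}\alpha_i\pi_i$ also lies in $\ypis$ (same argument). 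The induction hypothesis gives $\pvfunc(\pi'') \in \pvfunc(\bpis)$, and Lemma~\ref{lem:monotone} applied to the pair $\pi_1, \pi'' \in \ypis$ places $\pvfunc(\pi^\prime)$ on the segment between $\pvfunc(\pi_1)$ and $\pvfunc(\pi'')$, both of which already lie in the line segment $\pvfunc(\bpis)$. Convexity of that segment seals the inclusion, and chaining the three equalities closes the loop.
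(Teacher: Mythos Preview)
Your proof is correct and follows essentially the same approach as the paper, relying on Lemma~\ref{lem:line} for $(1)=(4)$ and Lemma~\ref{lem:monotone} with the intermediate value theorem for $(3)=(4)$; the paper organizes these as a cycle of inclusions $(1)\subseteq(4)\subseteq(3)\subseteq(2)\subseteq(1)$ rather than three separate equalities. Your Carath\'eodory-plus-induction argument for $(2)\subseteq(1)$ is in fact more careful than the paper's, which only explicitly treats pairwise convex combinations before concluding the same inclusion.
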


\begin{proof}[Proof of Lemma~\ref{lem:equal_sets}]
We show the equivalence by showing (1) $\subseteq$ (4) $\subseteq$ (3) $\subseteq$ (2) $\subseteq$ (1) as below.

\textbf{(2) $\subseteq$ (1):}
For any $\pi_1, \pi_2 \in \bpis$, without loss of generality, suppose $\pvfunc(\pi_1) \preccurlyeq \pvfunc(\pi_2)$. According to Lemma~\ref{lem:monotone}, for any $\alpha \in [0,1]$, $\pvfunc(\pi_1) \preccurlyeq \alpha \pi_1 + (1-\alpha) \pi_2 \preccurlyeq \pvfunc(\pi_2)$. Therefore, any convex combinations of policies in $\bpis$ has value that is in the range of $\pvfunc(\bpis)$. So the values of policies in the convex closure of $\bpis$ do not exceed $\pvfunc(\bpis)$, i.e., (2) $\subseteq$ (1).

\textbf{(3) $\subseteq$ (2):}
Based on the definition, $\alpha \pi_s^+ + (1-\alpha) \pi_s^- \in closure(\bpis)$, so (3) $\subseteq$ (2).

\textbf{(4) $\subseteq$ (3):}
According to Lemma~\ref{lem:monotone}, there exists a strictly monotonic rational function $\rho: [0,1] \to \mathbb{R}$, such that 
$$l(\alpha) = \pvfunc(\alpha \pi_s^+ + (1-\alpha) \pi_s^-) = \rho(\alpha) \pvfunc(\pi_s^+) + (1-\rho(\alpha))\pvfunc(\pi_s^-).$$
Therefore, due to intermediate value theorem, for $\alpha\in[0,1]$, $\rho(\alpha)$ takes all values from 0 to 1. So (4) $=$ (3).

\textbf{(1) $\subseteq$ (4):}
Lemma~\ref{lem:line} shows that $\pvfunc(\bpis)$ is a line segment bracketed by $\pvfunc(\pi_s^+)$ and $\pvfunc(\pi_s^-)$. Therefore, for any $\pi^\prime \in \bpis$, its value is a convex combination of $\pvfunc(\pi_s^+)$ and $\pvfunc(\pi_s^-)$.

\end{proof}

Next, we show that the relative boundary of the value space constrained to $\bpisk$ is covered by policies that dominate or are dominated in at least one state.
The \textbf{relative interior} of set $A$ in $B$ is defined as the set of points in $A$ that have a relative neighborhood in $A\cap B$, denoted as $\mathrm{relint}_B A$.
The \textbf{relative boundary} of set $A$ in $B$, denoted as $\partial_B A$, is defined as the set of points in $A$ that are not in the relative interior of $A$, i.e., $\partial_B A = A \backslash \mathrm{relint}_B A$. When there is no ambiguity, we omit the subscript of $\partial$ to simplify notations.

In addition, we introduce another notation $\fpisk := V^\pi + span(C^\pi_{k+1}, \cdots, C^\pi_{|\states|})$, where $C^\pi_i$ stands for the $i$-th column of the matrix $(I-\gamma P^\pi)^{-1}$.
Note that $\fpisk$ is the same with $H^\pi_{s_1, \cdots, s_k}$ in Dadashi et al.~\cite{dadashi2019the}, and we change $H$ to $F$ in order to distinguish from the admissible adversary set $\advset$ defined in our paper.
\begin{lemma}
\label{lem:bound}
	For a policy $\pi\in\mathcal{T}$, $k\leq|\states|$, and a set of policies $\bpisk$ that agree with $\pi$ on $s_1,\cdots,s_k$ (perturb $\pi$ only at $s_{k+1},\cdots,s_{|\states|}$), define $\mathcal{V}^t := \pvfunc(\bpisk)$. 
	Define two sets of policies $X_s^+ : = \{ \pi^\prime\in\bpisk: \pi^\prime(\cdot|s) = \pi_s^+(\cdot|s) \}$, and 
	$X_s^- : = \{ \pi^\prime\in\bpisk: \pi^\prime(\cdot|s) = \pi_s^-(\cdot|s) \}$.
	We have that the relative boundary of $\mathcal{V}^t$ in $\fpisk$ is included in the value functions spanned by policies in $\bpisk \cap (X_{s_j}^+ \cup X_{s_j}^-)$ for at least one $s\notin\{s_1,\cdots,s_k\}$, i.e.,
	$$
		\partial \mathcal{V}^t \subset \bigcup_{j=k+1}^{|\states|} \pvfunc(\bpisk \cap (X_{s_j}^+ \cup X_{s_j}^-))
	$$
\end{lemma}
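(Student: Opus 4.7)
The plan is to prove the lemma by contrapositive: I will show that if $V = V^{\pi'}$ for some $\pi'\in\bpisk$ and $\pi'\notin X^+_{s_j}\cup X^-_{s_j}$ for every $j > k$, then $V$ must lie in the relative interior of $\mathcal{V}^t$ in $\fpisk$. First I would check the ambient subspace: the Bellman equation $V^{\pi'}=(I-\gamma P^{\pi'})^{-1}R^{\pi'}$ together with the fact that $\pi'$ agrees with $\pi$ on $s_1,\ldots,s_k$ implies $V^{\pi'}\in\fpisk$, so $\pvfunc(\bpisk)\subseteq\fpisk$ and the relative boundary is well-defined inside $\fpisk$.

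Next, for each $j>k$ I would hold $\pi'$ fixed at every state other than $s_j$ and consider the sliced set $L_j := \{\pi''\in\bpisk : \pi''(\cdot|s_i)=\pi'(\cdot|s_i)\ \forall i\neq j\}$. Applying Lemma~\ref{lem:line} to $L_j$, the image $\pvfunc(L_j)$ is a line segment whose two endpoints correspond to specific extremal action distributions at $s_j$; these extremal distributions at $s_j$ are precisely $\pi^+_{s_j}(\cdot|s_j)$ and $\pi^-_{s_j}(\cdot|s_j)$ by definition of $X^\pm_{s_j}$. Under the contrapositive hypothesis, $\pi'(\cdot|s_j)$ is strictly interior to this 1-parameter family, so $V^{\pi'}$ is strictly interior to $\pvfunc(L_j)$. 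Using Lemma~\ref{lem:monotone}, a small perturbation of $\pi'(\cdot|s_j)$ toward either endpoint produces value changes along $\pm C^{\pi'}_j$ (the $j$-th column of $(I-\gamma P^{\pi'})^{-1}$), and both signs are attainable for small enough perturbations.

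I would then combine these per-state perturbations into a full neighborhood argument. The columns $C^{\pi'}_{k+1},\ldots,C^{\pi'}_{|\states|}$ are linearly independent (they are distinct columns of an invertible matrix), and their span equals the direction subspace of $\fpisk$ at $V^{\pi'}$. Parameterizing a perturbation at each $s_j$ independently by a small scalar $\alpha_j$, the induced map from $(\alpha_{k+1},\ldots,\alpha_{|\states|})$ to $V$ has derivative at $0$ equal to the linear map with columns $C^{\pi'}_{k+1},\ldots,C^{\pi'}_{|\states|}$, which is an isomorphism onto the direction subspace of $\fpisk$. By the inverse function theorem (applied to the smooth map $\pvfunc$ from Lemma 5 of \citet{dadashi2019the}), the image of a product of small intervals around $0$ covers a neighborhood of $V^{\pi'}$ in $\fpisk$. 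Hence $V^{\pi'}\in\mathrm{relint}_{\fpisk}\mathcal{V}^t$, contradicting the boundary assumption; this forces $\pi'\in X^+_{s_j}\cup X^-_{s_j}$ for some $j>k$, giving the claimed inclusion.

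The main obstacle is the last step: showing that independently perturbing at different states genuinely sweeps out a neighborhood rather than a lower-dimensional stratum. The per-state variations give line segments whose tangent vectors span $\fpisk$, but the non-linear coupling between simultaneous perturbations means the combined image is not automatically full-dimensional. I would handle this via the inverse function theorem as above, which requires (a) that the restriction of $\bpisk$ to a neighborhood of $\pi'$ contains a full product of small perturbations at each $s_j$ (ensured since $\pi'(\cdot|s_j)$ is strictly interior to $L_j$ for every $j>k$), and (b) that $\pvfunc$ is $C^1$ with surjective derivative in the required directions — both of which follow from standard smoothness of $\pvfunc$ together with the linear independence of the $C^{\pi'}_j$.
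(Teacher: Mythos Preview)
Your overall approach --- contrapositive plus the inverse function theorem applied to a parametrization by per-state perturbations --- is the same as the paper's. However, your contrapositive is misstated, and this is a genuine gap.

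The lemma asserts $\partial\mathcal{V}^t \subset \bigcup_j \pvfunc(\bpisk\cap(X^+_{s_j}\cup X^-_{s_j}))$. Its contrapositive says: if \emph{no} policy in $\bpisk\cap(X^+_{s_j}\cup X^-_{s_j})$ has value $V$ for any $j>k$, then $V$ is interior. You instead assume only that \emph{some} $\pi'$ with $V^{\pi'}=V$ avoids every $X^\pm_{s_j}$. This is a strictly weaker hypothesis, and under it the IFT step can fail: if for some $j$ the Q-values $Q^{\pi'}(s_j,\cdot)$ are all equal, then $\pvfunc(L_j)$ degenerates to a single point, the partial derivative of $\pvfunc$ in the $s_j$-direction vanishes, and the Jacobian of your map $(\alpha_{k+1},\ldots,\alpha_{|\states|})\mapsto V$ is singular. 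In that scenario $\mathcal{V}^t$ is locally lower-dimensional than $\fpisk$, so $V$ lies on the boundary --- yet your hypothesis is satisfied, since $\pi'$ itself need not equal either extremal. So the stronger statement you set out to prove is actually false.

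The paper's proof avoids this by using the correct contrapositive hypothesis (its Claim~1): assuming no extremal policy attains the value $V$ automatically rules out the degenerate segment (the extremal $\pi^+_{s_j}$ would then have value $V$, a contradiction), so the partial derivatives are guaranteed nonzero and the IFT goes through. The paper also inserts an iterative construction (replacing $\pi_0$ by a policy $\hat{\pi}$ with the same value but lying exactly on the interpolation segments) to make the $\alpha$-parametrization well-defined; your direct perturbation of $\pi'$ can be made to work, but only once you adopt the correct hypothesis.
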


\begin{proof}[Proof of Lemma~\ref{lem:bound}]
	We first prove the following claim:

	\textbf{Claim 1:}
	For a policy $\pi_0 \in \bpisk$,
	if $\forall j \in \{k+1,\cdots,|\states|\}$, $\nexists \pi^\prime \in closure(\bpisk) \cap (X_{s_j}^+ \cup X_{s_j}^-)$ such that $\pvfunc(\pi^\prime) = \pvfunc(\pi_0)$, then $\pvfunc(\pi_0)$ has a relative neighborhood in $\mathcal{V}^t \cap \fpisk$.

	First, based on Lemma~\ref{lem:line} and Lemma~\ref{lem:equal_sets}, we can construct a policy $\hat{\pi} \in closure(\bpisk)$ such that $\pvfunc(\hat{\pi}) = \pvfunc(\pi_0)$ through the following steps:

	\begin{algorithm}[H]
		\label{alg:findpi}
		\SetAlgoLined
		Set $\pi^k = \pi_0$\\
		\For{$j = k+1, \cdots, |\states|$}{
			Find $\pi_{s_j}^+, \pi_{s_j}^- \in  \mathcal{T}^{\pi_{j-1}}_{\states \backslash \{s_j\}}$  \\
			Find $\pi^j=\hat{\alpha}_j \pi_{s_j}^+ + (1-\hat{\alpha}_j) \pi_{s_j}^-$ such that $\pvfunc(\pi_j) = \pvfunc(\pi_{j-1})$
		}
		\textbf{Return} $\hat{\pi} = \pi^{|\states|}$
		\caption{Constructing $\hat{\pi}$}
	\end{algorithm}

	Denote the concatenation of $\alpha_j$'s as a vector $\hat{\alpha}:=[\hat{\alpha}_{k+1},\cdots,\hat{\alpha}_{|\states|}]$.

	According to the assumption that $\forall j \in \{k+1,\cdots,|\states|\}$, $\nexists \pi^\prime \in closure(\bpisk) \cap (X_{s_j}^+ \cup X_{s_j}^-)$ such that $\pvfunc(\pi^\prime) = \pvfunc(\pi_0)$, we have $\hat{\alpha}_j\notin\{0,1\}, \forall j=k+1,\cdots,|\states|$.
	Then, define a function $\phi: (0,1)^{|\states|-k} \to \mathcal{V}^t$ such that 
	\begin{equation*}
		\phi(\alpha) = \pvfunc(\pi_\alpha), \text{ where } 
		\begin{cases} 
		    \pi_\alpha(\cdot|s_j) = \alpha \pi_{s_j}^+ + (1-\alpha) \pi_{s_j}^- & \text{ if } j \in \{k+1, \cdots, |\states|\} \\
		    \pi_\alpha(\cdot|s_j) = \hat{\pi}(\cdot|s_j) & \text{ otherwise }
		\end{cases}
	\end{equation*}

	Then we have that 
	\begin{enumerate}
		\itemsep0em 
		\item $\phi$ is continuously differentiable.
		\item $\phi(\hat{\alpha}) = \pvfunc(\hat{\pi})$.
		\item $\frac{\partial \phi}{\partial \alpha_j}$ is non-zero at $\hat{\alpha}$ (because of Lemma~\ref{lem:monotone} (3)).
		\item $\frac{\partial \phi}{\partial \alpha_j}$ is along the $i$-the column of $(I-\gamma P^{\hat{\pi}})^{-1}$ (see Lemma 3 in Dadashi et al.~\cite{dadashi2019the}).
	\end{enumerate}

	Therefore, by the inverse theorem function, there is a neighborhood of $\phi(\alpha)=\pvfunc(\hat{\pi})$ in the image space.

	Now we have proved Claim 1. 
	As a result, for any policy $\pi_0 \in \bpisk$, if $\pvfunc(\pi_0)$ is in the relative boundary of $\mathcal{V}^t$ in $\fpisk$, then $\exists j \in \{k+1,\cdots,|\states|\}, \pi^\prime \in closure(\bpisk) \cap (X_{s_j}^+ \cup X_{s_j}^-)$ such that $\pvfunc(\pi^\prime) = \pvfunc(\pi_0)$. 
	Based on Lemma~\ref{lem:equal_sets}, we can also find $\pi^{\prime\prime} \in \bpisk \cap (X_{s_j}^+ \cup X_{s_j}^-)$ such that $\pvfunc(\pi^{\prime\prime}) = \pvfunc(\pi_0)$. So Lemma~\ref{lem:bound} holds.

\end{proof}

Now, we are finally ready to prove Theorem~\ref{thm:polytope_subset}.

\begin{proof}[Proof of Theorem~\ref{thm:polytope_subset}]
	We will show that $\forall \{s_1,\cdots,s_k\} \subseteq \states$, the value $\mathcal{V}^t=\pvfunc(\bpisk)$ is a polytope. 

	We prove the above claim by induction on the cardinality of the number of states $k$.
	In the base case where $k=|\states|$, $\mathcal{V}^t = \{\pvfunc(\pi)\}$ is a polytope.

	Suppose the claim holds for $k+1$, then we show it also holds for $k$, i.e., for a policy $\pi\in\Pi$, the value of $\bpisk \subseteq \ypisk \subseteq \Pi$ for a polytope.

	According to Lemma~\ref{lem:bound}, we have
	\begin{equation*}
		\partial \mathcal{V}^t \subset \bigcup_{j=k+1}^{|\states|} \pvfunc(\bpisk \cap (X_{s_j}^+ \cup X_{s_j}^-)) = \bigcup_{j=k+1}^{|\states|} \mathcal{V}^t \cap (F_{s_j}^+ \cup F_{s_j}^-))
	\end{equation*}
	where $\partial \mathcal{V}^t$ denotes the relative boundary of $\mathcal{V}^t$ in $\fpisk$; $F_{s_j}^+$ and $F_{s_j}^-$ are two affine hyperplanes of $\fpisk$, standing for the value space of policies that agree with $\pi_{s_j}^+$ and $\pi_{s_j}^-$ in state $s_j$ respectively.

	Then we can get
	\begin{enumerate}
		\itemsep0em 
		\item $\mathcal{V}^t=\pvfunc(\bpisk)$ is closed as $\bpisk$ is compact and $\pvfunc$ is continuous.
		\item $\partial \mathcal{V}^t \subset  \bigcup_{j=k+1}^{|\states|}  (F_{s_j}^+ \cup F_{s_j}^-))$, a finite number of affine hyperplanes in $\fpisk$.
		\item $\mathcal{V}^t \cap F_{s_j}^+$ (or $\mathcal{V}^t \cap F_{s_j}^-$) is a polyhedron by induction assumption.
	\end{enumerate}

	Hence, based on Proposition 1 by Dadashi et al.~\cite{dadashi2019the}, we get $\mathcal{V}^t$ is a polyhedron. Since $\mathcal{V}^t \subseteq \mathcal{V}$ is bounded, we can further conclude that $\mathcal{V}^t$ is a polytope.

	Therefore, for an arbitrary connected and compact set of policies $\mathcal{T} \subseteq \Pi$, let $\pi\in\mathcal{T}$ be an arbitrary policy in $\mathcal{T}$, then $\pvfunc(\mathcal{T})=\pvfunc(\mathcal{T}^\pi_{\emptyset})$ is a polytope.

\end{proof}

\subsection{Examples of the Outermost Boundary}
\label{app:example_boundary}

See Figure~\ref{fig:example_boundary} for examples of the outermost boundary for different $\pbset$'s.
\begin{figure}[!h]
\centering
 \begin{subfigure}[t]{0.45\columnwidth}
  \centering
  \includegraphics[width=0.8\textwidth]{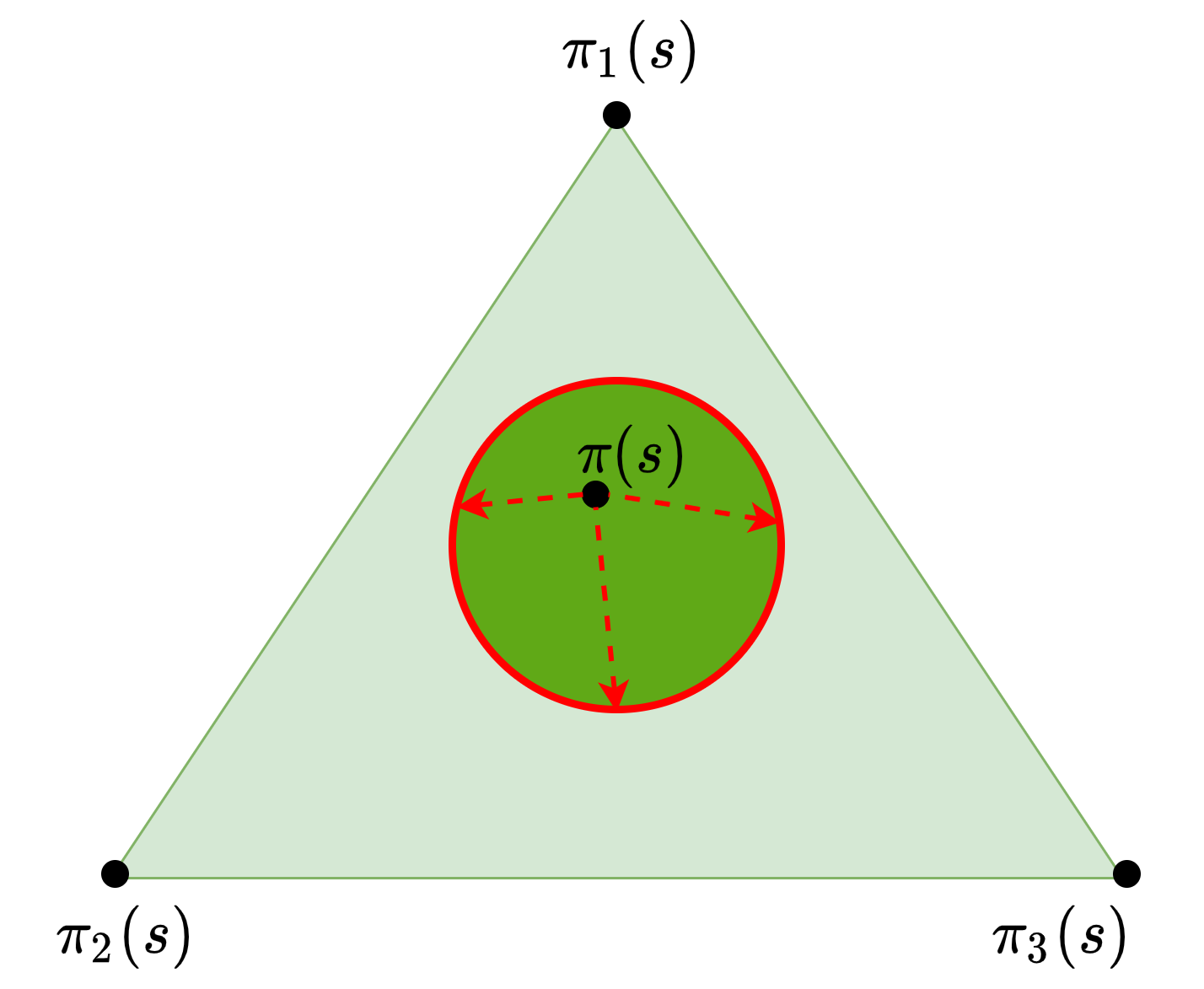}
 \end{subfigure}
 \hfill
 \begin{subfigure}[t]{0.45\columnwidth}
  \centering
  \includegraphics[width=0.8\textwidth]{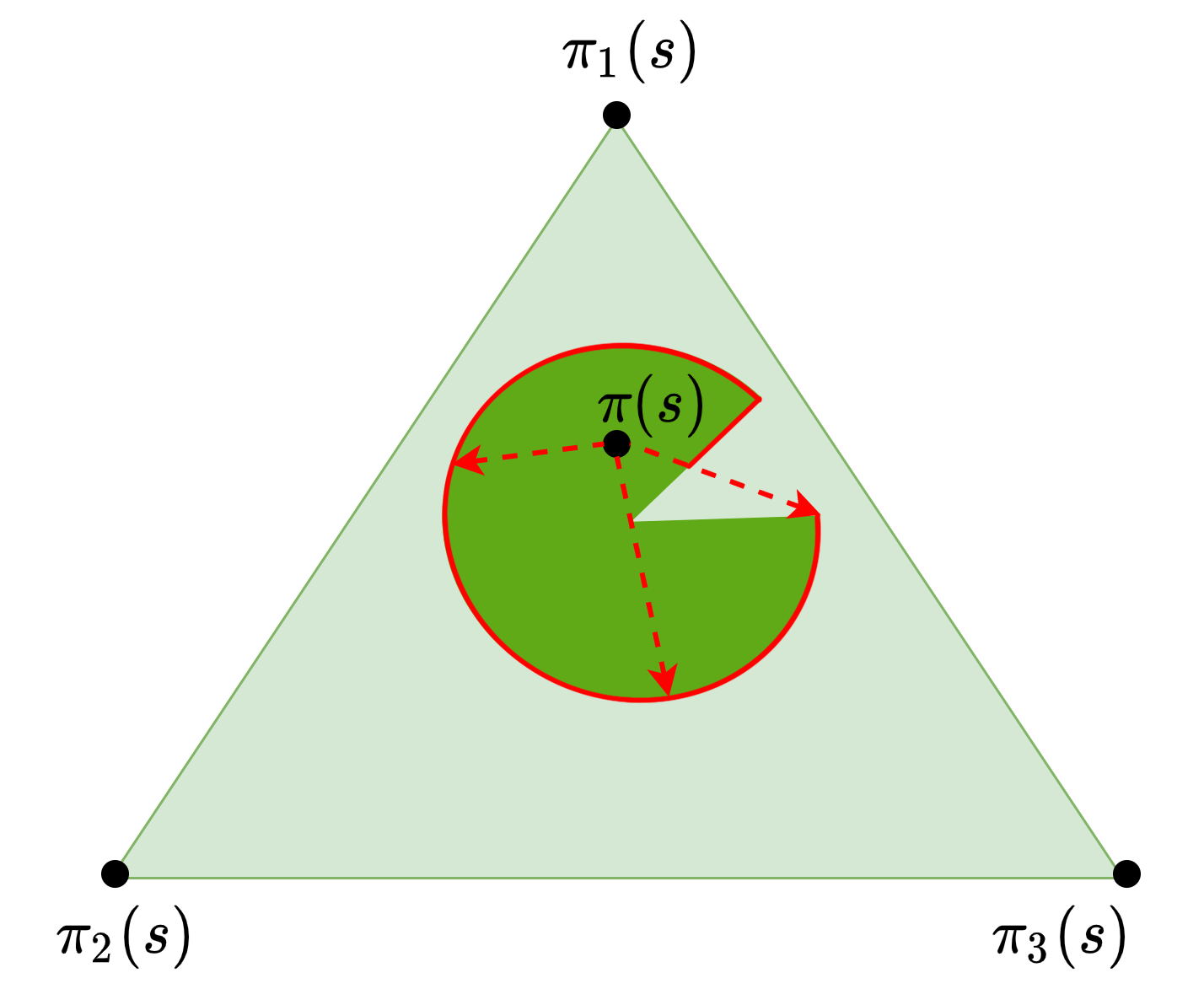}
 \end{subfigure}
\vspace{-0.5em}
\caption{\small{Two examples of the \ombname with $|\actions|=3$ actions at one single state $s$. The large triangle denotes the distributions over the action space at state $s$, i.e., $\Pi_s$; $\pi_1, \pi_2$ and $\pi_3$ are three policies that deterministically choose $a_1, a_2$ and $a_3$ respectively. $\pi$ is the victim policy, the dark green area is the $\pbset_s: \pbset \cap \Pi_s$. The red solid curve depicts the \ombname of $\pbset_s$. Note that a policy is in the \ombname of $\pbset$ iff it is in the \ombname of $\pbset_s$ for all $s\in\states$.}}
\label{fig:example_boundary}
\end{figure}

\subsection{An Example of The Policy Perturbation Polytope}
\label{app:example}

An example is given by Figure~\ref{fig:example}, where we define an MDP with 2 states and 3 actions.
We train an DQN agent with one-hot encodings of the states, and then randomly perturb the states within an $\ell_\infty$ ball with $\epsilon=0.8$. 
By sampling 5M random policies, and 100K random perturbations, we visualize the value space of approximately the whole policy space $\Pi$ and the \pbsetname $\pbset$, both of which are polytopes (boundaries are flat).
A learning agent searches for the optimal policy $\pi^*$ whose value is the upper right vertex of the larger blue polytope, while the attacker attempts to find an optimal adversary $\adv^*$, which perturbs a given clean policy $\pi$ to the worst perturbed policy $\advpiopt$ whose value is the lower left vertex of the smaller green polytope. This also justifies the fact that learning an optimal adversary is as difficult as learning an optimal policy in an RL problem.

\begin{figure}
  \begin{center}
   \includegraphics[width=0.4\textwidth]{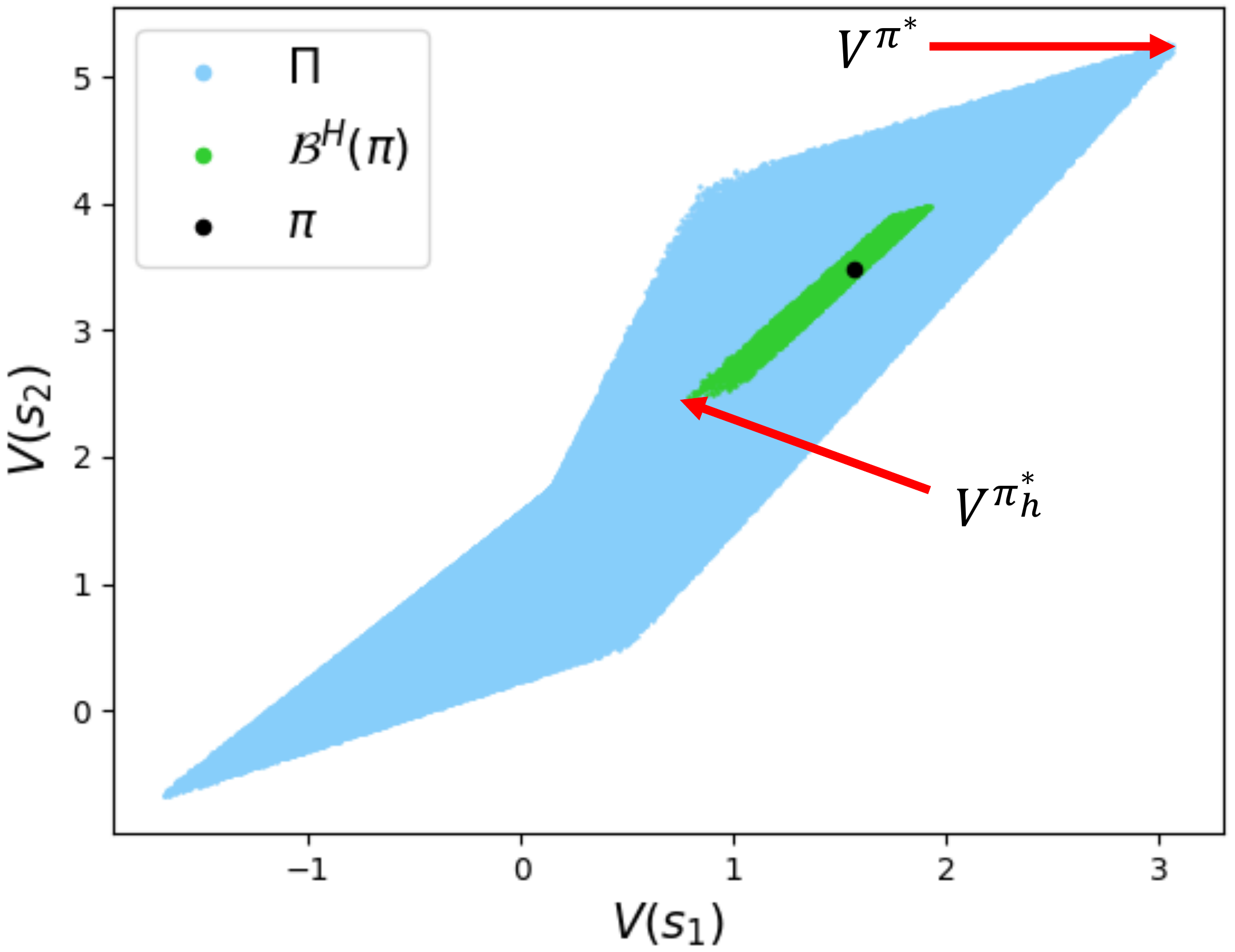}
  \caption[center]{\small{Value space of an example MDP. The values of the whole policy space $\Pi$ form a polytope (blue) as suggested by~\cite{dadashi2019the}. The values of all perturbed policies with $\sadvset$ also form a polytope (green) as suggested by Theorem~\ref{thm:polytope}.}}
  \label{fig:example}
  \end{center}
  \vspace{-1em}
\end{figure}

The example MDP $\mdp_{\mathrm{ex}}$:
\begin{equation*}
\begin{array}{l}
|\mathcal{A}|=3, \gamma=0.8 \\ 
\hat{r}=[-0.1,-1 ., 0.1,0.4,1.5,0.1] \\ 
\hat{P}=[[0.9,0.1],[0.2,0.8],[0.7,0.3],[0.05,0.95],[0.25,0.75],[0.3,0.7]]
\end{array}
\end{equation*}

The base/clean policy $\pi$:
\begin{equation*}
\begin{array}{l}
\pi(a_1|s_1) = 0.215, \pi(a_2|s_1) = 0.429, \pi(a_3|s_1) = 0.356 \\ 
\pi(a_1|s_2) = 0.271, \pi(a_2|s_2) = 0.592, \pi(a_3|s_2) = 0.137
\end{array}
\end{equation*}

\section{Extentions and Additional Details of Our Algorithm}
\label{app:algo}

\subsection{Attacking A Deterministic Victim Policy}
\label{app:algo_det}

For a deterministic victim $\pi_D=\mathrm{argmax}_{a} \pi(a|s)$, we define Deterministic Policy Adversary MDP (D-PAMDP) as below, where a subscript ${}_D$ is added to all components to distinguish them from their stochastic counterparts.
In D-PAMDP, the director proposes a target action $\widehat{a}_D \in \actions(=:\widehat{\actions}_D)$, and the actor tries its best to let the victim output this target action.

\begin{definition} [\textbf{Deterministic Policy Adversary MDP (D-PAMDP)}]
\label{def:adv_mdp_det}
  Given an MDP $\mdp= \langle \states, \actions, \dynamics, \rewards, \gamma \rangle$, a fixed and deterministic victim policy $\pi_D$, we define a Deterministic Policy Adversarial MDP $\widehat{\mdp}_D=\langle \states, \widehat{\actions}_D, \widehat{\dynamics}_D, \widehat{\rewards}_D, \gamma \rangle $,
  where the action space is $\widehat{\actions}_D\!= \widehat{\actions}_D$,
  and $\forall s,s^\prime \in \states, \forall \widehat{a}\in\actions$,
  \setlength\abovedisplayskip{2pt}
  \setlength\belowdisplayskip{2pt}
  \begin{equation*}
  \widehat{\dynamics}_D(s'|s,\widehat{a})= \dynamics(s'|s,\pi_D(g(\widehat{a}, s))), \quad \widehat{\rewards}_D(s,\widehat{a}) =  - \rewards(s,\pi_D(g(\widehat{a}, s))).
  \end{equation*}
  The actor function $g$ is defined as 
  \setlength\abovedisplayskip{2pt}
  \setlength\belowdisplayskip{2pt}
  \begin{equation}
  \label{eq:actor_det}
    \tag{$G_D$}
    g_D(\widehat{a},s) = \mathrm{argmax}_{\tilde{s}\in \mathcal{B}_{\epsilon}(s)} \big( \pi(\widehat{a}|\tilde{s}) - \mathrm{max}_{a\in\actions, a \neq \widehat{a}} \pi(a|\tilde{s}) \big)
  \end{equation}
\end{definition}

The optimal policy of D-PAMDP is an optimal adversary against $\pi_D$ as proved in Appendix~\ref{app:optimal_det}

\subsection{Implementation Details of \ours}
\label{app:algo_imple}

To address the actor function $g$ (or $g_D$) defined in~\eqref{eq:actor} and~\eqref{eq:actor_det_main}, we let the actor maximize objectives $J_D$ and $J$ within the $\ball_\epsilon(\cdot)$ ball around the original state, for a deterministic victim and a stochastic victim, respectively. Below we explicitly define $J_D$ and $J$.

\textbf{Actor Objective for Deterministic Victim}\quad
For the deterministic variant of \ours, the actor function~\eqref{eq:actor_det} is simple and can be directly solved to identify the optimal adversary. Concretely, we define the following objective
\begin{equation}
  \label{eq:actor_loss_det}
  \tag{$J_D$}
  J_D(\tilde{s};\widehat{a}, s) := \pi(\widehat{a}|\tilde{s}) - \mathrm{max_{a\in\actions, a \neq \widehat{a}}} \pi(a|\tilde{s}),
\end{equation}
which can be realized with the multi-class classification hinge loss. In practice, a relaxed cross-entropy objective can also be used to maximize $\pi(\widehat{a}|\tilde{s})$.

\textbf{Actor Objective for Stochastic Victim}\quad
Different from the deterministic-victim case, the actor function for a stochastic victim defined in~\eqref{eq:actor} requires solving a more complex optimization problem with a non-convex constraint set, which in practice can be relaxed to~\eqref{eq:actor_relax} (a Lagrangian relaxation) to efficiently get an approximation of the optimal adversary.
\setlength\abovedisplayskip{2pt}
\setlength\belowdisplayskip{2pt}
\begin{align}
\label{eq:actor_relax}
\tag{$J$}
  \mathrm{argmax}_{\tilde{s} \in \mathcal{B}_{\epsilon}(s)} J(\tilde{s};\widehat{a}, s) := \norm{{\pi}(\cdot|\tilde{s})-\pi(\cdot|s)}  + \lambda \times \mathsf{CosineSim}\big(\pi(\cdot|\tilde{s})-\pi(\cdot|s),\: \widehat{a} \big) 
\end{align}
where $\mathsf{CosineSim}$ in the second refers to the cosine similarity function; the first term measures how far away the policy is perturbed from the victim policy; $\lambda$ is a hyper-parameter controlling the trade-off between the two terms.
Experimental results show that our \ours is not sensitive to the value of $\lambda$. In our reported results in Section~\ref{sec:exp}, we set $\lambda$ as 1.
Appendix~\ref{app:exp_hyper} shows the evaluation of our algorithm using varying $\lambda$'s.

The procedure of learning the optimal adversary is depicted in Algorithm~\ref{alg:main}, where we simply use the Fast Gradient Sign Method (FGSM)~\citep{ian2015explaining} to approximately solve the actor's objective, although more advanced solvers such as Projected Gradient Decent (PGD) can be applied to further improve the performance.
Experiment results in Section~\ref{sec:exp} verify that the above FGSM-based implementation achieves state-of-the-art attack performance.


\begin{algorithm}[!htp]
\SetAlgoLined
\textbf{Input:} Initialization of director's policy $\nu$; victim policy $\pi$; budget $\epsilon$; start state $s_0$ 
\\
\For{$t=0,1,2,...$}{
	\textit{Director} samples a policy perturbing direction $\widehat{a}_t \sim \nu(\cdot|s_t)$\\
	\If{Victim is deterministic}{
		{\textcolor{blue}{\# for a deterministic victim, $J_D$ is defined in Equation~(\ref{eq:actor_loss_det})}}\\
		\textit{Actor} computes the gradient of its objective $\nabla_{\delta} J_D(s_t+\delta; \widehat{a}_t, s_t )$ \\
	}\Else{
		{\textcolor{blue}{\# for a stochastic victim, $J$ is defined in Equation~(\ref{eq:actor_relax})}}\\
		\textit{Actor} computes the gradient of its objective $\nabla_{\delta} J(s_t+\delta; \widehat{a}_t, s_t )$ \\
	}
	\textit{Actor} sets $\tilde{s}_t = s_t + \epsilon\cdot\text{sign}(\delta)$ \\
	\textit{Victim} takes action $a_t\sim \pi(\cdot|\tilde{s}_t)$, proceeds to {{$s_{t+1}$}}, receives {{$r_t$}}\\
	\textit{Director} saves $(s_t,\widehat{a}_t,-r_t,s_{t+1})$ to its buffer \\	
	\textit{Director} updates its policy $\nu$ using any RL algorithm
}
\caption{\oursfull (\ours) with FGSM}
\label{alg:main}
\end{algorithm}

\textbf{What is the Influence of the Relaxation in~\eqref{eq:actor_relax}?}
First, it is important that the relaxation is only needed for a stochastic victim. For a deterministic victim, which is often the case in practice, the actor solves the original unrelaxed objective. \\
Second, as we will discuss in the next paragraph, the optimality of both \saname and \ours is regarding the formulation. That is, \saname and \ours formulate the optimal attack problem as an MDP whose optimal policy is the optimal adversary. However, in a large-scale task, deep RL algorithms themselves usually do not converge to the globally optimal policy and exploration becomes the main challenge. Thus, when the adversary's MDP is large, the suboptimality caused by the RL solver due to exploration difficulties could be much more severe than the suboptimality caused by the relaxation of the formulation. The comparison between \saname and \ours in our experiments can justify that the size of the adversary MDP has a larger impact than the relaxation of the problem on the final solution found by the attackers.\\
Third, in Appendix~\ref{app:optimal_relax}, we empirically show that with the relaxed objective, \ours can still find the optimal attacker in 3 example environments. 

\textbf{Optimality in Formulation v.s. Approximated Optimality in Practice}\quad
\ours has an optimal formulation, as the optimal solution to its objective (the optimal policy in PAMDP) is always an optimal adversary (Theorem~\ref{thm:optimality}). 
Similarly, the previous attack method \saname has an optimal solution since the optimal policy in the adversary's MDP is also an optimal adversary.
However, in practice where the environments are in a large scale and the number of samples is finite, the optimal policy is not guaranteed to be found by either \ours and \saname with deep RL algorithms. 
Therefore, for practical consideration, our goal is to search for a good solution or approximate the optimal solution using optimization techniques (e.g. actor-critic learning, one-step FGSM attack, Lagrangian relaxation for the stochastic-victim attack).
In experiments (Section~\ref{sec:exp}), we show that our implementation universally finds stronger attackers than prior methods, which verifies the effectiveness of both our theoretical framework and our practical implementation.

\subsection{Variants For Environments with Continuous Action Spaces}
\label{app:algo_cont}

Although the analysis in the main paper focuses on an MDP whose action space is discrete, our algorithm also extends to a continuous action space as justified in our experiments.

\subsubsection{For A Deterministic Victim}
In this case, we can still use the formulation D-PAMDP, but a slightly different actor function
  \setlength\abovedisplayskip{2pt}
  \setlength\belowdisplayskip{2pt}
  \begin{equation}
  \label{eq:actor_cont_det}
  \tag{$G_{CD}$}
    g_D(\widehat{a},s) = \mathrm{argmin}_{\tilde{s}\in \mathcal{B}_{\epsilon}(s)} \| \pi_D(\tilde{s}) - \widehat{a} \|. 
  \end{equation}

\subsubsection{For A Stochastic Victim}
Different from a stochastic victim in a discrete action space whose actions are sampled from a categorical distribution, a stochastic victim in a continuous action space usually follows a parametrized probability distribution with a certain family of distributions, usually Gaussian distributions. 
In this case, the formulation of PAMDP in Definition~\ref{def:adv_mdp} is impractical.
However, since the mean of a Gaussian distribution has the largest probability to be selected, one can still use the formulation in~\eqref{eq:actor_cont_det}, while replacing $\pi_D(\tilde{s})$ with the mean of the output distribution. Then, the director and the actor can collaboratively let the victim output a Gaussian distribution whose mean is the target action.
If higher accuracy is needed, we can use another variant of PAMDP, named Continuous Policy Adversary MDP (C-PAMDP) that can also control the variance of the Gaussian distribution.

\begin{definition} [\textbf{Continuous Policy Adversary MDP (C-PAMDP)}]
\label{def:adv_mdp_cont}
  Given an MDP $\mdp= \langle \states, \actions, \dynamics, \rewards, \gamma \rangle$ where $\actions$ is continuous, a fixed and stochastic victim policy $\pi$, we define a Continuous Policy Adversarial MDP $\widehat{\mdp}_C=\langle \states, \widehat{\actions}_C, \widehat{\dynamics}_C, \widehat{\rewards}_C, \gamma \rangle $,
  where the action space is $\widehat{\actions}_D\!=\actions$,
  and $\forall s,s^\prime \in \states, \forall \widehat{a}\in\actions$,
  \setlength\abovedisplayskip{2pt}
  \setlength\belowdisplayskip{2pt}
  \begin{equation*}
  \widehat{\dynamics}(s'|s,\widehat{a})=\int_{\actions} \pi(a|g(\widehat{a},s)) \dynamics(s'|s,a) \,da, \quad \widehat{\rewards}(s,\widehat{a}) = - \int_{\actions} \pi(a|g(\widehat{a},s))\rewards(s,a) da.
  \end{equation*}
  The actor function $g$ is defined as 
  \setlength\abovedisplayskip{2pt}
  \setlength\belowdisplayskip{2pt}
  \begin{equation}
  \label{eq:actor_cont}
  \tag{$G_C$}
    g(\widehat{a},s) = \mathrm{argmin}_{\tilde{s}\in \mathcal{B}_{\epsilon}(s)} \mathsf{KL}( \pi(\cdot|\tilde{s}) || \mathcal{N}(\widehat{a}, \sigma^2 I_{|\actions|}) ).
  \end{equation}
  where $\sigma$ is a hyper-parameter, and $\mathcal{N}$ denotes a multivariate Gaussian distribution.
\end{definition} 
In short, Equation~\eqref{eq:actor_cont} encourages the victim to output a distribution that is similar to the target distribution.
The hyperparameter $\sigma$ controls the standard deviation of the target distribution.
One can set $\sigma$ to be small in order to let the victim execute the target action $\widehat{a}$ with higher probabilities.


\section{Characterize Optimality of Evasion Attacks}
\label{app:char_optimal}

In this section, we provide a detailed characterization for the optimality of evasion attacks from the perspective of policy perturbation, following Definition~\ref{def:optimal} in Section~\ref{sec:algo}.
Section~\ref{app:existence} establishes the existence of the optimal policy adversary which is defined in Section~\ref{sec:associate}.
Section~\ref{app:optimal} then provides a proof for Theorem~\ref{thm:optimality} that the formulation of \ours is optimal.
We also analyze the optimality of heuristic attacks in Section~\ref{app:optimal_heuristic}.

\subsection{Existence of An Optimal Policy Adversary}
\label{app:existence}

\begin{theorem}[Existence of An Optimal Policy Adversary]
\label{thm:exist_opt}
	Given an MDP $\mdp= \langle \states, \actions, \dynamics, \rewards, \gamma \rangle$, and a fixed stationary policy $\pi$ on $\mdp$, let $\advset$ be a non-empty set of admissible state adversaries and $\pbset$ be the corresponding \pbsetshort, then there exists an optimal policy adversary $\advpiopt \in \pbset$ such that $\advpiopt \in \mathrm{argmin}_{\advpi \in \pbset} V^{\advpi}_{\mdp} (s), \forall s\in\states$.
\end{theorem}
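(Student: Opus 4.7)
The plan is to reduce the problem to a standard MDP existence result by viewing the adversary as a decision maker in a negated-reward MDP, and to leverage the product structure of $\pbset$ to invoke Bellman optimality. First I would observe the crucial factorization: since the admissible state adversary can choose $\sadv(s)$ independently at each state $s \in \states$, and since $\advpi(\cdot|s) = \pi(\cdot|\sadv(s))$ depends only on $\sadv(s)$, the \pbsetshort decomposes as $\pbset = \prod_{s\in\states} \pbset_s$ where $\pbset_s := \{\pi(\cdot|\tilde{s}) : \tilde{s} \in \ball_\epsilon(s)\}$. For each state $s$, the set $\pbset_s$ is compact: it is the continuous image of the compact set $\ball_\epsilon(s)$ under $\pi$ (assumed continuous, e.g., a neural network). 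This product structure means that choosing a policy adversary amounts to choosing a distribution over actions from the set $\pbset_s$ independently at each state, analogous to an MDP with state-dependent restricted action (distribution) sets.

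Next I would consider the modified control problem on $\mdp$ where the reward is negated and the admissible ``actions'' at state $s$ are constrained to lie in $\pbset_s$. Define the minimizing Bellman operator $\mathcal{T}: \mathbb{R}^{|\states|} \to \mathbb{R}^{|\states|}$ by
\begin{equation*}
(\mathcal{T} V)(s) \;=\; \min_{p \in \pbset_s} \sum_{a \in \actions} p(a)\Bigl[ R(s,a) + \gamma \sum_{s'\in\states} P(s'|s,a)\, V(s') \Bigr].
\end{equation*}
The minimum inside is attained for each $s$ because the bracketed quantity is linear (hence continuous) in $p$, and $\pbset_s$ is compact. By the standard argument (identical to the classical Bellman operator proof in Puterman's book), $\mathcal{T}$ is a $\gamma$-contraction in the sup norm on $\mathbb{R}^{|\states|}$, so it has a unique fixed point $V^*$.

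Finally I would construct $\advpiopt$ by selecting, for each $s$, a minimizer $p^*_s \in \pbset_s$ of the bracket in the definition of $(\mathcal{T} V^*)(s)$ and setting $\advpiopt(\cdot|s) := p^*_s$. By construction $\advpiopt \in \pbset$, and one checks that $V^{\advpiopt} = V^*$ by substituting into the Bellman equation and using the fixed-point property. Standard MDP arguments (greedy policies of the optimal value function are uniformly optimal for all starting states) then yield $V^{\advpiopt}(s) \le V^{\advpi}(s)$ for every $\advpi \in \pbset$ and every $s\in\states$, which is exactly the claim.

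The main obstacle, and the only point beyond routine MDP manipulations, is verifying that the state-wise minimum in the definition of $\mathcal{T}$ is well defined and that a measurable selection of minimizers exists so that they can be assembled into a single policy $\advpiopt \in \pbset$. Compactness of each $\pbset_s$ (hence existence of a minimizer pointwise) together with the product structure of $\pbset$ (which removes any inter-state coupling constraints) handles this cleanly in the finite-state setting. An alternative route that avoids the Bellman contraction machinery altogether is to directly invoke Theorem~\ref{thm:boundary}: since $\omboundary\pbset$ is contained in the compact set $\pbset$ and $\pvfunc$ is continuous, an argument analogous to the polytope-style reasoning in Appendix~\ref{app:polytope} shows that the component-wise infimum of $V^{\advpi}$ over $\pbset$ is attained by a single policy; however, the Bellman-operator route above is more direct and self-contained.
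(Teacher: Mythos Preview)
Your proposal is correct and follows essentially the same approach as the paper: both reduce the adversary's problem to a standard MDP optimality question, the paper by constructing an auxiliary ``Policy Perturbation MDP'' with $-\infty$ reward on inadmissible policy perturbations and citing Puterman's existence theorem, you by defining the restricted minimizing Bellman operator over the compact sets $\pbset_s$ and invoking the contraction/fixed-point argument directly. Your explicit use of the product decomposition $\pbset = \prod_{s}\pbset_s$ and compactness of each $\pbset_s$ is a slightly cleaner substitute for the paper's $-\infty$-reward device, but the underlying argument is the same.
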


\begin{proof}
	We prove Theorem~\ref{thm:exist_opt} by constructing a new MDP corresponding to the original MDP $\mdp$ and the victim $\pi$. 

	\begin{definition}[\textbf{Policy Perturbation MDP}]
	\label{def:perturb_mdp}
		For a given MDP $\mdp$, a fixed stochastic victim policy $\pi$, and an admissible state adversary set $\advset$,
		define a policy perturbation MDP as $\mdp\typeone = \langle \states, \actions\typeone, \dynamics\typeone, \rewards\typeone, \gamma \rangle$, where $\actions\typeone = \prob(\actions)$, and $\forall s\in\states, a\typeone \in \actions\typeone$, 
		\begin{align}
		\rewards\typeone (s,a\typeone) &:= 
		\{ \begin{array}{ll}
		      - \sum_{a\in\actions} a\typeone (a|s) \rewards(s,a) & \text{if } \exists \adv \in \advset \text{ s.t. } a\typeone(\cdot|s) = \pi(\cdot|\adv(s)) \\
		      - \infty & \text{otherwise}
		    \end{array} \label{eq:typeone_r} \\
		\dynamics\typeone (s^\prime |s,a\typeone) &:= \sum_{a\in\actions} a\typeone (a|s) \dynamics(s^\prime | s,a)
		\end{align}
	\end{definition}

	Then we can prove Theorem~\ref{thm:exist_opt} by proving the following lemma.
	\begin{lemma}
	\label{lem:perturb_optimal}
		The optimal policy in $\mdp\typeone$ is an optimal policy adversary for $\pi$ in $\mdp$.
	\end{lemma}


	Let $N\typeone$ denote the set of deterministic policies in $\mdp\typeone$.
	According to the traditional MDP theory~\citep{puterman2014markov}, there exists a deterministic policy that is optimal in $\mdp\typeone$. Note that $\advset$ is non-empty, so there exists at least one policy in $\mdp\typeone$ with value $\geq -\infty$, and then the optimal policy should have value $\geq -\infty$.
	Denote this optimal and deterministic policy as $\nu^*\typeone \in N\typeone$. 
	Then we write the Bellman equation of $\nu^*\typeone$, i.e.,
	\begin{equation}
	\label{eq:typeone_mid}
	\begin{aligned}
		V\typeone^{\nu^*\typeone}(s) 
		&= \max_{\nu\typeone \in N\typeone} \rewards\typeone(s, \nu\typeone(s)) + \gamma \sum_{s^\prime\in\states} \dynamics\typeone(s^\prime | s, \nu\typeone(s)) V\typeone^{\nu\typeone}(s^\prime) \\
		&= \max_{\nu\typeone \in N\typeone} \left[ - \sum_{a\in\actions} \nu\typeone (a|s) \rewards(s,a) + \gamma \sum_{s^\prime\in\states} \sum_{a\in\actions} \nu\typeone (a|s) \dynamics(s^\prime|s,a) V\typeone^{\nu\typeone}(s^\prime) \right] \\
		&= \max_{\nu\typeone \in N\typeone}  \sum_{a\in\actions} \nu\typeone (a|s) \left[ - \rewards(s,a) + \gamma \sum_{s^\prime\in\states} \dynamics(s^\prime|s,a) V\typeone^{\nu\typeone}(s^\prime) \right] \\
	\end{aligned}
	\end{equation}

	Note that $\nu^*\typeone(s)$ is a distribution on action space, $\nu^*\typeone(a|s)$ is the probability of $a$ given by distribution $\nu^*(s)$. 

	Multiply both sides of Equation~(\ref{eq:typeone_mid}) by $-1$, and we obtain

	\begin{equation}
	\label{eq:typeone_bellman}
		- V\typeone^{\nu^*\typeone}(s) = \min_{\nu \in N\typeone}  \sum_{a\in\actions} \nu\typeone(s) (a|s) \left[ \rewards(s,a) + \gamma \sum_{s^\prime\in\states} \dynamics(s^\prime|s,a)  \big(- V\typeone^{\nu\typeone}(s^\prime)\big) \right]
	\end{equation}

	In the original MDP $\mdp$, an optimal policy adversary (if exists) $\advpiopt$ for $\pi$ should satisfy

	\begin{equation}
	\label{eq:adv_bellman}
		V^{\advpiopt}(s) = \min_{\advpi \in \pbset} \sum_{a\in\actions} \advpi(a|s) \left[ \rewards(s,a) + \gamma \sum_{s^\prime\in\states} \dynamics(s^\prime|s,a)  V^{\advpi}(s^\prime) \right]
	\end{equation}

	
	By comparing Equation~(\ref{eq:typeone_bellman}) and Equation~(\ref{eq:adv_bellman}) 
	we get the conclusion that $\nu^*\typeone$ is an optimal policy adversary for $\pi$ in $\mdp$.

\end{proof}

\subsection{Proof of Theorem~\ref{thm:optimality}: Optimality of Our \ours}
\label{app:optimal}
In this section, we provide theoretical proof of the optimality of our proposed evasion RL algorithm \ours.

\subsubsection{Optimality of \ours for A Stochastic Victim}
\label{app:optimal_stoc}

We first build a connection between the PAMDP $\widehat{\mdp}$ defined in Definition~\ref{def:adv_mdp} (Section~\ref{sec:algo}) and the policy perturbation MDP defined in Definition~\ref{def:perturb_mdp} (Appendix~\ref{app:existence}).

A deterministic policy $\nu$ in the PAMDP $\widehat{\mdp}$ can \textbf{induce} a policy $\hat{\nu}\typeone$ in $\mdp\typeone$ in the following way:
$\widehat{\nu}\typeone(s) = \pi(\cdot|g(\nu(s),s)), \forall s\in\states$.
More importantly, the values of $\nu$ and $\widehat{\nu}\typeone$ in $\widehat{\mdp}$ and $\mdp\typeone$ are equal because of the formulations of the two MDPs, i.e., $\widehat{V}^\nu = V\typeone^{\widehat{\nu}\typeone}$, where $\widehat{V}$ and $V\typeone$ denote the value functions in $\widehat{\mdp}$ and $V\typeone$ respectively.

Proposition~\ref{lem:opt_induce} below builds the connection of the optimality between the policies in these two MDPs.

\begin{proposition}
\label{lem:opt_induce}
	An optimal policy in $\widehat{\mdp}$ induces an optimal policy in $\mdp\typeone$.
\end{proposition}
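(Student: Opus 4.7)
The plan is to prove Proposition~\ref{lem:opt_induce} by sandwiching the optimal values in $\widehat{\mdp}$ and $\mdp\typeone$ and using the value preservation stated just before the claim. Specifically, I would show $\max_\nu \widehat{V}^\nu(s) = \max_{\nu\typeone} V\typeone^{\nu\typeone}(s)$ for every $s$, so that any $\nu^*$ achieving the left-hand-side max induces (via $\widehat{\nu}\typeone(s)=\pi(\cdot\mid g(\nu^*(s),s))$) a policy achieving the right-hand-side max.

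The easy inequality is $\max_\nu \widehat{V}^\nu \leq \max_{\nu\typeone} V\typeone^{\nu\typeone}$. For any deterministic $\nu$ in $\widehat{\mdp}$, the induced $\widehat{\nu}\typeone$ is admissible in $\mdp\typeone$ (the $-\infty$ branch of~\eqref{eq:typeone_r} is avoided since $\widehat{\nu}\typeone(\cdot\mid s)=\pi(\cdot\mid g(\nu(s),s))$ with $g(\nu(s),s)\in B_\epsilon(s)$), and a side-by-side comparison of $\widehat{\dynamics},\widehat{\rewards}$ in Definition~\ref{def:adv_mdp} against $\dynamics\typeone,\rewards\typeone$ in Definition~\ref{def:perturb_mdp} shows that the Bellman operators coincide, yielding $\widehat{V}^\nu = V\typeone^{\widehat{\nu}\typeone}$. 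Taking the max over $\nu$ gives the bound.

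The substantive direction is the reverse inequality, and this is where I expect the main difficulty. Combining Lemma~\ref{lem:perturb_optimal} with Theorem~\ref{thm:boundary}, there exists an optimal policy $\nu\typeone^*$ in $\mdp\typeone$ whose corresponding perturbed victim policy $\advpiopt\in\pbset$ lies on the outermost boundary $\omboundary\pbset$. The goal is to exhibit a director policy $\nu^\sharp$ in $\widehat{\mdp}$ whose induced policy equals $\advpiopt$. For each $s$, set $\widehat{a}(s):=\advpiopt(\cdot\mid s)-\pi(\cdot\mid s)$; this vector sums to zero and lies in $[-1,1]^{|\actions|}$, so $\widehat{a}(s)\in\widehat{\actions}$. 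The key step is to argue that the actor's optimization~\eqref{eq:actor} applied to $(\widehat{a}(s),s)$ returns a state $\tilde{s}$ with $\pi(\cdot\mid\tilde{s})=\advpiopt(\cdot\mid s)$: by the outermost-boundary definition, no $\hat\pi\in\pbset$ extends strictly farther from $\pi(\cdot\mid s)$ along the ray in direction $\widehat{a}(s)$, which is precisely the maximum-norm, aligned-direction constraint that~\eqref{eq:actor} solves. Hence the actor lands exactly on $\advpiopt(\cdot\mid s)$, so the induced policy of $\nu^\sharp$ equals $\advpiopt$ and attains value $V\typeone^{\nu\typeone^*}$.

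The main obstacle is this last correspondence: one must verify that the argmax in~\eqref{eq:actor} is achieved at a state realizing $\advpiopt(\cdot\mid s)$ rather than at some other state along the same direction, and that ties (if the argmax is not a singleton) can be broken so that the induced policy exactly matches $\advpiopt$. The outermost-boundary characterization in Theorem~\ref{thm:boundary} rules out any strictly-farther candidate, and any equally-far candidate gives by Lemma~\ref{lem:monotone} the same value, so the constructed $\nu^\sharp$ satisfies $\widehat{V}^{\nu^\sharp}=V\typeone^{\nu\typeone^*}$. Chaining the two inequalities gives $\widehat{V}^{\nu^*}=V\typeone^{\widehat{\nu^*}\typeone}=\max_{\nu\typeone}V\typeone^{\nu\typeone}$, proving that $\widehat{\nu^*}\typeone$ is optimal in $\mdp\typeone$.
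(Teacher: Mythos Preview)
Your proposal is correct and takes essentially the same approach as the paper: both use Theorem~\ref{thm:boundary} to locate an optimal policy adversary $\advpiopt$ on $\omboundary\pbset$, construct the director action $\widehat{a}(s)=\advpiopt(\cdot\mid s)-\pi(\cdot\mid s)$, and invoke the actor optimization~\eqref{eq:actor} together with the outermost-boundary property to conclude that the induced policy matches $\advpiopt$. The only difference is cosmetic---the paper packages the argument as a proof by contradiction while you write it as a direct sandwich on the optimal values---and your tie-breaking appeal to Lemma~\ref{lem:monotone} is unnecessary (two points on the same ray at the same maximal norm coincide as distributions), but this does not affect correctness.
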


\begin{proof}[Proof of Proposition~\ref{lem:opt_induce}]
	Let $\nu^*$ be an deterministic optimal policy in $\widehat{\mdp}$, and it induces a policy in $\mdp\typeone$, namely $\widehat{\nu}\typeone$. 

	Let us assume $\widehat{\nu}\typeone$ is not an optimal policy in $\mdp\typeone$, hence there exists a policy $\nu^*\typeone$ in $\mdp\typeone$ s.t. $V\typeone^{\nu^*\typeone}(s) > V\typeone^{\widehat{\nu}\typeone}(s)$ for at least one $s\in\states$. And based on Theorem~\ref{thm:boundary}, we are able to find such a $\nu^*\typeone$ whose corresponding policy perturbation is on the outermost boundary of $\ball(\pi)$, i.e., $\nu^* \in \omboundary \pbset$.

	Then we can construct a policy $\nu^\prime$ in $\widehat{\mdp}$ such that $\nu^\prime(s) = \nu^*\typeone(s) - \pi(s), \forall s\in\states$. And based on Equation~\eqref{eq:actor}, $\pi(\cdot|g(\nu^\prime(s),s))$ is in $\omboundary \ball(\pi(s))$ for all $s\in\states$. According to the definition of $\omboundary$, if two policy perturbations perturb $\pi$ in the same direction and are both on the outermost boundary, then they are equal.
	Thus, we can conclude that $\pi(g(\nu^\prime(s),s)) = \nu^*\typeone(s), \forall s\in\states$. Then we obtain $\widehat{V}^{\nu^\prime}(s)=V\typeone^{\nu^*\typeone}(s), \forall s\in\states$.

	Now we have conditions: \\
	(1) $\widehat{V}^{\nu^*}(s)=V\typeone^{\widehat{\nu}\typeone}(s), \forall s\in\states$; \\
	(2) $V\typeone^{\nu^*\typeone}(s) > V\typeone^{\widehat{\nu}\typeone}(s)$ for at least one $s\in\states$; \\
	(3) $\exists \nu^\prime$ such that $\widehat{V}^{\nu^\prime}(s)=V\typeone^{\nu^*\typeone}(s), \forall s\in\states$.

	From (1), (2) and (3), we can conclude that $\widehat{V}^{\nu^\prime}(s) > \widehat{V}^{\nu^*}(s)$ for at least one $s\in\states$, which conflicts with the assumption that $\nu^*$ is optimal in $\widehat{\mdp}$. Therefore, Proposition~\ref{lem:opt_induce} is proven.

\end{proof}

Proposition~\ref{lem:opt_induce} and Lemma~\ref{lem:perturb_optimal} together justifies that the optimal policy of $\widehat{\mdp}$, namely $\nu^*$, induces an optimal policy adversary for $\pi$ in the original $\mdp$. 
Then, if the director learns the optimal policy in $\widehat{\mdp}$, then it collaborates with the actor and generates the optimal state adversary $\adv^*$ by $\adv^*(s) = g(\nu^*(s),s), \forall s\in\states$.


\subsubsection{Optimality of Our \ours for A Deterministic Victim}
\label{app:optimal_det}

In this section, we show that the optimal policy in D-PAMDP (the deterministic variant of PAMDP defined in Appendix~\ref{app:algo_det}) also induces an optimal policy adversary in the original environment.

Let $\pi_D$ be a deterministic policy reduced from a stochastic policy $\pi$, i.e., $$\pi_D(s) := \mathrm{argmax}_{a\in\actions} \pi(a|s), \forall s\in\states.$$
Note that in this case, the \pbsetshort $\pbset$ is not connected as it contains only deterministic policies. 
Therefore, we re-formulate the policy perturbation MDP introduced in Appendix~\ref{app:existence} with a deterministic victim as below:

\begin{definition}[\textbf{Deterministic Policy Perturbation MDP}]
	\label{def:perturb_mdp_det}
		For a given MDP $\mdp$, a fixed deterministic victim policy $\pi$, and an admissible adversary set $\advset$,
		define a deterministic policy perturbation MDP as $\mdp\typetwo = \langle \states, \actions\typetwo, \dynamics\typetwo, \rewards\typetwo, \gamma \rangle$, where $\actions\typetwo = \actions$, and $\forall s\in\states, a\typetwo \in \actions\typetwo$, 
		\begin{align}
		\rewards\typetwo (s,a\typetwo) &:= 
		\{ \begin{array}{ll}
		      - \rewards(s,a\typetwo) & \text{if } \exists \adv \in \advset \text{ s.t. } a\typetwo(s) = \pi_D(\adv(s)) \\
		      - \infty & \text{otherwise}
		    \end{array} \label{eq:typetwo_r} \\
		\dynamics\typetwo (s^\prime |s,a\typetwo) &:= \dynamics(s,a\typetwo)
		\end{align}
	\end{definition}

$\mdp\typetwo$ can be viewed as a special case of $\mdp\typeone$ where only deterministic policies have $\geq -\infty$ values. Therefore Theorem~\ref{thm:exist_opt} and Lemma~\ref{lem:perturb_optimal} also hold for deterministic victims.

Next we will show that an optimal policy in $\widehat{\mdp}_D$ induces an optimal policy in $\mdp\typetwo$.
\begin{proposition}
\label{lem:opt_induce_det}
	An optimal policy in $\widehat{\mdp}_D$ induces an optimal policy in $\mdp\typetwo$.
\end{proposition}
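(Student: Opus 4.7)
The plan is to mirror the structure of the proof of Proposition~\ref{lem:opt_induce}, but using the deterministic actor function $g_D$ from \eqref{eq:actor_det} in place of $g$, and the deterministic policy perturbation MDP $\mdp\typetwo$ in place of $\mdp\typeone$. The argument will be by contradiction: assume the policy induced by an optimal $\nu^*_D$ in $\widehat{\mdp}_D$ is suboptimal in $\mdp\typetwo$, then construct a strictly better policy in $\widehat{\mdp}_D$ to contradict the optimality of $\nu^*_D$.

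First, I would formalize the ``induced policy'' map: any deterministic policy $\nu$ in $\widehat{\mdp}_D$ induces a policy $\widehat{\nu}\typetwo$ in $\mdp\typetwo$ by $\widehat{\nu}\typetwo(s) := \pi_D(g_D(\nu(s), s))$ for all $s \in \states$. By Definitions~\ref{def:adv_mdp_det} and~\ref{def:perturb_mdp_det}, the transition and reward functions of $\widehat{\mdp}_D$ at $(s,\nu(s))$ exactly coincide with those of $\mdp\typetwo$ at $(s, \widehat{\nu}\typetwo(s))$ whenever $\widehat{\nu}\typetwo$ receives finite reward, so $\widehat{V}_D^{\nu}(s) = V\typetwo^{\widehat{\nu}\typetwo}(s)$ for every $s$.

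Next comes the main technical step, which characterizes the behavior of $g_D$: for any target $\widehat{a} \in \actions$ and any $s \in \states$, if there exists an admissible adversary $h \in \advset$ with $\pi_D(h(s)) = \widehat{a}$, then $\pi_D(g_D(\widehat{a}, s)) = \widehat{a}$ as well. This holds because the objective $\pi(\widehat{a}|\tilde s) - \max_{a \neq \widehat{a}} \pi(a|\tilde s)$ in~\eqref{eq:actor_det} is strictly positive at such an $h(s)$, so its maximizer $g_D(\widehat{a},s)$ over $\ball_\epsilon(s)$ must also satisfy this strict inequality, i.e., the victim outputs $\widehat{a}$. I expect this step to be the main obstacle, since one must be careful about the case $\pi_D$ uses tie-breaking and about targets $\widehat{a}$ that are not reachable by any admissible adversary. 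For the latter, the corresponding policy in $\mdp\typetwo$ has reward $-\infty$ by \eqref{eq:typetwo_r}, so no optimal $\nu^*\typetwo$ would ever choose such an unreachable action at any visited state (because $\advset$ is non-empty, some finite-value policy exists, and the Bellman optimality equation for $\mdp\typetwo$ guarantees the optimal action is finite-value).

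Finally, I would carry out the contradiction argument. Let $\nu^*_D$ be an optimal deterministic policy in $\widehat{\mdp}_D$ and let $\widehat{\nu}\typetwo$ be its induced policy in $\mdp\typetwo$. Suppose $\widehat{\nu}\typetwo$ is not optimal; then by the existence of a deterministic optimal policy in the finite MDP $\mdp\typetwo$ (standard MDP theory), there is a deterministic $\nu^*\typetwo$ with $V\typetwo^{\nu^*\typetwo}(s) > V\typetwo^{\widehat{\nu}\typetwo}(s)$ for at least one $s$. Since $V\typetwo^{\nu^*\typetwo}$ is finite, for every $s$ the action $\nu^*\typetwo(s)$ is induced by some admissible adversary. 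Define $\nu^\prime$ on $\widehat{\mdp}_D$ by $\nu^\prime(s) := \nu^*\typetwo(s)$; by the characterization of $g_D$ above, the policy induced by $\nu^\prime$ is exactly $\nu^*\typetwo$, so $\widehat{V}_D^{\nu^\prime} = V\typetwo^{\nu^*\typetwo}$. Combined with $\widehat{V}_D^{\nu^*_D} = V\typetwo^{\widehat{\nu}\typetwo}$, this gives $\widehat{V}_D^{\nu^\prime}(s) > \widehat{V}_D^{\nu^*_D}(s)$ at some state, contradicting optimality of $\nu^*_D$. Thus $\widehat{\nu}\typetwo$ is optimal in $\mdp\typetwo$, as claimed.
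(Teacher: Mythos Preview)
Your proposal is correct and follows essentially the same argument as the paper: both prove the result by contradiction, construct $\nu'(s) := \nu^*\typetwo(s)$, and use the key observation that if some admissible perturbation achieves target action $\widehat{a}$ at $s$ then $g_D$ does too (via the strict positivity of the actor objective), yielding $\widehat{V}_D^{\nu'} = V\typetwo^{\nu^*\typetwo}$ and hence the contradiction. Your write-up is in fact slightly more careful than the paper's about the unreachable-target ($-\infty$ reward) and tie-breaking edge cases.
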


\begin{proof}[Proof of Proposition~\ref{lem:opt_induce_det}]
	We will prove Proposition~\ref{lem:opt_induce_det} by contradiction.
	Let $\nu^*$ be an optimal policy in $\widehat{\mdp}_D$, and it induces a policy in $\mdp\typetwo$, namely $\widehat{\nu}\typetwo$. 

	Let us assume $\widehat{\nu}\typetwo$ is not an optimal policy in $\mdp\typetwo$, hence there exists a deterministic policy $\nu^*\typetwo$ in $\mdp\typetwo$ s.t. $V\typetwo^{\nu^*\typetwo}(s) > V\typetwo^{\widehat{\nu}\typetwo}(s)$ for at least one $s\in\states$. Without loss of generality, suppose $V\typetwo^{\nu^*\typetwo}(s_0) > V\typetwo^{\widehat{\nu}\typetwo}(s_0)$.

	Next we construct another policy $\nu^\prime$ in $\widehat{\mdp}_D$ by setting 
	$\nu^\prime(s) = \nu^*\typetwo(s), \forall s\in\states$. Given that $\nu^*\typetwo$ is deterministic, $\nu^\prime$ is also a deterministic policy. So we use $\nu^*\typetwo(s)$ and $\nu^\prime(s)$ to denote the action selected by $\nu^*\typetwo$ and $\nu^\prime$ respectively at state $s$.

	For an arbitrary state $s_i$, let $a_i := \nu^*\typetwo(s_i)$. Since $\nu^*\typetwo$ is the optimal policy in $\mdp\typetwo$, we get that there exists a state adversary $\adv\in\advset$ such that $\pi_D(h(s_i))=a_i$, or equivalently, there exists a state $\tilde{s}_i \in \ball_\epsilon(s_i)$ such that $\mathrm{argmax}_{a\in\actions}\pi(\tilde{s}_i)=a_i$. Then, the solution to the actor's optimization problem~\eqref{eq:actor_det} given direction $a_i$ and state $s_i$, denoted as $\tilde{s}^*$, satisfies
	\begin{equation}
		\tilde{s}^* = \mathrm{argmax}_{s^\prime\in B_{\epsilon}(s)} \big( \pi(\widehat{a}|s^\prime) - \mathrm{argmax_{a\in\actions, a \neq \widehat{a}}} \pi(a|s^\prime) \big)
	\end{equation}
	and we can get 
	\begin{align}
		\pi(\widehat{a}|\tilde{s}^*) - \mathrm{argmax_{a\in\actions, a \neq \widehat{a}}} \pi(a|\tilde{s}^*) 
		\geq  \pi(\widehat{a}|\tilde{s}_i) - \mathrm{argmax_{a\in\actions, a \neq \widehat{a}}} \pi(a|\tilde{s}_i) 
		>  0
	\end{align}
	Given that $\mathrm{argmax}_{a\in\actions} \pi(a_i|\tilde{s}_i) = a_i$, we obtain $\mathrm{argmax}_{a\in\actions} \pi(a_i|\tilde{s}^*) = a_i$, and hence $\pi_D(g_D(a_i,s_i)) = a_i$.
	Since this relation holds for an arbitrary state $s$, we can get 
	\begin{equation}
		\pi_D(g_D(\nu^\prime(s),s)) = \pi_D(g_D(\nu^\prime(s),s)) = \nu^\prime(s), \forall s\in\states
	\end{equation}

	Also, we have $\forall s\in\states$
	\begin{align}
		\widehat{V}_D^{\nu^\prime}(s) &= \widehat{\rewards}_D(s, \nu^\prime(s)) + \sum_{s^\prime\in\states} \widehat{\dynamics}_D(s^\prime|s,\nu^\prime(s)) \widehat{V}_D^{\nu^\prime}(s^\prime) \\
		V\typetwo^{\nu^*\typetwo}(s) &= \rewards\typetwo(s, \nu^*\typetwo(s)) + \sum_{s^\prime\in\states} \dynamics\typetwo(s^\prime|s,\nu^*\typetwo(s)) V\typetwo^{\nu^*\typetwo}((s^\prime)
	\end{align}

	Therefore, $\widehat{V}_D^{\nu^\prime}(s) = V\typetwo^{\nu^*\typetwo}(s), \forall s\in\states$.

	Then we have
	\begin{equation}
		\widehat{V}_D^{\nu^\prime}(s_0) \leq \widehat{V}^{\nu^*}(s_0) = V\typetwo^{\widehat{\nu}\typetwo}(s_0) < V\typetwo^{\nu^*\typetwo}(s_0) = \widehat{V}_D^{\nu^\prime}(s_0) 
	\end{equation}
	which gives $\widehat{V}_D^{\nu^\prime}(s_0) < \widehat{V}_D^{\nu^\prime}(s_0)$, so there is a contradiction. 

\end{proof}

Combining the results of Proposition~\ref{lem:opt_induce_det} and Lemma~\ref{lem:perturb_optimal} , for a deterministic victim, the optimal policy in D-PAMDP gives an optimal adversary for the victim.

\subsection{Optimality of Heuristic-based Attacks}
\label{app:optimal_heuristic}



There are many existing methods of finding adversarial state perturbations for a fixed RL policy, most of which are solving some optimization problems defined by heuristics. 
Although these methods are empirically shown to be effective in many environments, it is not clear how strong these adversaries are in general. 
In this section, we carefully summarize and categorize existing heuristic attack methods into 4 types, and then characterize their optimality in theory.

\subsubsection{\textit{TYPE \RNum{1}} - Minimize The Best (\minbest)}
A common idea of evasion attacks in supervised learning is to reduce the probability that the learner selects the ``correct answer''~\cite{ian2015explaining}. 
Prior works~\cite{huang2017adversarial,kos2017delving,ezgi2020nesterov} apply a similar idea to craft adversarial attacks in RL, where the objective is to minimize the probability of selecting the ``best'' action, i.e.,
\setlength\abovedisplayskip{2pt}
\setlength\belowdisplayskip{2pt}
\begin{equation}
  \label{eq:minbest}
  \tag{\RNum{1}}
  \advminbest \in \mathrm{argmin}_{\adv \in \advset} \advpi(a^+|s), \forall s\in\states
\end{equation}
where $a^+$ is the ``best'' action to select at state $s$. Huang et al.\cite{huang2017adversarial} define $a^+$ as $\mathrm{argmax}_{a\in\actions} Q^\pi(s,a)$ for DQN, or $\mathrm{argmax}_{a\in\actions} \pi(a|s)$ for TRPO and A3C with a stochastic $\pi$. Since the agent's policy $\pi$ is usually well-trained in the original MDP, $a^+$ can be viewed as (approximately) the action taken by an optimal deterministic policy $\pi^*(s)$.


\begin{lemma}[\textbf{Optimality of \minbest}]
\label{lem:minbest}
  Denote the set of optimal solutions to objective~\eqref{eq:minbest} as $\advsetminbest$.
  There exist an MDP $\mdp$ and an agent policy $\pi$, such that $\advsetminbest$ does not contain an optimal adversary $\adv^*$, i.e., $\advsetminbest \cap \advset^* = \emptyset$.
\end{lemma}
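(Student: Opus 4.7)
The plan is to exhibit a concrete small MDP on which every minimizer of the \minbest objective~\eqref{eq:minbest} strictly underperforms some admissible adversary, so the two sets are not only unequal but disjoint. This mirrors the ``myopic adversary'' intuition in Figure~\ref{fig:myopic}: \minbest only suppresses one action and is indifferent to how the remaining mass is redistributed, whereas the optimal adversary exploits that redistribution to funnel the agent toward a trap.

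First I would construct a three-action, few-state MDP $\mdp$ together with a victim policy $\pi$ of the following flavor. Take a start state $s_0$ with actions $\{a^+, a_1, a_2\}$, where $a^+$ transitions to a moderately good absorbing state, $a_1$ transitions to a neutral absorbing state, and $a_2$ transitions to a ``trap'' absorbing state with a large negative long-run value (so $V^{\pi}$ conditional on each initial action satisfies $V(a^+)>V(a_1)\gg V(a_2)$). Choose $\pi(\cdot|s_0)$ to put most mass on $a^+$ and smaller nonzero mass on $a_1$ and $a_2$. Pick the attack budget $\epsilon$ and design $\pi$ as a continuous function on the state space so that $\pbset\cap\Pi_{s_0}$ contains at least two distinct policies $\tilde\pi_1$ and $\tilde\pi_2$ that both drive $\tilde\pi(a^+|s_0)$ down to the same minimum value $p^\star$ on the boundary of $\pbset$, but that differ in how the remaining $1-p^\star$ is split between $a_1$ and $a_2$. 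Concretely, $\tilde\pi_1$ puts essentially all remaining mass on $a_1$ while $\tilde\pi_2$ puts essentially all remaining mass on $a_2$. This is easy to arrange by taking two candidate perturbed states $\tilde s_1,\tilde s_2\in\ball_\epsilon(s_0)$ at which the network outputs the two different distributions.

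Next I would verify three things by direct computation: (i) both $\tilde\pi_1$ and $\tilde\pi_2$ minimize $\tilde\pi(a^+|s_0)$ subject to admissibility, so both belong to $\advset^{\mathrm{I}}$; (ii) the value of the induced adversary using $\tilde\pi_1$ equals $p^\star V(a^+)+(1-p^\star)V(a_1)$, which is strictly larger than the value $p^\star V(a^+)+(1-p^\star)V(a_2)$ induced by $\tilde\pi_2$; (iii) consequently every optimal adversary in $\advset^*$ must yield the lower value, and thus must coincide with $\tilde\pi_2$ on $s_0$ (up to policy equivalence), while $\tilde\pi_1\in\advset^{\mathrm{I}}$ does not. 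The crucial point is that the \minbest objective is completely insensitive to how the displaced probability mass is reallocated among the non-best actions, so as long as at least one element of its argmin set reallocates mass to a suboptimal-but-not-worst action, that element fails to be an optimal adversary.

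To finish the argument I would need to rule out the possibility that $\advset^{\mathrm{I}}$ happens to accidentally coincide with $\advset^*$, i.e., show $\advset^{\mathrm{I}}\cap\advset^*=\emptyset$ rather than merely $\advset^{\mathrm{I}}\not\subseteq\advset^*$. By the construction above, since there are admissible states $\tilde s_1$ in the $\epsilon$-ball witnessing the argmin of $\pi(a^+|\cdot)$ but whose induced value is strictly greater than the value induced by $\tilde s_2$, the instance suffices to exhibit $\advsetminbest\not\subseteq\advset^*$; strengthening to $\advsetminbest\cap\advset^*=\emptyset$ can be achieved by making $p^\star$ the \emph{unique} minimizer of $\pi(a^+|\cdot)$ in $\ball_\epsilon(s_0)$ and ensuring that the optimal adversary in $\advset^*$ does not attain this value of $p^\star$ (e.g., by making $V(a_2)$ negative enough that $\advset^*$ strictly prefers a perturbation pushing more mass to $a_2$ even at the cost of leaving $\pi(a^+|s_0)$ above $p^\star$). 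The main obstacle I anticipate is engineering the neural-network-style continuous $\pi$ and the budget $\epsilon$ so that all the required perturbed distributions are simultaneously realizable in $\pbset$, while keeping the value computations transparent; a clean way to sidestep this is to describe $\pi$ directly as a mapping $s\mapsto\Delta(\actions)$ on a finite state space of size three (the original state plus the two perturbation targets) and take $\ell_\infty$ balls in a suitable embedding, which avoids any appeal to specific network architectures.
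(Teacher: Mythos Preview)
Your proposal is correct and sound, but it takes a genuinely different route from the paper's own construction. The paper uses a \emph{two-action}, two-step MDP: at the root state $s_1$ the best action $a_1$ leads to an intermediate state $s_2$, and the reward constraints are arranged so that \emph{after} the adversary attacks $s_2$ (reducing the value of the $s_2$-subtree), going through $s_2$ becomes strictly worse than taking $a_2$ at $s_1$. MinBest, which targets the best action according to the \emph{unperturbed} $Q^\pi$, still suppresses $a_1$ at $s_1$; the optimal adversary does the opposite and \emph{increases} the probability of $a_1$ to funnel the agent into the now-damaged subtree. Because there are only two actions, the MinBest solution at each state is automatically unique, so disjointness $\advsetminbest\cap\advset^*=\emptyset$ follows immediately once a single admissible adversary is shown to beat it.

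Your three-action, single-step construction instead exploits that MinBest is indifferent to how mass is redistributed among the non-best actions. This is conceptually simpler (no sequential reasoning or stale-$Q$ argument is needed), but, as you correctly note, it initially only yields $\advsetminbest\not\subseteq\advset^*$; getting full disjointness requires the extra engineering you sketch---forcing the argmin of $\pi(a^+|\cdot)$ to be unique and to redistribute mass to the ``wrong'' non-best action. That strengthening is fine, though it is a little more delicate than the paper's two-action trick, which gets uniqueness of MinBest for free. In short: the paper's version highlights the \emph{myopic} failure mode (wrong target action due to outdated $Q$-values across steps), while yours highlights the \emph{underspecified-objective} failure mode (right target action, wrong redistribution within a step). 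Both are valid witnesses to the lemma.
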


\begin{proof}[Proof of Lemma~\ref{lem:minbest}]
We prove this lemma by constructing the following MDP such that for any victim policy, there exists a reward configuration in which \minbest attacker is not optimal. \newline
\begin{figure}[!h]
\centering
  \includegraphics[width=0.3\textwidth]{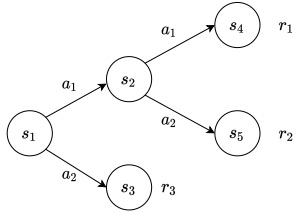}
\caption{A simple MDP where \minbest Attacker cannot find the optimal adversary for a given victim policy.}
\label{fig:minbest_diagram}
\end{figure}\newline
Here, let $r_1=r(s_4|s_2, a_1), r_2=r(s_5|s_2, a_2), r_3=r(s_3|s_1, a_2)$. Assuming all the other rewards are zero, transition dynamics are deterministic, and states $s_3, s_4, s_5$ are the terminal states. For the sake of simplicity, we also assume that the discount factor here $\gamma=1$. \newline
Now given a policy $\pi$ such that $\pi(a_1|s_1)=\beta_1$ and $\pi(a_1|s_2)=\beta_2$ ($\beta_1, \beta_2\in [0,1]$), we could find $r_1, r_2, r_3$ such that the following constraints hold:
\begin{align}
  r_1 > r_2 &\Longleftrightarrow Q^\pi(s_1, a_1)>Q^\pi(s_1, a_2)       \label{eq:eq1} \\
  \beta_2r_1+(1-\beta_2)r_2>r_3  &\Longleftrightarrow Q^\pi(s_2, a_1) >Q^\pi(s_2, a_2) \label{eq:eq2}\\
  r_3 > (\beta_2-\epsilon_2) r_2+(1-\beta_2+\epsilon_2)r_2  &\Longleftrightarrow r_3>Q^\pi(s_1, a_1)-\epsilon_2(r_1-r_2) \label{eq:eq3}
\end{align}
Now we consider the \pbsetshort $$\pbset=\Big\{\pi'\in \Pi\;\Big|\;\|\pi'(\cdot|s_1)-\pi(\cdot|s_1)\|<\epsilon_1,\|\pi'(\cdot|s_2)-\pi(\cdot|s_2)\|<\epsilon_2 \Big\}.$$ Under these three linear constraints, the policy given by \minbest attacker satisfies that $\pi_{\advminbest}(a_1|s_1)=\beta_1-\epsilon_1$, and $\pi_{\advminbest}(a_1|s_2)=\beta_2-\epsilon_2$. 
On the other hand, we can find another admissible policy adversary $\pi_{h^*}(a_1|s_1)=\beta_1+\epsilon_1$, and $\pi_{h^*}(a_1|s_2)=\beta_2-\epsilon_2$.
Now we show that $V^{\pi_{h^*}}(s_1) < V^{\pi_{\advminbest}}(s_1)$, and thus \minbest attacker is not optimal.
\begin{align}
	V^{\pi_{\advminbest}}(s_1) &= (\beta_1-\epsilon_1)\Big[(\beta_2-\epsilon_2)r_1+(1-\beta_2+\epsilon_2)r_2\Big]+(1-\beta_1+\epsilon_1)r_3 \\
									&= (\beta_1-\epsilon_1)(\beta_2-\epsilon_2)r_1+(\beta_1-\epsilon_1)(1-\beta_2+\epsilon_2)r_2+(1-\beta_1+\epsilon_1)r_3\\
	V^{\pi_{h^*}}(s_1) &= (\beta_1+\epsilon_1)\Big[(\beta_2-\epsilon_2)r_1+(1-\beta_2+\epsilon_2)r_2\Big]+(1-\beta_1-\epsilon_1)r_3 \\
									&= (\beta_1+\epsilon_1)(\beta_2-\epsilon_2)r_1+(\beta_1+\epsilon_1)(1-\beta_2+\epsilon_2)r_2+(1-\beta_1-\epsilon_1)r_3
\end{align}
Therefore,
\begin{align}
V^{\pi_{h^*}}(s_1)-V^{\pi_{\advminbest}}(s_1)&=2\epsilon_1(\beta_2-\epsilon_2)r_2+2\epsilon_1(1-\beta_2+\epsilon_2)r_2-2\epsilon_1r_3\\
												  &=2\epsilon_1\Big[(\beta_2-\epsilon_2)r_2+(1-\beta_2+\epsilon_2)r_2-r_3\Big]\\
												  & < 0 \;\;\text{Because of the constraint ~\eqref{eq:eq3}}
\end{align}
\end{proof}

\subsubsection{\textit{TYPE \RNum{2}} - Maximize The Worst (\maxworst)}
Pattanaik et al.~\citep{pattanaik2018robust} point out that only preventing the agent from selecting the best action does not necessarily result in a low total reward. Instead, Pattanaik et al.~\citep{pattanaik2018robust} propose another objective function which maximizes the probability of selecting the worst action, i.e.,
\setlength\abovedisplayskip{2pt}
\setlength\belowdisplayskip{2pt}
\begin{equation}
  \label{eq:maxworst}
  \tag{\RNum{2}}
  \advmaxworst \in \mathrm{argmax}_{\adv \in \advset} \advpi(a^-|s), \forall s\in\states
\end{equation}
where $a^-$ refers to the ``worst'' action at state $s$. 
Pattanaik et al.\citep{pattanaik2018robust} define the ``worst'' action as the actions with the lowest Q value, which could be ambiguous, since the Q function is policy-dependent. 
If a worst policy $\pi^- \in \mathrm{argmin}_\pi V^\pi(s), \forall s \in \states$ is available, one can use $a^- = \mathrm{argmin} Q^{\pi^-}(s,a)$. However, in practice, the attacker usually only has access to the agent's current policy $\pi$, so it can also choose $a^- = \mathrm{argmin} Q^{\pi}(s,a)$. 
Note that these two selections are different, as the agent's policy $\pi$ is usually far away from the worst policy.

\begin{lemma}[\textbf{Optimality of \maxworst}]
\label{lem:maxworst}
  Denote the set of optimal solutions to objective~\eqref{eq:maxworst} as $\advsetmaxworst$, which include both versions of \maxworst attacker formulations as we discussed above.
  Then there exist an MDP $\mdp$ and an agent policy $\pi$, such that $\advsetmaxworst$ contains a non-optimal adversary $\adv^*$, i.e., $\advsetmaxworst \not\subset \advset^*$.
\end{lemma}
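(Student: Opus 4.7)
My plan is to prove the lemma by constructing an explicit counterexample, paralleling the style of the preceding \minbest proof. The core intuition to exploit is that the ``worst action at state $s$'' as judged by a one-step Q-value lookup (whether via $Q^\pi$ or via $Q^{\pi^-}$) is a purely local notion, and can easily fail to coincide with the action whose probability, when increased under a budget $\epsilon$, most decreases the \emph{global} value $V^{\pi_h}$. A well-chosen tree MDP can expose this mismatch: the locally worst action at a key state leads to a moderately bad terminal, while a locally non-worst action routes the victim into a downstream subtree whose expected return is even worse.

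Concretely, I would build a finite deterministic MDP with a root state $s_1$, an intermediate state $s_2$, and a few terminals, with two actions $a_1, a_2$. Rewards $r_1, r_2, r_3$ and transitions would be set so that, at $s_2$, $a_2$ leads to the lowest one-step reward (hence $a^- = a_2$ under the $Q^\pi$ reading; a symmetric choice works for $Q^{\pi^-}$), but at $s_1$, the action $a_2$ routes the agent into $s_2$, while $a_1$ routes it directly to a terminal with a modest reward. I would then fix a stochastic victim $\pi$ with $\pi(a_1|s_1)=\beta_1$, $\pi(a_1|s_2)=\beta_2$ and an attack budget $\epsilon_1,\epsilon_2$ at the two states, and write the \maxworst perturbed policy $\pi_{\hat h}$ explicitly. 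Next I would display a competing admissible adversary $h'$ that, instead of pushing $\pi(a^-|s_1)$ upward by $\epsilon_1$, pushes probability in the \emph{opposite} direction at $s_1$ (thereby routing more mass into the bad subtree through $s_2$), while matching $\hat h$ at $s_2$. A direct Bellman computation of $V^{\pi_{\hat h}}(s_1)$ and $V^{\pi_{h'}}(s_1)$ then reduces to verifying a single linear inequality in $r_1, r_2, r_3, \beta_1, \beta_2, \epsilon_1, \epsilon_2$, which I would ensure is satisfiable by imposing an explicit separation condition (e.g.\ the value of the subtree rooted at $s_2$ under $\pi_{\hat h}$ is smaller than the immediate reward of $a_1$ at $s_1$).

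The main obstacle, and where care is needed, is that the statement covers \emph{both} formulations of \maxworst simultaneously: $a^-$ defined via $\mathrm{argmin}_a Q^\pi(s,a)$ and via $\mathrm{argmin}_a Q^{\pi^-}(s,a)$. I would handle this by designing the parameters so the action rankings by $Q^\pi$ and by $Q^{\pi^-}$ at the relevant state $s_2$ agree (e.g., by making $a_2$ strictly dominated in immediate reward at $s_2$, which forces it to be the argmin under any policy). In the worst case I would present two parameter regimes within the same MDP skeleton, one for each formulation, rather than insist on a single universal assignment. The remaining verification is routine arithmetic analogous to the \minbest case, so I would not grind through it in detail beyond checking that the discount factor, chosen as $\gamma=1$ with finite horizon (all non-root states terminal after one step), yields a closed-form value and a strict inequality $V^{\pi_{h'}}(s_1) < V^{\pi_{\hat h}}(s_1)$, demonstrating $\hat h \notin \mathcal{H}^*$ and hence $\mathcal{H}^{\RNum{2}} \not\subseteq \mathcal{H}^*$.
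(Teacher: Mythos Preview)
Your plan for Case~\RNum{1} (where $a^-=\mathrm{argmin}_a Q^\pi(s,a)$) is essentially the paper's own argument: a small tree MDP in which the Q-ranking at the root is reversed once the downstream state has been perturbed, so that the \maxworst direction at the root is opposite to the optimal one. The paper uses a symmetric tree $s_0\to\{s_1,s_2\}\to$ terminals rather than your asymmetric $s_1\to\{s_2,\text{terminal}\}$, but the mechanism and the linear-inequality verification are identical.

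Your treatment of Case~\RNum{2} (where $a^-=\mathrm{argmin}_a Q^{\pi^-}(s,a)$) has a real gap. You try to unify the two cases by forcing the $Q^\pi$ and $Q^{\pi^-}$ rankings to agree \emph{at $s_2$}, but the disagreement that matters is at the \emph{root} $s_1$. In your construction, if $a_1$ at $s_1$ leads to a terminal with reward $r_3$ and $a_2$ leads to $s_2$ with terminal rewards $r_1>r_2$, then $Q^{\pi^-}(s_1,a_2)=r_2$. For the subtree to become worse than the terminal after an $\epsilon_2$-perturbation (your story), you need the perturbed value at $s_2$ to drop below $r_3$, hence $r_2<r_3$. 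But then $Q^{\pi^-}(s_1,a_2)=r_2<r_3=Q^{\pi^-}(s_1,a_1)$, so under the $Q^{\pi^-}$ formulation the worst action at $s_1$ is $a_2$, not $a_1$, and \maxworst already pushes mass toward $s_2$ --- exactly what you wanted the competing adversary to do. So your single construction cannot serve both formulations; your ``agree at $s_2$'' fix does not touch this. Your fallback of ``two parameter regimes'' can be made to work, but with the intuition reversed for Case~\RNum{2} (you need the perturbed subtree to remain \emph{better} than the terminal, so that \maxworst under $Q^{\pi^-}$ wrongly routes into it).

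The paper sidesteps this entirely for Case~\RNum{2} by using a different and simpler mechanism: a one-step MDP with \emph{three} actions $a_1,a_2,a_3$ and $r_1>r_2>r_3$. Here $a^-=a_3$ under any formulation, and the \maxworst solution set consists of all policies that push $\pi(a_3)$ to its maximum; but among these, one can take the extra $\epsilon$ mass from $a_1$ or from $a_2$, and taking it from $a_2$ is strictly suboptimal. This exploits non-uniqueness of $\advsetmaxworst$ rather than a local/global Q mismatch, and needs no delicate parameter tuning. You should either adopt this three-action device for Case~\RNum{2}, or explicitly flip your parameter regime and redo the value computation with the reversed inequality.
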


\begin{proof}[Proof of Lemma~\ref{lem:maxworst}]
$ $\newline
\textbf{Case  \RNum{1}: Using current policy to compute the target action}\\
We prove this lemma by constructing the MDP in Figure~\ref{fig:maxworst_diagram1} such that for any victim policy, there exists a reward configuration in which \maxworst attacker is not optimal. \newline
\begin{figure}[!h]
\centering
  \includegraphics[width=0.3\textwidth]{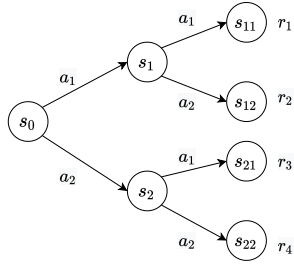}
\caption{A simple MDP where the first version of \maxworst Attacker cannot find the optimal adversary for a given victim policy.}
\label{fig:maxworst_diagram1}
\end{figure}
Here, let $r_1=r(s_{11}|s_1, a_1), r_2=r(s_{12}|s_1, a_2), r_3=r(s_{21}|s_2, a_1), r_4=r(s_{22}|s_2, a_2)$. Assuming all the other rewards are zero, transition dynamics are deterministic, and states $s_{11}, s_{12}, s_{21}, s_{22}$ are the terminal states. For the sake of simplicity, we also assume that the discount factor here $\gamma=1$. \newline
Now given a policy $\pi$ such that $\pi(a_1|s_0)=\beta_0$, $\pi(a_1|s_1)=\beta_1$, and  $\pi(a_2|s_2)=\beta_2$ ($\beta_0, \beta_1, \beta_2\in [0,1]$), consider the \pbsetshort $$\pbset=\Big\{\pi'\in \Pi\;\Big|\;\|\pi'(\cdot|s_1)-\pi(\cdot|s_1)\|<\epsilon_0, \|\pi'(\cdot|s_1)-\pi(\cdot|s_1)\|<\epsilon_1,\|\pi'(\cdot|s_2)-\pi(\cdot|s_2)\|<\epsilon_2 , \Big\}.$$ We could find $r_1, r_2, r_3, r_4$ such that the following linear constraints hold:
\begin{align}
	\beta_1r_1+(1-\beta_1)r_2 >& \beta_2r_3+(1-\beta_2)r_4 \Longleftrightarrow  Q^\pi(s_0, a_1)>Q^\pi(s_0, a_2)     \label{eq:eq4} \\
       r_1 >& r_2  \Longleftrightarrow  Q^\pi(s_1, a_1)>Q^\pi(s_1, a_2) \label{eq:eq5} \\
       r_3 >& r_4  \Longleftrightarrow  Q^\pi(s_2, a_1)>Q^\pi(s_2, a_2) \label{eq:eq6} \\
       (\beta_1-\epsilon_1)r_1+(1-\beta_1+\epsilon_1)r_2 <& (\beta_2-\epsilon_2)r_3 + (1-\beta_2+\epsilon_2)r_4 \label{eq:eq7}
\end{align}
Now, given these constraints, the perturbed policy given by \maxworst attaker satisfies $\pi_{\advmaxworst}(a_1|s_0)=\beta_0-\epsilon_0$, $\pi_{\advmaxworst}(a_1|s_1)=\beta_1-\epsilon_1$, and $\pi_{\advmaxworst}(a_1|s_2)=\beta_2-\epsilon_2$. However, consider another perturbed policy $\pi_{h^*}$ in \pbsetshort such that  $\pi_{h^*}(a_1|s_0)=\beta_0+\epsilon_0$, $\pi_{h^*}(a_1|s_1)=\beta_1-\epsilon_1$, and $\pi_{h^*}(a_1|s_2)=\beta_2-\epsilon_2$. We will prove that $V^{\pi_{h^*}}(s_1) < V^{\pi_{\advmaxworst}}(s_1)$, and thus \maxworst attacker is not optimal.\newline
On the one hand, \begin{align}
V^{\pi_{\advmaxworst}}(s_1)=&(\beta_0-\epsilon_0)\Big[(\beta_1-\epsilon_1)r_1+(1-\beta_1+\epsilon_1)r_2\Big]+(1-\beta_0+\epsilon_0)\Big[(\beta_2-\epsilon_2)r_3+(1-\beta_2+\epsilon_2)r_4\Big]\\
						     =&(\beta_0-\epsilon_0)(\beta_1-\epsilon_1)r_1+(\beta_0-\epsilon_0)(1-\beta_1+\epsilon_1)r_2 \nonumber \\
						         &+(1-\beta_0+\epsilon_0)(\beta_2-\epsilon_2)r_3+(1-\beta_0+\epsilon_0)(1-\beta_2+\epsilon_2)r_4 
\end{align}
On the other hand, \begin{align}
V^{\pi_{h^*}}(s_1)=&(\beta_0+\epsilon_0)\Big[(\beta_1-\epsilon_1)r_1+(1-\beta_1+\epsilon_1)r_2\Big]+(1-\beta_0-\epsilon_0)\Big[(\beta_2-\epsilon_2)r_3+(1-\beta_2+\epsilon_2)r_4\Big]\\
						     =&(\beta_0+\epsilon_0)(\beta_1-\epsilon_1)r_1+(\beta_0+\epsilon_0)(1-\beta_1+\epsilon_1)r_2 \nonumber \\
						         &+(1-\beta_0-\epsilon_0)(\beta_2-\epsilon_2)r_3+(1-\beta_0-\epsilon_0)(1-\beta_2+\epsilon_2)r_4 
\end{align}
Therefore, \begin{align}
V^{\pi_{h^*}}(s_1) - V^{\pi_{\advmaxworst}}(s_1)=&2\epsilon_0(\beta_1-\epsilon_1)r_1+2\epsilon_0(1-\beta_1+\epsilon_1)r_2 \nonumber \\
												  &-2\epsilon_0(\beta_2-\epsilon_2)r_3-2\epsilon_0(1-\beta_2+\epsilon_2)r_4\\
												<&\;0 \;\;\text{Because of the constraint ~\eqref{eq:eq7}}
\end{align}
\textbf{Case  \RNum{2}: Using worst policy to compute the target action}
\begin{figure}[!h]
\centering
  \includegraphics[width=0.3\textwidth]{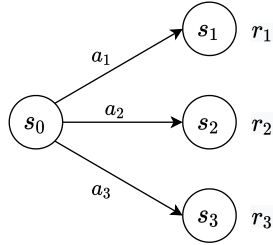}
\caption{A simple MDP where the second version of \maxworst Attacker cannot find the optimal adversary for a given victim policy.}
\end{figure}
$ $\newline
Same as before, we construct a MDP where $\advsetmaxworst$ contains a non-optimal adversary.  Let $r_1=r(s_{1}|s_0, a_1), r_2=r(s_{2}|s_0, a_2), r_3=r(s_{3}|s_0, a_3)$. Assuming all the other rewards are zero, transition dynamics are deterministic, and states $s_{1}, s_{2}, s_{3}$ are the terminal states. For the sake of simplicity, we also assume that the discount factor here $\gamma=1$. \newline
Let $pi$ be the given policy such that $\pi(a_1|s_0)=\beta_1$ and $\pi(a_2|s_0)=\beta_2$.
Now without loss of generality, we assume $r_1>r_2>r_3\;\;(*)$. Then the worst policy $\pi'$ satisfies that $\pi'(a_3|s_0)=1$. Consider the \pbsetshort $\pbset=\Big\{\pi'\in \Pi\;\Big|\;\|\pi'(\cdot|s_0)-\pi(\cdot|s_0)\|_1<\epsilon\Big\}$. Then $\advsetmaxworst=\Big\{\pi'\in \Pi\;\Big|\; \pi'(a_3|s_0)=(1-\beta_1-\beta_2)+\epsilon \Big\}$. \newline
Now consider two policies $\pi_{h^1}, \pi_{h^2}\in \advsetmaxworst$, where $\pi_{h^1}(a_1|s_0)=\beta_1$, $\pi_{h^1}(a_2|s_0)=\beta_2-\epsilon$, $\pi_{h^2}(a_1|s_0)=\beta_1-\epsilon$, $\pi_{h^2}(a_2|s_0)=\beta_2$.
Then $V^{\pi_{h^1}}(s_0)-V^{\pi_{h^2}}(s_0)=\epsilon(r_1-r_2)>0$. Therefore, $\pi_{h^1}\in \advsetmaxworst$ but it's not optimal.

\end{proof}

\subsubsection{\textit{TYPE \RNum{3}} - Minimize Q Value (\minq).}
Another idea of attacking~\cite{pattanaik2018robust,zhang2020robust} is to craft perturbations such that the agent selects actions with minimized Q values at every step, i.e.,
\setlength\abovedisplayskip{2pt}
\setlength\belowdisplayskip{2pt}
\begin{equation}
  \label{eq:minq}
  \tag{\RNum{3}}
  \advminq \in \mathrm{argmin}_{\adv \in \advset} \sum\nolimits_{a\in\actions} \advpi(a|s) \hat{Q}^{\pi}(s, a), \forall s\in\states
\end{equation}
where $\hat{Q}$ is the approximated Q function of the agent's original policy. For example, Pattanaik et al.\cite{pattanaik2018robust} directly use the agent's Q network (of policy $\pi$), while the Robust SARSA (RS) attack proposed by Zhang et al.\cite{zhang2020robust} learns a more stable Q network for the agent's policy $\pi$. 
Note that in practice, this type of attack is usually applied to deterministic agents (e.g., DQN, DDPG, etc), then the objective becomes $\mathrm{argmin}_{\adv \in \advset} \hat{Q}^{\pi}(s, \advpi(s)), \forall s\in\states$~\cite{pattanaik2018robust,zhang2020robust,oikarinen2020robust}. In this case, the \minq attack is equivalent to the \maxworst attack with the current policy as the target.

\begin{lemma}[\textbf{Optimality of \minq}]
\label{lem:minq}
  Denote the set of optimal solutions to objective~\eqref{eq:minq} as $\advsetminq$, which include both versions of \minq attacker formulations as we discussed above.
  Then there exist an MDP $\mdp$ and an agent policy $\pi$, such that $\advsetminq$ contains a non-optimal adversary $\adv^*$, i.e., $\advsetminq \not\subset \advset^*$.
\end{lemma}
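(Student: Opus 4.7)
The plan is to follow the same counterexample-construction template used for \minbest and \maxworst. The essential pathology of \minq is \emph{temporal myopia}: the objective evaluates each candidate perturbation by the Q-function $\widehat{Q}^\pi$ of the \emph{unperturbed} victim policy, so it implicitly assumes that the victim will continue to act according to $\pi$ at every successor state. But the true attacker also perturbs at successor states, and a direction that looks attractive under $\widehat{Q}^\pi$ need not be attractive once those downstream perturbations are folded in. Concretely, this is exactly the mismatch that made \maxworst fail in its Case~\RNum{1}, because when the victim is deterministic the \minq objective reduces to \maxworst with the current policy as target; therefore the Case~\RNum{1} construction used for Lemma~\ref{lem:maxworst} already certifies Lemma~\ref{lem:minq} in the deterministic setting, and my first step would be to lift that example verbatim.

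For the stochastic setting (where \minq minimizes the expected Q of the perturbed policy under $\widehat{Q}^\pi$), the plan is to reuse the two-step MDP from Figure~\ref{fig:maxworst_diagram1}: one start state $s_0$ with actions $a_1,a_2$ leading to intermediate states $s_1,s_2$, each of which branches to two terminal states through actions $a_1,a_2$. Choose the victim $\pi$ with $\pi(a_1|s_0)=\beta_0$, $\pi(a_1|s_1)=\beta_1$, $\pi(a_1|s_2)=\beta_2$ and rewards $r_1,r_2,r_3,r_4$ satisfying
\begin{align}
  \beta_1 r_1 + (1-\beta_1) r_2 &> \beta_2 r_3 + (1-\beta_2) r_4, \label{eq:minq_q_order}\\
  r_1 > r_2, \qquad r_3 &> r_4, \label{eq:minq_branch_order}\\
  (\beta_1-\epsilon_1) r_1 + (1-\beta_1+\epsilon_1) r_2 &< (\beta_2-\epsilon_2) r_3 + (1-\beta_2+\epsilon_2) r_4. \label{eq:minq_perturbed_order}
\end{align}
Inequality \eqref{eq:minq_q_order} makes $\widehat{Q}^\pi(s_0,a_1)>\widehat{Q}^\pi(s_0,a_2)$, so any \minq attacker at $s_0$ will shift probability mass from $a_1$ to $a_2$, producing $\pi_{\advminq}(a_1|s_0)=\beta_0-\epsilon_0$. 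At $s_1,s_2$ the inequalities \eqref{eq:minq_branch_order} pin down that \minq pushes towards $a_2$ at both intermediate states. However, \eqref{eq:minq_perturbed_order} reverses the ranking once those downstream perturbations are applied, so an alternative admissible adversary $h^*$ that uses the \emph{same} perturbations at $s_1,s_2$ but flips the sign at $s_0$ (i.e.\ increases $\pi(a_1|s_0)$ to $\beta_0+\epsilon_0$) yields strictly lower $V^{\pi_{h^*}}(s_0)$. The bulk of the work is simply expanding $V^{\pi_{\advminq}}(s_0)-V^{\pi_{h^*}}(s_0)$ and reading off positivity from \eqref{eq:minq_perturbed_order}, mirroring the algebra in the proof of Lemma~\ref{lem:maxworst}.

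The main obstacle is consistency of the linear constraints: I need to exhibit an actual choice of $\beta_0,\beta_1,\beta_2\in[0,1]$, attack radii $\epsilon_0,\epsilon_1,\epsilon_2>0$, and rewards $r_1>r_2$, $r_3>r_4$ that simultaneously satisfy \eqref{eq:minq_q_order} and \eqref{eq:minq_perturbed_order}. The intuition for why this is feasible is that \eqref{eq:minq_q_order} compares the two branches at their \emph{unperturbed} Q-values whereas \eqref{eq:minq_perturbed_order} compares them at their \emph{perturbed} Q-values, and the sensitivities to $\epsilon_1,\epsilon_2$ are $r_1-r_2$ and $r_3-r_4$ respectively; choosing $r_3-r_4\ll r_1-r_2$ and $\epsilon_1,\epsilon_2$ of comparable size makes the $a_1$-branch drop faster under perturbation, so a small initial gap in \eqref{eq:minq_q_order} can be flipped. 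A clean concrete witness (e.g.\ $\beta_i=1/2$, $r_1=3,r_2=-3,r_3=1,r_4=0$ with suitable $\epsilon$'s) would then finish the proof. Finally, I would add a one-line remark that this simultaneously handles the ``learned stable $\widehat{Q}$'' variant of \minq, because the constructed MDP is small enough that the SARSA fixed point of $\widehat{Q}^\pi$ coincides with the true $Q^\pi$, so the same counterexample applies without modification.
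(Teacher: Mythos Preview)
Your proposal is correct and follows essentially the same approach as the paper: for the deterministic victim you reduce \minq\ to \maxworst\ and invoke the Case~\RNum{1} counterexample verbatim, and for the stochastic victim you reuse the same two-step MDP and constraints to show that \minq\ produces exactly the \maxworst\ perturbation, which was already shown to be non-optimal. Your additional remarks on feasibility of the constraints and the learned-$\widehat Q$ variant are welcome elaborations but not strictly needed, as the paper simply cites the constraints \eqref{eq:eq4}--\eqref{eq:eq7} and the algebra from Lemma~\ref{lem:maxworst} without re-verifying them.
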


\begin{proof}[Proof of Lemma~\ref{lem:maxworst}]
$ $\newline
\textbf{Case  \RNum{1}: For a deterministic victim}\\
In the deterministic case
\setlength\abovedisplayskip{2pt}
\setlength\belowdisplayskip{2pt}
\begin{equation}
  \label{eq:minq_det}
  \tag{\RNum{3}${}_D$}
  \advminq \in \mathrm{argmin}_{\adv \in \advset} \hat{Q}^{\pi}(s, \advpi(s)) 
  = \mathrm{argmax}_{\adv \in \advset} \advpi(\mathrm{argmin}_{a} \hat{Q}^\pi(s,a)|s), 
  \forall s\in\states
\end{equation}
In this case, the objective is equivalent to objective~\eqref{eq:maxworst}, thus Lemma~\ref{lem:minq} holds.

\textbf{Case  \RNum{2}: For a stochastic victim}\\
In this case, we consider the MDP in Figure~\ref{fig:maxworst_diagram1} and condition~\eqref{eq:eq4} to~\eqref{eq:eq7}. Then the \minq objective gives
$\pi_{\advminq}(a_1|s_0)=\beta_0-\epsilon_0$, $\pi_{\advminq}(a_1|s_1)=\beta_1-\epsilon_1$, and $\pi_{\advminq}(a_1|s_2)=\beta_2-\epsilon_2$.

According to the proof of the first case of Lemma~\ref{lem:maxworst}, $\pi_{\advminq}=\pi_{\advmaxworst}$ is not an optimal adversary. Thus Lemma~\ref{lem:minq} holds.

\end{proof}

\subsubsection{\textit{TYPE \RNum{4}} - Maximize Difference (\maxdiff).}
The MAD attack proposed by Zhang et al.~\citep{zhang2020robust} is to maximize the distance between the perturbed policy $\advpi$ and the clean policy $\pi$, i.e.,
\setlength\abovedisplayskip{2pt}
\setlength\belowdisplayskip{2pt}
\begin{equation}
  \label{eq:maxdiff}
  \tag{\RNum{4}}
  \advmaxd \in \mathrm{argmax}_{\adv \in \advset} \mathrm{D_{TV}}[ \advpi(\cdot|s) || \pi(\cdot|s) ], \forall s\in\states
\end{equation}
where $\mathrm{TV}$ denotes the total variance distance between two distributions. 
In practical implementations, the TV distance can be replaced by the KL-divergence, as
$\mathrm{D_{TV}}[ \advpi(\cdot|s) || \pi(\cdot|s) ] \leq (\mathrm{D_{KL}}[ \advpi(\cdot|s) || \pi(\cdot|s) ])^2$. 
This type of attack is inspired by the fact that if two policies select actions with similar action distributions on all the states, then the value of the two policies is also small (see Theorem 5 in~\cite{zhang2020robust}).
\begin{lemma}[\textbf{Optimality of \maxdiff}]
\label{lem:maxdiff}
  Denote the set of optimal solutions to objective~\eqref{eq:maxdiff} as $\advsetmaxd$.
  There exist an MDP $\mdp$ and an agent policy $\pi$, such that $\advsetmaxd$ contains a non-optimal adversary $\adv^*$, i.e., $\advsetmaxd \not\subset \advset^*$.
\end{lemma}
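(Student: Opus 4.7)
The plan is to construct an explicit counterexample MDP and victim policy for which the MaxDiff objective admits multiple optimizers, at least one of which is not an optimal adversary. The key conceptual point I would emphasize first is that total variation distance is a symmetric, direction-agnostic measure of how much probability mass has shifted between two distributions: for any $\pi(\cdot|s)$ with three or more admissible actions, there are typically multiple directions in the simplex that achieve the same maximum TV distance under a budget constraint. If some of these directions push probability mass onto a low-reward action while others push it onto an equally-neutral action, then MaxDiff cannot discriminate between them, even though the two choices induce very different values for the victim.

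Concretely, I would follow the style of the proofs of Lemmas~\ref{lem:minbest}--\ref{lem:minq}: pick a small MDP, a victim policy $\pi$, and specify the \pbsetshort $\pbset$ abstractly as an $\ell_1$-ball around $\pi(\cdot|s)$ intersected with the simplex. A single-state MDP with three actions $\{a_1, a_2, a_3\}$ transitioning to terminal states with rewards $0$, $-1$, $0$ respectively, together with the uniform victim $\pi(a_i|s_0) = 1/3$, will suffice. I would then exhibit two perturbed policies $\pi_1$ and $\pi_2$ in $\pbset$ that both saturate the $\ell_1$ budget, so that both maximize TV distance and hence both lie in $\advsetmaxd$, but with $\pi_1$ shifting mass onto the uniquely-worst action $a_2$ and $\pi_2$ merely exchanging mass between the two equally-good actions $a_1$ and $a_3$. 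A short value computation then shows $V^{\pi_2}(s_0) > V^{\pi_1}(s_0)$, so $\pi_2 \in \advsetmaxd \setminus \advset^*$, yielding the desired conclusion $\advsetmaxd \not\subset \advset^*$.

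The main obstacle I anticipate is justifying that the abstract $\ell_1$-ball $\pbset$ used above is realizable by a genuine state adversary under an $\ell_p$ observation budget, rather than being prescribed directly in policy space. By the state-to-policy equivalence established in Section~\ref{sec:perturb} and Appendix~\ref{app:relation}, this is not a true obstacle: a sufficiently expressive softmax victim on a continuous state input induces any small neighborhood of $\pi(\cdot|s_0)$ in the simplex as its $\pbset$ under a suitably chosen $\ell_p$ attack radius, exactly the style already adopted in Lemmas~\ref{lem:minbest}--\ref{lem:minq}. The remaining steps reduce to a short direct computation on a handful of action distributions, with the construction designed so that a uniquely-worst action coexists with two equally-neutral actions, producing two distinct TV-maximizing directions of which only one is value-minimizing.
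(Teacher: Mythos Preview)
Your construction is correct and cleanly establishes the lemma, but it differs from the paper's argument. The paper does not build a fresh example; instead it simply reuses the two-action, two-decision-state MDP from Lemma~\ref{lem:minbest}. With only two actions, pushing $\pi(a_1|s_i)$ to either endpoint $\beta_i \pm \epsilon_i$ achieves the same total-variation distance, so the particular \minbest{} perturbation $(\beta_1-\epsilon_1,\beta_2-\epsilon_2)$ is one of the TV-maximizers. Since Lemma~\ref{lem:minbest} already exhibited a strictly better adversary $(\beta_1+\epsilon_1,\beta_2-\epsilon_2)$ in that same $\pbset$, the paper concludes in one line that $\advsetmaxd \not\subset \advset^*$. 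Your single-state, three-action example is more self-contained and makes the ``direction-agnostic'' failure of TV distance explicit without appealing to an earlier construction; the paper's route is more economical but inherits whatever looseness is already present in the Lemma~\ref{lem:minbest} setup. Both rely on the same convention of specifying $\pbset$ directly as a norm ball in policy space, which you correctly flag and justify.
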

\begin{proof}[Proof of Lemma~\ref{lem:maxdiff}]
The proof follows from the proof of lemma ~\ref{lem:minbest}. In the MDP we constructed, $\pi'=\beta_1-\epsilon_1, \pi_{\advminbest}(a_1|s_2)=\beta_2-\epsilon_2$ is one of the policies that has the maximum KL divergence from the victim policy within \pbsetshort. However, as we proved in ~\ref{lem:minbest}, this is not the optimally perturbed policy. Therefore, \maxdiff attacker may not be optimal.

\end{proof}

\section{Additional Experiment Details and Results}
\label{app:exp}
In this section, we provide details of our experimental settings and present additional experimental results.
Section~\ref{app:exp_implement} describes our implementation details and hyperparameter settings for Atari and MuJoCo experiments.
Section~\ref{app:exp_results} provide additional experimental results, including experiments with varying budgets ($\epsilon$) in Section~\ref{app:exp_epsilon}, more comparison between \saname and \ours in terms of convergence rate and sensitivity to hyperparameter settings as in Section~\ref{app:exp_compare}, robust training in MuJoCo games with fewer training steps in Section~\ref{app:atla_mujoco}, attacking performance on robust models in Atari games in Section~\ref{app:robust_atari}, as well as robust training results in Atari games in Section~\ref{app:atla_atari}.

\subsection{Implementation Details}
\label{app:exp_implement}

\subsubsection{Atari Experiments}
\label{app:exp_imple_atari}
In this section we report the configurations and hyperparameters we use for DQN, A2C and ACKTR in Atari environments. We use \textit{GeForce RTX 2080 Ti} GPUs for all the experiments.

\paragraph{DQN Victim}
We compare \ours algorithm with other attacking algorithms on 7 Atari games. 
For DQN, we take the softmax of the Q values $Q(s,\cdot)$ as the victim policy $\pi(\cdot|s)$ as in prior works~\citep{huang2017adversarial}.
For these environments, we use the wrappers provided by stable-baselines~\citep{stable-baselines}, where we clip the environment rewards to be $-1$ and $1$ during training and stack the last 4 frames as the input observation to the DQN agent. For the victim agent, we implement Double Q learning~\citep{hado2016deep} and prioritized experience replay~\citep{tom2015prioritized}. The clean DQN agents are trained for 6 million frames, with a learning rate $0.00001$ and the same network architecture and hyperparameters as the ones used in \cite{mnih2015human}. In addition, we use a replay buffer of size $5\times 10^5$. Prioritized replay buffer sampling is used with $\alpha = 0.6$ and $\beta$ increases from $0.4$ to $1$ linearly during training. During evaluation, we execute the agent's policy without epsilon greedy exploration for 1000 episodes.

\paragraph{A2C Victim}
For the A2C victim agent, we also use the same preprocessing techniques and convolutional layers as the one used in \cite{mnih2015human}. Besides, values and policy network share the same CNN layers and a fully-connected layer with 512 hidden units. The output layer is a categorical distribution over the discrete action space. We use 0.0007 as the initial learning rate and apply linear learning rate decay, and we train the victim A2C agent for 10 million frames.
During evaluation, the A2C victim executes a stochastic policy (for every state, the action is sampled from the categorical distribution generated by the policy network).
Our implementation of A2C is mostly based on an open-source implementation by Kostrikov~\cite{pytorchrl}.

\paragraph{ACKTR Adversary}
To train the director of \ours and the adversary in \saname, we use ACKTR~\citep{wu2017scalable} with the same network architecture as A2C.
We train the adversaries of \ours and \saname for the same number of steps for a fair comparison.
For the DQN victim, we use a learning rate 0.0001 and train the adversaries for 5 million frames.
For the A2C victim, we use a learning rate 0.0007 and train the adversaries for 10 million frames.
Our implementation of ACKTR is mostly based on an open-source implementation by Kostrikov~\cite{pytorchrl}.

\paragraph{Heuristic Attackers}
For the \minbest attacker, we following the algorithm proposed by~\citet{huang2017adversarial} which uses FGSM to compute adversarial state perturbations.
The \minbest + Momentum attacker is implemented according to the algorithm proposed by~\citet{ezgi2020nesterov}, and we set the number of iterations to be 10, the decaying factor $\mu$ to be 0.5 (we tested $0.01,0.1,0.5,0.9$ and found 0.5 is relatively better while the difference is minor).
Our implementation of the \minq attacker follows the gradient-based attack by~\citet{pattanaik2018robust}, and we also set the number of iterations to be 10.
For the \maxdiff attacker, we refer to Algorithm 3 in~\citet{zhang2020robust} with the number of iterations equal to 10.
In addition, we implement a random attacker which perturbs state $s$ to $\tilde{s} = s + \epsilon \mathrm{sign}(\mu)$, where $\mu$ is sampled from a standard multivariate Gaussian distribution with the same dimension as $s$.

\subsubsection{MuJoCo Experiments}
For four OpenAI Gym MuJoCo continuous control environments, we use PPO with the original fully connected (MLP) structure as the policy network to train the victim policy. For robustness evaluations, the victim and adversary are both trained using PPO with independent value and policy optimizers. We complete all the experiments on MuJoCo using \textit{32GB Tesla V100}.

\paragraph{PPO Victim}
We directly use the well-trained victim model provided by~\citet{zhang2020robust}.

\paragraph{PPO Adversary}
Our PA-AD adversary is trained by PPO and we use a grid search of a part of adversary hyperparameters (including learning rates of the adversary policy network and policy network, the entropy regularization parameter and the ratio clip $\epsilon$ for PPO) to train the adversary as powerful as possible. The reported optimal attack result is from the strongest adversary among all 50 trained adversaries.

\paragraph{Other Attackers}
For Robust Sarsa (RS) attack, we use the implementation and the optimal RS hyperparameters from~\cite{zhang2020robust} to train the robust value function to attack the victim. The reported RS attack performance is the best one over the 30 trained robust value functions.

For \maxdiff attack, the maximal action difference attacker is implemented referring to~\cite{zhang2020robust}.

For \saname attacker, following~\cite{zhang2021robust}, the hyperparameters is the same as the optimal hyperparameters of vanilla PPO from a grid search. And the training steps are set for different environments. For the strength of SA-PPO regularization $\kappa$, we choose from $1 \times 10^{-6}$ to $1$ and report the worst-case reward.

\paragraph{Robust Training}
For ATLA~\cite{zhang2021robust}, the hyperparameters for both victim policy and adversary remain the same as those in vanilla PPO training. To ensure sufficient exploration, we run a small-scale grid search for the entropy bonus coefficient for agent and adversary. The experiment results show that a larger entropy bonus coefficient allows the agent to learn a better policy for the continual-improving adversary. In robust training experiments, we use larger training steps in all the MuJoCo environments to guarantee policy convergence. We train 5 million steps in Hopper, Walker, and HalfCheetah environments and 10 million steps for Ant. For reproducibility, the final results we reported are the experimental performance of the agent with medium robustness from 21 agents training with the same hyperparameter set.

\subsection{Additional Experiment Results}
\label{app:exp_results}


\begin{figure}[!htbp]
\centering
 \begin{subfigure}[t]{0.32\columnwidth}
  \centering
\begin{tikzpicture}[scale=0.5]

\definecolor{color0}{rgb}{0.886274509803922,0.290196078431373,0.2}
\definecolor{color1}{rgb}{0.203921568627451,0.541176470588235,0.741176470588235}
\definecolor{color2}{rgb}{0.596078431372549,0.556862745098039,0.835294117647059}
\definecolor{color3}{rgb}{0.984313725490196,0.756862745098039,0.368627450980392}
\definecolor{color4}{rgb}{0.729411764705882,0.333333333333333,0.827450980392157}
\definecolor{color5}{rgb}{0.117647058823529,0.564705882352941,1}
\definecolor{color6}{rgb}{0.0980392156862745,0.0980392156862745,0.43921568627451}
\definecolor{color7}{rgb}{0.201253172212011,0.690792081537903,0.479667611892753}
\definecolor{color8}{rgb}{0.8,0.270588235294118,0}

\begin{axis}[
axis background/.style={fill=white!89.8039215686275!black},
axis line style={white},
legend cell align={left},
legend style={
  fill opacity=0.5,
  draw opacity=1,
  text opacity=1,
  at={(0.03,0.03)},
  anchor=south west,
  draw=white!80!black,
  fill=white!89.8039215686275!black
},
tick align=outside,
x grid style={white},
xlabel={\(\displaystyle \mathrm{\epsilon}\)},
xmajorgrids,
tick pos=left,
xmin=-6e-05, xmax=0.00126,
ytick style={color=white!15!black},
xtick={-0.0002,0,0.0002,0.0004,0.0006,0.0008,0.001,0.0012,0.0014},
y grid style={white},
ylabel={Average Return},
ymajorgrids,
ymajorticks=true,
ymin=5.43965, ymax=102.09335,
ytick style={color=white!15!black}
]
\path [fill=color0, fill opacity=0.2, very thin]
(axis cs:0,97.6003333333333)
--(axis cs:0,94.4666666666667)
--(axis cs:0.0002,90.9993333333333)
--(axis cs:0.0004,88.9653333333333)
--(axis cs:0.0006,77.6663333333333)
--(axis cs:0.0008,68.1326666666667)
--(axis cs:0.001,48.8646666666667)
--(axis cs:0.0012,48.4316666666667)
--(axis cs:0.0012,60.8006666666667)
--(axis cs:0.0012,60.8006666666667)
--(axis cs:0.001,62.4003333333333)
--(axis cs:0.0008,77)
--(axis cs:0.0006,84.5343333333333)
--(axis cs:0.0004,93.6333333333333)
--(axis cs:0.0002,94.5333333333333)
--(axis cs:0,97.6003333333333)
--cycle;

\path [fill=color1, fill opacity=0.2, very thin]
(axis cs:0,97.6)
--(axis cs:0,94.4996666666667)
--(axis cs:0.0002,92.0666666666667)
--(axis cs:0.0004,85.2996666666667)
--(axis cs:0.0006,75.798)
--(axis cs:0.0008,61.8276666666667)
--(axis cs:0.001,41.932)
--(axis cs:0.0012,28.1656666666667)
--(axis cs:0.0012,42.2676666666667)
--(axis cs:0.0012,42.2676666666667)
--(axis cs:0.001,56.4336666666667)
--(axis cs:0.0008,73.4)
--(axis cs:0.0006,84.0333333333333)
--(axis cs:0.0004,91.1003333333333)
--(axis cs:0.0002,95.2003333333333)
--(axis cs:0,97.6)
--cycle;

\path [fill=color2, fill opacity=0.2, very thin]
(axis cs:0,97.667)
--(axis cs:0,94.4663333333333)
--(axis cs:0.0002,92.266)
--(axis cs:0.0004,90.3663333333333)
--(axis cs:0.0006,88.7)
--(axis cs:0.0008,86.733)
--(axis cs:0.001,84.8)
--(axis cs:0.0012,81.6993333333333)
--(axis cs:0.0012,88.5336666666667)
--(axis cs:0.0012,88.5336666666667)
--(axis cs:0.001,90.8333333333333)
--(axis cs:0.0008,92.5336666666667)
--(axis cs:0.0006,93.8336666666667)
--(axis cs:0.0004,94.9003333333334)
--(axis cs:0.0002,95.967)
--(axis cs:0,97.667)
--cycle;

\path [fill=white!46.6666666666667!black, fill opacity=0.2, very thin]
(axis cs:0,97.7)
--(axis cs:0,94.332)
--(axis cs:0.0002,93.633)
--(axis cs:0.0004,89.633)
--(axis cs:0.0006,84.0333333333333)
--(axis cs:0.0008,76.3326666666667)
--(axis cs:0.001,71.0996666666667)
--(axis cs:0.0012,68.3323333333333)
--(axis cs:0.0012,75.8013333333333)
--(axis cs:0.0012,75.8013333333333)
--(axis cs:0.001,78.368)
--(axis cs:0.0008,84.035)
--(axis cs:0.0006,91.0003333333333)
--(axis cs:0.0004,94.6333333333333)
--(axis cs:0.0002,96.7336666666667)
--(axis cs:0,97.7)
--cycle;

\path [fill=color3, fill opacity=0.2, very thin]
(axis cs:0,97.667)
--(axis cs:0,94.2996666666667)
--(axis cs:0.0002,91.0666666666667)
--(axis cs:0.0004,78.932)
--(axis cs:0.0006,68)
--(axis cs:0.0008,32.0963333333333)
--(axis cs:0.001,14.3326666666667)
--(axis cs:0.0012,9.833)
--(axis cs:0.0012,18.5346666666667)
--(axis cs:0.0012,18.5346666666667)
--(axis cs:0.001,24.401)
--(axis cs:0.0008,56.2056666666667)
--(axis cs:0.0006,78.567)
--(axis cs:0.0004,86.167)
--(axis cs:0.0002,94.5)
--(axis cs:0,97.667)
--cycle;

\addplot [color4, mark=square*, mark size=3, mark options={solid}]
table {%
0 96.1636666666667
0.0002 92.8231
0.0004 91.3266333333333
0.0006 81.2989666666667
0.0008 72.6514666666667
0.001 55.7689333333333
0.0012 54.9150333333333
};
\addlegendentry{MinBest}
\addplot [color5, mark=diamond*, mark size=3, mark options={solid}]
table {%
0 96.1580333333333
0.0002 93.6962666666667
0.0004 88.4562
0.0006 79.9504
0.0008 67.9218
0.001 49.6923666666667
0.0012 35.4074333333333
};
\addlegendentry{MinBest Momentum}
\addplot [color6, mark=*, mark size=3, mark options={solid}]
table {%
0 96.1281666666667
0.0002 94.1546
0.0004 92.7699666666667
0.0006 91.6893
0.0008 89.6882333333333
0.001 87.8460333333333
0.0012 85.5272333333333
};
\addlegendentry{MinQ}
\addplot [color7, mark=triangle*, mark size=3, mark options={solid,rotate=180}]
table {%
0 96.1049333333333
0.0002 95.2401333333333
0.0004 92.2728
0.0006 87.6745666666667
0.0008 80.1197333333333
0.001 75.1042
0.0012 71.9804
};
\addlegendentry{MaxDiff}
\addplot [color8, mark=pentagon*, mark size=3, mark options={solid}]
table {%
0 96.1334
0.0002 92.8548666666667
0.0004 82.8940666666667
0.0006 73.8299
0.0008 44.1477666666667
0.001 19.3933333333333
0.0012 13.7486666666667
};
\addlegendentry{PA-AD}
\end{axis}
\end{tikzpicture}
  \vspace{-0.5em}
  \caption{\small{DQN Boxing}
  \label{sfig:dqnboxing}}
 \end{subfigure}
 \hfill
 \begin{subfigure}[t]{0.32\columnwidth}
  \centering
\begin{tikzpicture}[scale=0.5]

\definecolor{color0}{rgb}{0.886274509803922,0.290196078431373,0.2}
\definecolor{color1}{rgb}{0.203921568627451,0.541176470588235,0.741176470588235}
\definecolor{color2}{rgb}{0.596078431372549,0.556862745098039,0.835294117647059}
\definecolor{color3}{rgb}{0.984313725490196,0.756862745098039,0.368627450980392}
\definecolor{color4}{rgb}{0.729411764705882,0.333333333333333,0.827450980392157}
\definecolor{color5}{rgb}{0.117647058823529,0.564705882352941,1}
\definecolor{color6}{rgb}{0.0980392156862745,0.0980392156862745,0.43921568627451}
\definecolor{color7}{rgb}{0.201253172212011,0.690792081537903,0.479667611892753}
\definecolor{color8}{rgb}{0.8,0.270588235294118,0}

\begin{axis}[
axis background/.style={fill=white!89.8039215686275!black},
axis line style={white},
legend cell align={left},
legend style={
  fill opacity=0.8,
  draw opacity=0.5,
  text opacity=1,
  draw=white!80!black,
  fill=white!89.8039215686275!black
},
tick align=outside,
tick pos=left,
x grid style={white},
xlabel={\(\displaystyle \mathrm{\epsilon}\)},
xmajorgrids,
xmin=-5e-05, xmax=0.00105,
xtick style={color=white!15!black},
xtick={-0.0002,0,0.0002,0.0004,0.0006,0.0008,0.001,0.0012},
y grid style={white},
ylabel={Average Return},
ymajorgrids,
ymin=-23.1, ymax=23.1,
ytick style={color=white!15!black}
]
\path [fill=color0, fill opacity=0.2, very thin]
(axis cs:0,21)
--(axis cs:0,21)
--(axis cs:0.0002,-11.4666666666667)
--(axis cs:0.0004,-20.1666666666667)
--(axis cs:0.0006,-20.4)
--(axis cs:0.0008,-21)
--(axis cs:0.001,-21)
--(axis cs:0.001,-21)
--(axis cs:0.001,-21)
--(axis cs:0.0008,-21)
--(axis cs:0.0006,-19.5993333333333)
--(axis cs:0.0004,-19.2)
--(axis cs:0.0002,-8.233)
--(axis cs:0,21)
--cycle;

\path [fill=color1, fill opacity=0.2, very thin]
(axis cs:0,21)
--(axis cs:0,21)
--(axis cs:0.0002,-15.5333333333333)
--(axis cs:0.0004,-20.7333333333333)
--(axis cs:0.0006,-21)
--(axis cs:0.0008,-21)
--(axis cs:0.001,-21)
--(axis cs:0.001,-21)
--(axis cs:0.001,-21)
--(axis cs:0.0008,-21)
--(axis cs:0.0006,-21)
--(axis cs:0.0004,-20.333)
--(axis cs:0.0002,-13.7996666666667)
--(axis cs:0,21)
--cycle;

\path [fill=color2, fill opacity=0.2, very thin]
(axis cs:0,21)
--(axis cs:0,21)
--(axis cs:0.0002,12.0333333333333)
--(axis cs:0.0004,-15.7676666666667)
--(axis cs:0.0006,-20.0336666666667)
--(axis cs:0.0008,-20.7333333333333)
--(axis cs:0.001,-20.9)
--(axis cs:0.001,-20.4)
--(axis cs:0.001,-20.4)
--(axis cs:0.0008,-20.0333333333333)
--(axis cs:0.0006,-18.3996666666667)
--(axis cs:0.0004,-11.9993333333333)
--(axis cs:0.0002,15.2333333333333)
--(axis cs:0,21)
--cycle;

\path [fill=white!46.6666666666667!black, fill opacity=0.2, very thin]
(axis cs:0,21)
--(axis cs:0,21)
--(axis cs:0.0002,13.0666666666667)
--(axis cs:0.0004,-16.867)
--(axis cs:0.0006,-20.0666666666667)
--(axis cs:0.0008,-20.8333333333333)
--(axis cs:0.001,-20.6)
--(axis cs:0.001,-19.6)
--(axis cs:0.001,-19.6)
--(axis cs:0.0008,-20.1333333333333)
--(axis cs:0.0006,-19.0996666666667)
--(axis cs:0.0004,-14.6666666666667)
--(axis cs:0.0002,16.4333333333333)
--(axis cs:0,21)
--cycle;

\path [fill=color3, fill opacity=0.2, very thin]
(axis cs:0,21)
--(axis cs:0,21)
--(axis cs:0.0002,-21)
--(axis cs:0.0004,-21)
--(axis cs:0.0006,-21)
--(axis cs:0.0008,-21)
--(axis cs:0.001,-21)
--(axis cs:0.001,-21)
--(axis cs:0.001,-21)
--(axis cs:0.0008,-21)
--(axis cs:0.0006,-21)
--(axis cs:0.0004,-21)
--(axis cs:0.0002,-21)
--(axis cs:0,21)
--cycle;

\addplot [color4, mark=square*, mark size=3, mark options={solid}]
table {%
0 21
0.0002 -9.83823333333333
0.0004 -19.7056666666667
0.0006 -19.9900666666667
0.0008 -21
0.001 -21
};
\addlegendentry{MinBest}
\addplot [color5, mark=diamond*, mark size=3, mark options={solid}]
table {%
0 21
0.0002 -14.7026
0.0004 -20.5083
0.0006 -21
0.0008 -21
0.001 -21
};
\addlegendentry{MinBest Momentum}
\addplot [color6, mark=*, mark size=3, mark options={solid}]
table {%
0 21
0.0002 13.6926666666667
0.0004 -13.9404666666667
0.0006 -19.2744666666667
0.0008 -20.4186666666667
0.001 -20.6807333333333
};
\addlegendentry{MinQ}
\addplot [color7, mark=triangle*, mark size=3, mark options={solid,rotate=180}]
table {%
0 21
0.0002 14.8946666666667
0.0004 -15.7237333333333
0.0006 -19.6012666666667
0.0008 -20.5524666666667
0.001 -20.1024333333333
};
\addlegendentry{MaxDiff}
\addplot [color8, mark=pentagon*, mark size=3, mark options={solid}]
table {%
0 21
0.0002 -21
0.0004 -21
0.0006 -21
0.0008 -21
0.001 -21
};
\addlegendentry{PA-AD}
\end{axis}

\end{tikzpicture} 
  \vspace{-0.5em}
  \caption{\small{DQN Pong}}
  \label{sfig:dqnpong}
 \end{subfigure}
 \hfill
 \begin{subfigure}[t]{0.32\columnwidth}
  \centering
\begin{tikzpicture}[scale=0.5]

\definecolor{color0}{rgb}{0.886274509803922,0.290196078431373,0.2}
\definecolor{color1}{rgb}{0.203921568627451,0.541176470588235,0.741176470588235}
\definecolor{color2}{rgb}{0.596078431372549,0.556862745098039,0.835294117647059}
\definecolor{color3}{rgb}{0.984313725490196,0.756862745098039,0.368627450980392}
\definecolor{color4}{rgb}{0.729411764705882,0.333333333333333,0.827450980392157}
\definecolor{color5}{rgb}{0.117647058823529,0.564705882352941,1}
\definecolor{color6}{rgb}{0.0980392156862745,0.0980392156862745,0.43921568627451}
\definecolor{color7}{rgb}{0.201253172212011,0.690792081537903,0.479667611892753}
\definecolor{color8}{rgb}{0.8,0.270588235294118,0}

\begin{axis}[
axis background/.style={fill=white!89.8039215686275!black},
axis line style={white},
legend cell align={left},
legend style={
  fill opacity=0.4,
  draw opacity=1,
  text opacity=1,
  draw=white!80!black,
  fill=white!89.8039215686275!black
},
tick align=outside,
tick pos=left,
x grid style={white},
xlabel={\(\displaystyle \mathrm{\epsilon}\)},
xmajorgrids,
xmin=-5e-05, xmax=0.00105,
xtick style={color=white!15!black},
xtick={-0.0002,0,0.0002,0.0004,0.0006,0.0008,0.001,0.0012},
y grid style={white},
ylabel={Average Return},
ymajorgrids,
ymin=-2421.51, ymax=50851.71,
ytick style={color=white!15!black}
]
\path [fill=color0, fill opacity=0.2, very thin]
(axis cs:0,48270.5666666667)
--(axis cs:0,44409.6333333333)
--(axis cs:0.0002,37989.5333333333)
--(axis cs:0.0003,29736.6)
--(axis cs:0.0004,20792.4666666667)
--(axis cs:0.0006,10980)
--(axis cs:0.0008,1709.9)
--(axis cs:0.001,413.333333333333)
--(axis cs:0.001,763.366666666667)
--(axis cs:0.001,763.366666666667)
--(axis cs:0.0008,4573.86666666667)
--(axis cs:0.0006,15160.0666666667)
--(axis cs:0.0004,27513.3666666667)
--(axis cs:0.0003,36690.1666666667)
--(axis cs:0.0002,44470.1666666667)
--(axis cs:0,48270.5666666667)
--cycle;

\path [fill=color1, fill opacity=0.2, very thin]
(axis cs:0,48256.7)
--(axis cs:0,44436)
--(axis cs:0.0002,31399.5666666667)
--(axis cs:0.0003,30042.7333333333)
--(axis cs:0.0004,21596.2)
--(axis cs:0.0006,8229.93333333333)
--(axis cs:0.0008,1579.93333333333)
--(axis cs:0.001,250)
--(axis cs:0.001,656.666666666667)
--(axis cs:0.001,656.666666666667)
--(axis cs:0.0008,3803.7)
--(axis cs:0.0006,13500.5)
--(axis cs:0.0004,27647.6666666667)
--(axis cs:0.0003,37231.1666666667)
--(axis cs:0.0002,43070.8)
--(axis cs:0,48256.7)
--cycle;

\path [fill=color2, fill opacity=0.2, very thin]
(axis cs:0,48296.7)
--(axis cs:0,44453.2333333333)
--(axis cs:0.0002,30426.3666666667)
--(axis cs:0.0003,12008.6666666667)
--(axis cs:0.0004,2969.8)
--(axis cs:0.0006,913.333333333333)
--(axis cs:0.0008,0)
--(axis cs:0.001,0)
--(axis cs:0.001,5087)
--(axis cs:0.001,5087)
--(axis cs:0.0008,6603.63333333333)
--(axis cs:0.0006,10210.7333333333)
--(axis cs:0.0004,15355.6333333333)
--(axis cs:0.0003,26811.1)
--(axis cs:0.0002,42673.4)
--(axis cs:0,48296.7)
--cycle;

\path [fill=white!46.6666666666667!black, fill opacity=0.2, very thin]
(axis cs:0,48300.0666666667)
--(axis cs:0,44466.5)
--(axis cs:0.0002,42206.1)
--(axis cs:0.0003,37236.4333333333)
--(axis cs:0.0004,36213.1)
--(axis cs:0.0006,28169.7)
--(axis cs:0.0008,23903.1666666667)
--(axis cs:0.001,19316.6333333333)
--(axis cs:0.001,21673.4)
--(axis cs:0.001,21673.4)
--(axis cs:0.0008,28780.9333333333)
--(axis cs:0.0006,35040.3)
--(axis cs:0.0004,41177.0666666667)
--(axis cs:0.0003,43384)
--(axis cs:0.0002,48056.7333333333)
--(axis cs:0,48300.0666666667)
--cycle;

\path [fill=color3, fill opacity=0.2, very thin]
(axis cs:0,48430.2)
--(axis cs:0,44562.9)
--(axis cs:0.0002,0)
--(axis cs:0.0003,0)
--(axis cs:0.0004,0)
--(axis cs:0.0006,0)
--(axis cs:0.0008,0)
--(axis cs:0.001,0)
--(axis cs:0.001,0)
--(axis cs:0.001,0)
--(axis cs:0.0008,0)
--(axis cs:0.0006,0)
--(axis cs:0.0004,0)
--(axis cs:0.0003,0)
--(axis cs:0.0002,9785.13333333333)
--(axis cs:0,48430.2)
--cycle;

\addplot [color4, mark=square*, mark size=3, mark options={solid}]
table {%
0 46334.8466666667
0.0002 41178.2633333333
0.0003 33353.2666666667
0.0004 24106.12
0.0006 13056.7466666667
0.0008 3049.00333333333
0.001 583.28
};
\addlegendentry{MinBest}
\addplot [color5, mark=diamond*, mark size=3, mark options={solid}]
table {%
0 46307.34
0.0002 37304.0666666667
0.0003 33469.87
0.0004 24660.7466666667
0.0006 10965.9633333333
0.0008 2655.03666666667
0.001 437.976666666667
};
\addlegendentry{MinBest Momentum}
\addplot [color6, mark=*, mark size=3, mark options={solid}]
table {%
0 46315.1433333333
0.0002 36735.6966666667
0.0003 19484.5766666667
0.0004 8739.94333333333
0.0006 5105.28666666667
0.0008 2561.53
0.001 1935.64666666667
};
\addlegendentry{MinQ}
\addplot [color7, mark=triangle*, mark size=3, mark options={solid,rotate=180}]
table {%
0 46365.09
0.0002 45082.6566666667
0.0003 40214.0866666667
0.0004 38559.04
0.0006 31647.8633333333
0.0008 26359.9466666667
0.001 20458.37
};
\addlegendentry{MaxDiff}
\addplot [color8, mark=pentagon*, mark size=3, mark options={solid}]
table {%
0 46372.9633333333
0.0002 4065.21333333333
0.0003 0
0.0004 0
0.0006 0
0.0008 0
0.001 0
};
\addlegendentry{PA-AD}
\end{axis}

\end{tikzpicture} 
  \vspace{-0.5em}
  \caption{\small{DQN RoadRunner}}
  \label{sfig:dqnroadrunner}
 \end{subfigure}

  \begin{subfigure}[t]{0.32\columnwidth}
  \centering
\begin{tikzpicture}[scale=0.5]

\definecolor{color0}{rgb}{0.886274509803922,0.290196078431373,0.2}
\definecolor{color1}{rgb}{0.203921568627451,0.541176470588235,0.741176470588235}
\definecolor{color2}{rgb}{0.596078431372549,0.556862745098039,0.835294117647059}
\definecolor{color3}{rgb}{0.729411764705882,0.333333333333333,0.827450980392157}
\definecolor{color4}{rgb}{0.117647058823529,0.564705882352941,1}
\definecolor{color5}{rgb}{0.201253172212011,0.690792081537903,0.479667611892753}
\definecolor{color6}{rgb}{0.8,0.270588235294118,0}

\begin{axis}[
axis background/.style={fill=white!89.8039215686275!black},
axis line style={white},
legend cell align={left},
legend style={
  fill opacity=0.4,
  draw opacity=1,
  text opacity=1,
  draw=white!80!black,
  fill=white!89.8039215686275!black
},
tick align=outside,
tick pos=left,
x grid style={white},
xlabel={\(\displaystyle \mathrm{\epsilon}\)},
xmajorgrids,
xmin=-5e-05, xmax=0.00105,
xtick style={color=white!15!black},
xtick={-0.0002,0,0.0002,0.0004,0.0006,0.0008,0.001,0.0012},
y grid style={white},
ylabel={Average Return},
ymajorgrids,
ymin=-22.9616666666667, ymax=22.395,
ytick style={color=white!15!black}
]
\path [fill=color0, fill opacity=0.2, very thin]
(axis cs:0,20.3)
--(axis cs:0,19.1333333333333)
--(axis cs:0.0002,14.8)
--(axis cs:0.0004,0.166333333333333)
--(axis cs:0.0006,-13.5343333333333)
--(axis cs:0.0008,-17.3666666666667)
--(axis cs:0.001,-18.8333333333333)
--(axis cs:0.001,-16.733)
--(axis cs:0.001,-16.733)
--(axis cs:0.0008,-14.3666666666667)
--(axis cs:0.0006,-7.73266666666667)
--(axis cs:0.0004,7.267)
--(axis cs:0.0002,17.6333333333333)
--(axis cs:0,20.3)
--cycle;

\path [fill=color1, fill opacity=0.2, very thin]
(axis cs:0,20.3)
--(axis cs:0,19.1333333333333)
--(axis cs:0.0002,14.033)
--(axis cs:0.0004,-6.36666666666667)
--(axis cs:0.0006,-17.5333333333333)
--(axis cs:0.0008,-19.8333333333333)
--(axis cs:0.001,-20.6)
--(axis cs:0.001,-19.6666666666667)
--(axis cs:0.001,-19.6666666666667)
--(axis cs:0.0008,-18.3)
--(axis cs:0.0006,-14.033)
--(axis cs:0.0004,1.067)
--(axis cs:0.0002,17.3)
--(axis cs:0,20.3)
--cycle;

\path [fill=color2, fill opacity=0.2, very thin]
(axis cs:0,20.3333333333333)
--(axis cs:0,19.1663333333333)
--(axis cs:0.0002,19.0333333333333)
--(axis cs:0.0004,17.533)
--(axis cs:0.0006,14.9333333333333)
--(axis cs:0.0008,11.1)
--(axis cs:0.001,5.566)
--(axis cs:0.001,11.0666666666667)
--(axis cs:0.001,11.0666666666667)
--(axis cs:0.0008,15.134)
--(axis cs:0.0006,17.967)
--(axis cs:0.0004,19.4003333333333)
--(axis cs:0.0002,20.2666666666667)
--(axis cs:0,20.3333333333333)
--cycle;

\path [fill=white!46.6666666666667!black, fill opacity=0.2, very thin]
(axis cs:0,20.3333333333333)
--(axis cs:0,19.2)
--(axis cs:0.0002,14.4996666666667)
--(axis cs:0.0004,-7.40033333333333)
--(axis cs:0.0006,-18.1006666666667)
--(axis cs:0.0008,-20.4666666666667)
--(axis cs:0.001,-20.9)
--(axis cs:0.001,-20.3)
--(axis cs:0.001,-20.3)
--(axis cs:0.0008,-18.9996666666667)
--(axis cs:0.0006,-15.0666666666667)
--(axis cs:0.0004,-0.431666666666668)
--(axis cs:0.0002,17.5)
--(axis cs:0,20.3333333333333)
--cycle;

\addplot [color3, mark=square*, mark size=3, mark options={solid}]
table {%
0 19.7792
0.0002 16.2835666666667
0.0004 4.05156666666667
0.0006 -10.7964666666667
0.0008 -16.0018666666667
0.001 -17.8727666666667
};
\addlegendentry{MinBest}
\addplot [color4, mark=diamond*, mark size=3, mark options={solid}]
table {%
0 19.7770333333333
0.0002 15.7696
0.0004 -2.48433333333333
0.0006 -15.9226
0.0008 -19.1401333333333
0.001 -20.1498333333333
};
\addlegendentry{MinBest Momentum}
\addplot [color5, mark=triangle*, mark size=3, mark options={solid,rotate=180}]
table {%
0 19.7881
0.0002 19.6944666666667
0.0004 18.5239333333333
0.0006 16.5181666666667
0.0008 13.1804333333333
0.001 8.35896666666667
};
\addlegendentry{MaxDiff}
\addplot [color6, mark=pentagon*, mark size=3, mark options={solid}]
table {%
0 19.7951
0.0002 16.1625
0.0004 -4.15016666666667
0.0006 -16.6871333333333
0.0008 -19.7734
0.001 -20.6553333333333
};

\addlegendentry{PA-AD}
\end{axis}

\end{tikzpicture}
  \vspace{-0.5em}
  \caption{\small{A2C Pong}
  \label{sfig:a2cpong}}
 \end{subfigure}
 \hfill
 \begin{subfigure}[t]{0.32\columnwidth}
  \centering
\begin{tikzpicture}[scale=0.5]

\definecolor{color0}{rgb}{0.886274509803922,0.290196078431373,0.2}
\definecolor{color1}{rgb}{0.203921568627451,0.541176470588235,0.741176470588235}
\definecolor{color2}{rgb}{0.596078431372549,0.556862745098039,0.835294117647059}
\definecolor{color3}{rgb}{0.729411764705882,0.333333333333333,0.827450980392157}
\definecolor{color4}{rgb}{0.117647058823529,0.564705882352941,1}
\definecolor{color5}{rgb}{0.201253172212011,0.690792081537903,0.479667611892753}
\definecolor{color6}{rgb}{0.8,0.270588235294118,0}

\begin{axis}[
axis background/.style={fill=white!89.8039215686275!black},
axis line style={white},
legend cell align={left},
legend style={
  fill opacity=0.4,
  draw opacity=1,
  text opacity=1,
  draw=white!80!black,
  fill=white!89.8039215686275!black
},
tick align=outside,
tick pos=left,
x grid style={white},
xlabel={\(\displaystyle \mathrm{\epsilon}\)},
xmajorgrids,
xmin=-5e-05, xmax=0.00105,
xtick style={color=white!15!black},
xtick={-0.0002,0,0.0002,0.0004,0.0006,0.0008,0.001,0.0012},
y grid style={white},
ylabel={Average Return},
ymajorgrids,
ymin=-14.0905833333333, ymax=405.161583333333,
ytick style={color=white!15!black}
]
\path [fill=color0, fill opacity=0.2, very thin]
(axis cs:0,386.104666666667)
--(axis cs:0,320.497)
--(axis cs:0.0002,218.527666666667)
--(axis cs:0.0004,92.1323333333333)
--(axis cs:0.0006,25.132)
--(axis cs:0.0008,11.132)
--(axis cs:0.001,7.86666666666667)
--(axis cs:0.001,14.7)
--(axis cs:0.001,14.7)
--(axis cs:0.0008,32.268)
--(axis cs:0.0006,84.168)
--(axis cs:0.0004,188.801333333333)
--(axis cs:0.0002,327.967)
--(axis cs:0,386.104666666667)
--cycle;

\path [fill=color1, fill opacity=0.2, very thin]
(axis cs:0,385.534666666667)
--(axis cs:0,322.677333333333)
--(axis cs:0.0002,218.762333333333)
--(axis cs:0.0004,78.9323333333333)
--(axis cs:0.0006,21.133)
--(axis cs:0.0008,8.83333333333333)
--(axis cs:0.001,5.4)
--(axis cs:0.001,10.7006666666667)
--(axis cs:0.001,10.7006666666667)
--(axis cs:0.0008,28.5013333333333)
--(axis cs:0.0006,80.5693333333333)
--(axis cs:0.0004,185.745)
--(axis cs:0.0002,324.067)
--(axis cs:0,385.534666666667)
--cycle;

\path [fill=color2, fill opacity=0.2, very thin]
(axis cs:0,384.37)
--(axis cs:0,319.362)
--(axis cs:0.0002,306.965)
--(axis cs:0.0004,271.899)
--(axis cs:0.0006,234.599)
--(axis cs:0.0008,194.293666666667)
--(axis cs:0.001,149.491333333333)
--(axis cs:0.001,255.676333333333)
--(axis cs:0.001,255.676333333333)
--(axis cs:0.0008,302.000666666667)
--(axis cs:0.0006,339.500333333333)
--(axis cs:0.0004,359.004)
--(axis cs:0.0002,375.101)
--(axis cs:0,384.37)
--cycle;

\path [fill=white!46.6666666666667!black, fill opacity=0.2, very thin]
(axis cs:0,385.100666666667)
--(axis cs:0,321.992)
--(axis cs:0.0002,201.081333333333)
--(axis cs:0.0004,59.1323333333333)
--(axis cs:0.0006,14.232)
--(axis cs:0.0008,8.53333333333333)
--(axis cs:0.001,4.96633333333333)
--(axis cs:0.001,9.96666666666667)
--(axis cs:0.001,9.96666666666667)
--(axis cs:0.0008,23.967)
--(axis cs:0.0006,44.471)
--(axis cs:0.0004,145.577666666667)
--(axis cs:0.0002,311.058)
--(axis cs:0,385.100666666667)
--cycle;

\addplot [color3, mark=square*, mark size=3, mark options={solid}]
table {%
0 356.959833333333
0.0002 274.299233333333
0.0004 139.2662
0.0006 48.4745
0.0008 17.9876333333333
0.001 10.9568333333333
};
\addlegendentry{MinBest}
\addplot [color4, mark=diamond*, mark size=3, mark options={solid}]
table {%
0 357.166333333333
0.0002 274.5956
0.0004 130.4281
0.0006 44.0486666666667
0.0008 14.0673
0.001 7.69096666666667
};
\addlegendentry{MinBest Momentum}
\addplot [color5, mark=triangle*, mark size=3, mark options={solid,rotate=180}]
table {%
0 356.5696
0.0002 344.4191
0.0004 318.939466666667
0.0006 289.700266666667
0.0008 247.6648
0.001 203.808566666667
};
\addlegendentry{MaxDiff}
\addplot [color6, mark=pentagon*, mark size=3, mark options={solid}]
table {%
0 356.330333333333
0.0002 258.878866666667
0.0004 98.9151333333333
0.0006 22.9426
0.0008 12.4701
0.001 7.31593333333333
};
\addlegendentry{PA-AD}
\end{axis}

\end{tikzpicture} 
  \vspace{-0.5em}
  \caption{\small{A2C Breakout}}
  \label{sfig:a2cbreakout}
 \end{subfigure}
 \hfill
 \begin{subfigure}[t]{0.32\columnwidth}
  \centering
\begin{tikzpicture}[scale=0.5]

\definecolor{color0}{rgb}{0.886274509803922,0.290196078431373,0.2}
\definecolor{color1}{rgb}{0.203921568627451,0.541176470588235,0.741176470588235}
\definecolor{color2}{rgb}{0.596078431372549,0.556862745098039,0.835294117647059}
\definecolor{color3}{rgb}{0.729411764705882,0.333333333333333,0.827450980392157}
\definecolor{color4}{rgb}{0.117647058823529,0.564705882352941,1}
\definecolor{color5}{rgb}{0.201253172212011,0.690792081537903,0.479667611892753}
\definecolor{color6}{rgb}{0.8,0.270588235294118,0}

\begin{axis}[
axis background/.style={fill=white!89.8039215686275!black},
axis line style={white},
legend cell align={left},
legend style={
  fill opacity=0.4,
  draw opacity=1,
  text opacity=1,
  draw=white!80!black,
  fill=white!89.8039215686275!black
},
tick align=outside,
tick pos=left,
x grid style={white},
xlabel={\(\displaystyle \mathrm{\epsilon}\)},
xmajorgrids,
xmin=-0.0005, xmax=0.0105,
xtick style={color=white!15!black},
xtick={-0.002,0,0.002,0.004,0.006,0.008,0.01,0.012},
y grid style={white},
ylabel={Average Return},
ymajorgrids,
ymin=-88.667, ymax=1862.007,
ytick style={color=white!15!black}
]
\path [fill=color0, fill opacity=0.2, very thin]
(axis cs:0,1772.66666666667)
--(axis cs:0,1721.99333333333)
--(axis cs:0.002,1257.30666666667)
--(axis cs:0.004,430.66)
--(axis cs:0.006,230)
--(axis cs:0.008,199.993333333333)
--(axis cs:0.01,180)
--(axis cs:0.01,249.333333333333)
--(axis cs:0.01,249.333333333333)
--(axis cs:0.008,279.34)
--(axis cs:0.006,331.366666666667)
--(axis cs:0.004,600.666666666667)
--(axis cs:0.002,1480.67333333333)
--(axis cs:0,1772.66666666667)
--cycle;

\path [fill=color1, fill opacity=0.2, very thin]
(axis cs:0,1773.33333333333)
--(axis cs:0,1720)
--(axis cs:0.002,1079.98)
--(axis cs:0.004,235.333333333333)
--(axis cs:0.006,130.66)
--(axis cs:0.008,111.333333333333)
--(axis cs:0.01,102.666666666667)
--(axis cs:0.01,150)
--(axis cs:0.01,150)
--(axis cs:0.008,156)
--(axis cs:0.006,184.666666666667)
--(axis cs:0.004,332.006666666667)
--(axis cs:0.002,1329.42)
--(axis cs:0,1773.33333333333)
--cycle;

\path [fill=color2, fill opacity=0.2, very thin]
(axis cs:0,1773.34)
--(axis cs:0,1718.66666666667)
--(axis cs:0.002,1648.64)
--(axis cs:0.004,1395.98)
--(axis cs:0.006,1005.29333333333)
--(axis cs:0.008,640.653333333333)
--(axis cs:0.01,404)
--(axis cs:0.01,574.693333333333)
--(axis cs:0.01,574.693333333333)
--(axis cs:0.008,842.68)
--(axis cs:0.006,1236.67333333333)
--(axis cs:0.004,1578.67333333333)
--(axis cs:0.002,1740)
--(axis cs:0,1773.34)
--cycle;

\path [fill=white!46.6666666666667!black, fill opacity=0.2, very thin]
(axis cs:0,1772.66666666667)
--(axis cs:0,1720.63333333333)
--(axis cs:0.002,1413.99333333333)
--(axis cs:0.004,8)
--(axis cs:0.006,0)
--(axis cs:0.008,0)
--(axis cs:0.01,0)
--(axis cs:0.01,12.6733333333333)
--(axis cs:0.01,12.6733333333333)
--(axis cs:0.008,5.33333333333333)
--(axis cs:0.006,8.66666666666667)
--(axis cs:0.004,37.3466666666667)
--(axis cs:0.002,1551.34666666667)
--(axis cs:0,1772.66666666667)
--cycle;

\addplot [color3, mark=square*, mark size=3, mark options={solid}]
table {%
0 1752.862
0.002 1370.26333333333
0.004 515.289333333333
0.006 280.339333333333
0.008 236.071333333333
0.01 214.616
};
\addlegendentry{MinBest}
\addplot [color4, mark=diamond*, mark size=3, mark options={solid}]
table {%
0 1752.85266666667
0.002 1208.06266666667
0.004 284.661333333333
0.006 156.606
0.008 133.112666666667
0.01 124.346
};
\addlegendentry{MinBest Momentum}
\addplot [color5, mark=triangle*, mark size=3, mark options={solid,rotate=180}]
table {%
0 1753.62933333333
0.002 1700.238
0.004 1492.66133333333
0.006 1113.48066666667
0.008 746.219333333333
0.01 488.846666666667
};
\addlegendentry{MaxDiff}
\addplot [color6, mark=pentagon*, mark size=3, mark options={solid}]
table {%
0 1752.45066666667
0.002 1488.048
0.004 19.0406666666667
0.006 3.448
0.008 1.66
0.01 4.14733333333333
};
\addlegendentry{PA-AD}
\end{axis}

\end{tikzpicture} 
  \vspace{-0.5em}
  \caption{\small{A2C Seaquest}}
  \label{sfig:a2cseaquest}
 \end{subfigure}
\caption{\small{Comparison of different attack methods against DQN and A2C victims in Atari w.r.t. different budget $\epsilon$'s.}}
\label{fig:dqn_epsilon}
\end{figure}
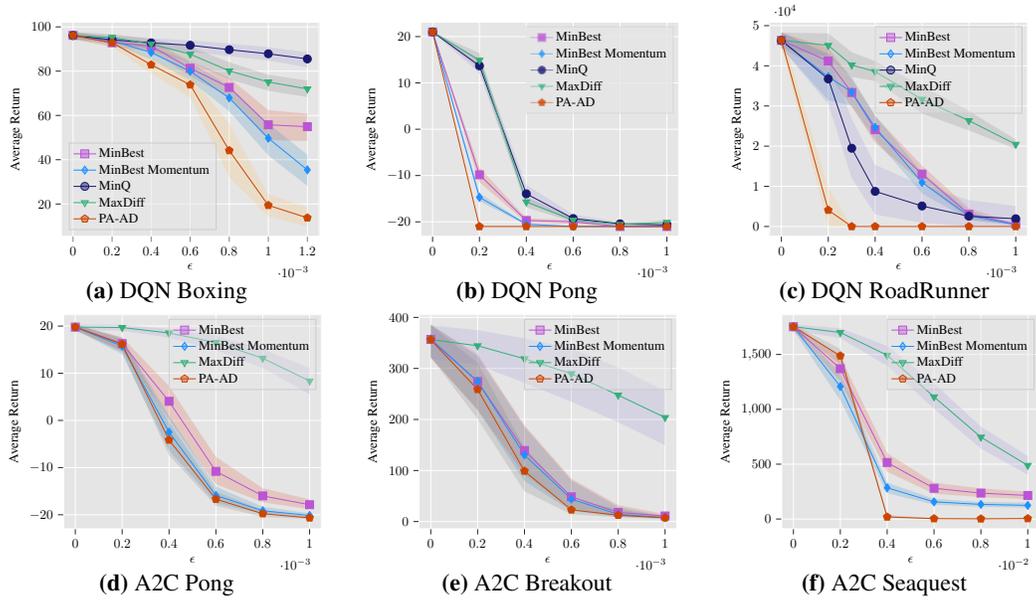

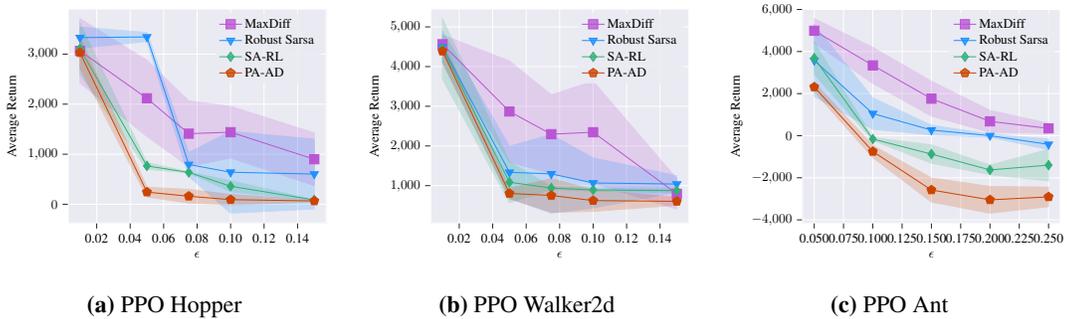
\begin{figure}[!htbp]
  \centering
   \begin{subfigure}[t]{0.31\columnwidth}
    \centering
\begin{tikzpicture}[scale=0.5]

\definecolor{color0}{rgb}{0.917647058823529,0.917647058823529,0.949019607843137}
\definecolor{color1}{rgb}{0.729411764705882,0.333333333333333,0.827450980392157}
\definecolor{color2}{rgb}{0.117647058823529,0.564705882352941,1}
\definecolor{color3}{rgb}{0.201253172212011,0.690792081537903,0.479667611892753}
\definecolor{color4}{rgb}{0.8,0.270588235294118,0}

\begin{axis}[
axis background/.style={fill=color0},
axis line style={white},
legend cell align={left},
legend style={fill opacity=0.8, draw opacity=1, text opacity=1, draw=none, fill=color0},
tick align=outside,
tick pos=left,
x grid style={white},
xlabel={\(\displaystyle \mathrm{\epsilon}\)},
xmajorgrids,
xmin=0.003, xmax=0.157,
scaled ticks=false,
xtick={0.02,0.04,0.06,0.08,0.10,0.12,0.14},
xticklabels={0.02,0.04,0.06,0.08,0.10,0.12,0.14},
xtick style={color=white!15!black},
y grid style={white},
ylabel={Average Return},
ymajorgrids,
ymin=-372.35, ymax=3903.35,
ytick style={color=white!15!black}
]
\path [draw=color1, fill=color1, opacity=0.2, line width=0.12pt]
(axis cs:0.01,3709)
--(axis cs:0.01,2421)
--(axis cs:0.05,1357)
--(axis cs:0.075,755)
--(axis cs:0.1,925)
--(axis cs:0.15,375)
--(axis cs:0.15,1425)
--(axis cs:0.15,1425)
--(axis cs:0.1,1955)
--(axis cs:0.075,2065)
--(axis cs:0.05,2877)
--(axis cs:0.01,3709)
--cycle;

\path [draw=color2, fill=color2, opacity=0.2, line width=0.12pt]
(axis cs:0.01,3541)
--(axis cs:0.01,3119)
--(axis cs:0.05,3254)
--(axis cs:0.075,556)
--(axis cs:0.1,-178)
--(axis cs:0.15,-93)
--(axis cs:0.15,1299)
--(axis cs:0.15,1299)
--(axis cs:0.1,1458)
--(axis cs:0.075,1032)
--(axis cs:0.05,3432)
--(axis cs:0.01,3541)
--cycle;

\path [draw=color3, fill=color3, opacity=0.2, line width=0.12pt]
(axis cs:0.01,3622)
--(axis cs:0.01,2596)
--(axis cs:0.05,699)
--(axis cs:0.075,627)
--(axis cs:0.1,251)
--(axis cs:0.15,86)
--(axis cs:0.15,88)
--(axis cs:0.15,88)
--(axis cs:0.1,469)
--(axis cs:0.075,645)
--(axis cs:0.05,833)
--(axis cs:0.01,3622)
--cycle;

\path [draw=color4, fill=color4, opacity=0.2, line width=0.12pt]
(axis cs:0.01,3340)
--(axis cs:0.01,2718)
--(axis cs:0.05,146)
--(axis cs:0.075,24)
--(axis cs:0.1,-12)
--(axis cs:0.15,48)
--(axis cs:0.15,84)
--(axis cs:0.15,84)
--(axis cs:0.1,196)
--(axis cs:0.075,296)
--(axis cs:0.05,340)
--(axis cs:0.01,3340)
--cycle;

\addplot [color1, mark=square*, mark size=3.5, mark options={solid}]
table {%
0.01 3065
0.05 2117
0.075 1410
0.1 1440
0.15 900
};
\addlegendentry{MaxDiff}
\addplot [color2, mark=triangle*, mark size=3.5, mark options={solid,rotate=180}]
table {%
0.01 3330
0.05 3343
0.075 794
0.1 640
0.15 603
};
\addlegendentry{Robust Sarsa}
\addplot [color3, mark=diamond*, mark size=3.5, mark options={solid}]
table {%
0.01 3109
0.05 766
0.075 636
0.1 360
0.15 87
};
\addlegendentry{SA-RL}
\addplot [color4, mark=pentagon*, mark size=3.5, mark options={solid}]
table {%
0.01 3029
0.05 243
0.075 160
0.1 92
0.15 66
};
\addlegendentry{PA-AD}
\end{axis}

\end{tikzpicture}
    \vspace{-0.5em}
    \caption{\small{PPO Hopper}}
   \end{subfigure}
   \hfill
   \begin{subfigure}[t]{0.31\columnwidth}
    \centering
\begin{tikzpicture}[scale=0.5]

\definecolor{color0}{rgb}{0.917647058823529,0.917647058823529,0.949019607843137}
\definecolor{color1}{rgb}{0.729411764705882,0.333333333333333,0.827450980392157}
\definecolor{color2}{rgb}{0.117647058823529,0.564705882352941,1}
\definecolor{color3}{rgb}{0.201253172212011,0.690792081537903,0.479667611892753}
\definecolor{color4}{rgb}{0.8,0.270588235294118,0}

\begin{axis}[
axis background/.style={fill=color0},
axis line style={white},
legend cell align={left},
legend style={fill opacity=0.8, draw opacity=1, text opacity=1, draw=none, fill=color0},
tick align=outside,
tick pos=left,
x grid style={white},
xlabel={\(\displaystyle \mathrm{\epsilon}\)},
xmajorgrids,
xmin=0.003, xmax=0.157,
scaled ticks=false,
xtick={0.02,0.04,0.06,0.08,0.10,0.12,0.14},
xticklabels={0.02,0.04,0.06,0.08,0.10,0.12,0.14},
xtick style={color=white!15!black},
y grid style={white},
ylabel={Average Return},
ymajorgrids,
ymin=61.95, ymax=5453.05,
ytick style={color=white!15!black}
]
\path [draw=color1, fill=color1, opacity=0.2, line width=0.12pt]
(axis cs:0.01,4796)
--(axis cs:0.01,4330)
--(axis cs:0.05,1598)
--(axis cs:0.075,1307)
--(axis cs:0.1,1061)
--(axis cs:0.15,413)
--(axis cs:0.15,1175)
--(axis cs:0.15,1175)
--(axis cs:0.1,3631)
--(axis cs:0.075,3287)
--(axis cs:0.05,4140)
--(axis cs:0.01,4796)
--cycle;

\path [draw=color2, fill=color2, opacity=0.2, line width=0.12pt]
(axis cs:0.01,4895)
--(axis cs:0.01,4101)
--(axis cs:0.05,682)
--(axis cs:0.075,307)
--(axis cs:0.1,438)
--(axis cs:0.15,833)
--(axis cs:0.15,1247)
--(axis cs:0.15,1247)
--(axis cs:0.1,1698)
--(axis cs:0.075,2287)
--(axis cs:0.05,1990)
--(axis cs:0.01,4895)
--cycle;

\path [draw=color3, fill=color3, opacity=0.2, line width=0.12pt]
(axis cs:0.01,5208)
--(axis cs:0.01,3684)
--(axis cs:0.05,570)
--(axis cs:0.075,903)
--(axis cs:0.1,862)
--(axis cs:0.15,828)
--(axis cs:0.15,920)
--(axis cs:0.15,920)
--(axis cs:0.1,924)
--(axis cs:0.075,977)
--(axis cs:0.05,1602)
--(axis cs:0.01,5208)
--cycle;

\path [draw=color4, fill=color4, opacity=0.2, line width=0.12pt]
(axis cs:0.01,4665)
--(axis cs:0.01,4113)
--(axis cs:0.05,674)
--(axis cs:0.075,330)
--(axis cs:0.1,352)
--(axis cs:0.15,508)
--(axis cs:0.15,702)
--(axis cs:0.15,702)
--(axis cs:0.1,896)
--(axis cs:0.075,1176)
--(axis cs:0.05,934)
--(axis cs:0.01,4665)
--cycle;

\addplot [color1, mark=square*, mark size=3.5, mark options={solid}]
table {%
0.01 4563
0.05 2869
0.075 2297
0.1 2346
0.15 794
};
\addlegendentry{MaxDiff}
\addplot [color2, mark=triangle*, mark size=3.5, mark options={solid,rotate=180}]
table {%
0.01 4498
0.05 1336
0.075 1297
0.1 1068
0.15 1040
};
\addlegendentry{Robust Sarsa}
\addplot [color3, mark=diamond*, mark size=3.5, mark options={solid}]
table {%
0.01 4446
0.05 1086
0.075 940
0.1 893
0.15 874
};
\addlegendentry{SA-RL}
\addplot [color4, mark=pentagon*, mark size=3.5, mark options={solid}]
table {%
0.01 4389
0.05 804
0.075 753
0.1 624
0.15 605
};
\addlegendentry{PA-AD}
\end{axis}

\end{tikzpicture} 
    \vspace{-0.5em}
    \caption{\small{PPO Walker2d}}
   \end{subfigure}
   \hfill
   \begin{subfigure}[t]{0.31\columnwidth}
    \centering
\begin{tikzpicture}[scale=0.5]

\definecolor{color0}{rgb}{0.917647058823529,0.917647058823529,0.949019607843137}
\definecolor{color1}{rgb}{0.729411764705882,0.333333333333333,0.827450980392157}
\definecolor{color2}{rgb}{0.117647058823529,0.564705882352941,1}
\definecolor{color3}{rgb}{0.201253172212011,0.690792081537903,0.479667611892753}
\definecolor{color4}{rgb}{0.8,0.270588235294118,0}

\begin{axis}[
axis background/.style={fill=color0},
axis line style={white},
legend cell align={left},
legend style={fill opacity=0.8, draw opacity=1, text opacity=1, draw=none, fill=color0},
tick align=outside,
tick pos=left,
x grid style={white},
xlabel={\(\displaystyle \mathrm{\epsilon}\)},
xmajorgrids,
xmin=0.04, xmax=0.26,
xtick style={color=white!15!black},
xtick={0.025,0.05,0.075,0.1,0.125,0.15,0.175,0.2,0.225,0.25,0.275},
xticklabels={0.025,0.050,0.075,0.100,0.125,0.150,0.175,0.200,0.225,0.250,0.275},
y grid style={white},
ylabel={Average Return},
ymajorgrids,
ymin=-4144.3, ymax=6026.3,
ytick style={color=white!15!black}
]
\path [draw=color1, fill=color1, opacity=0.2, line width=0.12pt]
(axis cs:0.05,5564)
--(axis cs:0.05,4412)
--(axis cs:0.1,2487)
--(axis cs:0.15,931)
--(axis cs:0.2,162)
--(axis cs:0.25,109)
--(axis cs:0.25,585)
--(axis cs:0.25,585)
--(axis cs:0.2,1202)
--(axis cs:0.15,2587)
--(axis cs:0.1,4195)
--(axis cs:0.05,5564)
--cycle;

\path [draw=color2, fill=color2, opacity=0.2, line width=0.12pt]
(axis cs:0.05,5280)
--(axis cs:0.05,1868)
--(axis cs:0.1,307)
--(axis cs:0.15,41)
--(axis cs:0.2,-72)
--(axis cs:0.25,-661)
--(axis cs:0.25,-165)
--(axis cs:0.25,-165)
--(axis cs:0.2,76)
--(axis cs:0.15,495)
--(axis cs:0.1,1795)
--(axis cs:0.05,5280)
--cycle;

\path [draw=color3, fill=color3, opacity=0.2, line width=0.12pt]
(axis cs:0.05,4449)
--(axis cs:0.05,2927)
--(axis cs:0.1,-228)
--(axis cs:0.15,-1308)
--(axis cs:0.2,-1851)
--(axis cs:0.25,-2152)
--(axis cs:0.25,-648)
--(axis cs:0.25,-648)
--(axis cs:0.2,-1403)
--(axis cs:0.15,-436)
--(axis cs:0.1,-92)
--(axis cs:0.05,4449)
--cycle;

\path [draw=color4, fill=color4, opacity=0.2, line width=0.12pt]
(axis cs:0.05,2530)
--(axis cs:0.05,2098)
--(axis cs:0.1,-1040)
--(axis cs:0.15,-3152)
--(axis cs:0.2,-3682)
--(axis cs:0.25,-3372)
--(axis cs:0.25,-2444)
--(axis cs:0.25,-2444)
--(axis cs:0.2,-2416)
--(axis cs:0.15,-2008)
--(axis cs:0.1,-464)
--(axis cs:0.05,2530)
--cycle;

\addplot [color1, mark=square*, mark size=3.5, mark options={solid}]
table {%
0.05 4988
0.1 3341
0.15 1759
0.2 682
0.25 347
};
\addlegendentry{MaxDiff}
\addplot [color2, mark=triangle*, mark size=3.5, mark options={solid,rotate=180}]
table {%
0.05 3574
0.1 1051
0.15 268
0.2 2
0.25 -413
};
\addlegendentry{Robust Sarsa}
\addplot [color3, mark=diamond*, mark size=3.5, mark options={solid}]
table {%
0.05 3688
0.1 -160
0.15 -872
0.2 -1627
0.25 -1400
};
\addlegendentry{SA-RL}
\addplot [color4, mark=pentagon*, mark size=3.5, mark options={solid}]
table {%
0.05 2314
0.1 -752
0.15 -2580
0.2 -3049
0.25 -2908
};
\addlegendentry{PA-AD}
\end{axis}

\end{tikzpicture} 
    \vspace{-0.5em}
    \caption{\small{PPO Ant}}
   \end{subfigure}
\caption{\small{Comparison of different attack methods against PPO victims in MuJoCo w.r.t. different budget $\epsilon$'s.}}
\label{fig:ppo_epsilon}
\end{figure}

\subsubsection{Attacking Performance with Various Budgets}
\label{app:exp_epsilon}
In Table \ref{tbl:atari}, we report the performance of our \ours attacker under a chosen epsilon across different environments. 
To see how \ours algorithm performs across different values of $\epsilon$'s, here we select three Atari environments each for DQN and A2C victim agents and plot the performance of \ours under various $\epsilon$'s compared with the baseline attackers in Figure~\ref{fig:dqn_epsilon}. We can see from the figures that our \ours universally outperforms baseline attackers concerning various $\epsilon$'s.

In Table \ref{tbl:mujoco}, we provide the evaluation results of \ours under a commonly unused epsilon in four MuJoCo experiments (\cite{zhang2020robust, zhang2021robust}) to show that \ours attacker also has the best attacking performance compared with other attackers under different $\epsilon$'s in Figure~\ref{fig:ppo_epsilon}.

\vspace{-0.5em}
\subsubsection{Hyperparameter Test}
\label{app:exp_hyper}
In our Actor-Director Framework, solving an optimal actor is a constraint optimization problem. Thus, in our algorithm, we instead use Lagrangian relaxation for the actor's constraint optimization. In this section, we report the effects of different choices of the relaxation hyperparameter $\lambda$ on the final performance of our algorithm. Although we set $\lambda$ by default to be 1 and keep it fixed throughout all of the other experiments, here we find that in fact, difference choice of $\lambda$ has only minor impact on the performance of the attacker. This result demonstrates that our \ours algorithm is robust to different choices of relaxation hyperparameters.\newline

\begin{table*}[!h]
   \centering
   \captionsetup[subtable]{position = below}
  \captionsetup[table]{position=top}
   \caption{\small{Performance of \ours across difference choices of the relaxation hyperparameter $\lambda$}}
   \begin{subtable}{0.4\linewidth}
       \centering
       \begin{tabular}{|c|c|c|}
           \hline
           {} & {Pong} & {Boxing}  \\ \hline
{Nature Reward}  & $21\pm 0$ & $96\pm 4$ \\ \hline
{$\lambda=0.2$} & $-19 \pm 2$ & $16 \pm 12$ \\ \hline
{$\lambda=0.4$} & $-18 \pm 2$ & $17 \pm 12$  \\ \hline
{$\lambda=0.6$} & $-20 \pm 2$ & $19 \pm 15$  \\  \hline
{$\lambda=0.8$} & $-19 \pm 2$ & $14 \pm 12$  \\  \hline
{$\lambda=1.0$} & $-19 \pm 2$ & $15 \pm 12$ \\   \hline
{$\lambda=2.0$} & $-20 \pm 1$ & $21 \pm 15$  \\   \hline
	{$\lambda=5.0$} & $-20 \pm 1$ & $19  \pm 14$ \\ \hline 
       \end{tabular}
       \caption{Atari}
       \label{tab:atari}
   \end{subtable}%
   \hspace*{2.5em}
   \begin{subtable}{0.6\linewidth}
       \centering
       \begin{tabular}{|c|c|c|} \hline
			{} & {Ant} & {Walker}  \\ \hline
	   {Nature Reward}  & $5687\pm 758$ & $4472\pm 635$ \\ \hline
	     {$\lambda=0.2$} & $-2274 \pm 632$ & $897 \pm 157$ \\ \hline
	 	 {$\lambda=0.4$} & $-2239 \pm 716$ & $923 \pm 132$  \\ \hline
	 	 {$\lambda=0.6$} & $-2456 \pm 853$ & $954 \pm 105$  \\  \hline
	 	 {$\lambda=0.8$} & $-2597 \pm 662$ & $872 \pm 162$  \\  \hline
	 	 {$\lambda=1.0$} & $-2580 \pm 872$ & $804 \pm 130$ \\   \hline
	    {$\lambda=2.0$} & $-2378 \pm 794$ & $795 \pm 124$  \\   \hline
	    {$\lambda=5.0$} & $-2425 \pm 765$ & $814 \pm 140$ \\ \hline   
	 \end{tabular}
        \caption{Mujoco}
         \label{tab:mujoco}
   \end{subtable}
\end{table*}


\subsubsection{Empirical Comparison between \ours and \saname}
\label{app:exp_compare}

In this section, we provide more empirical comparison between \ours and \saname. 
Note that \ours and \saname are different in terms of their applicable scenarios: \saname is a black-box attack methods, while \ours is a white-box attack method. 
When the victim model is known, we can see that by a proper exploitation of the victim model, \ours demonstrates better attack performance, higher sample and computational efficiency, as well as higher scalability.
Appendix~\ref{app:compare_sa} shows detailed theoretical comparison between \saname and \ours. 

\textbf{\ours has better convergence property than \saname.}
In Figure~\ref{fig:converge}, we plot the learning curves of \saname and \ours in the CartPole environment and the Ant environment. 
Compared with \saname attacker, \ours has a higher attacking strength in the beginning and converges much faster. In Figure~\ref{sfig:conv_ant}, we can see that \ours has a ``warm-start'' (the initial reward of the victim is already significantly reduced) compared with \saname attacker which starts from scratch. This is because \ours always tries to maximize the distance between the perturbed policy and the original victim policy in every step according to the actor function~\eqref{eq:actor}. So in the beginning of learning, \ours works similarly to the \maxdiff attacker, while \saname works similarly to a random attacker. We also note that although \ours algorithm is proposed particularly for environments that have state spaces much larger than action spaces, in CartPole where the state dimensions is fewer than the number of actions, \ours still works better than \saname because of the distance maximization.


\begin{figure}[!htbp]
\centering
 \begin{subfigure}[t]{0.45\columnwidth}
  \centering
  \includegraphics[width=\textwidth]{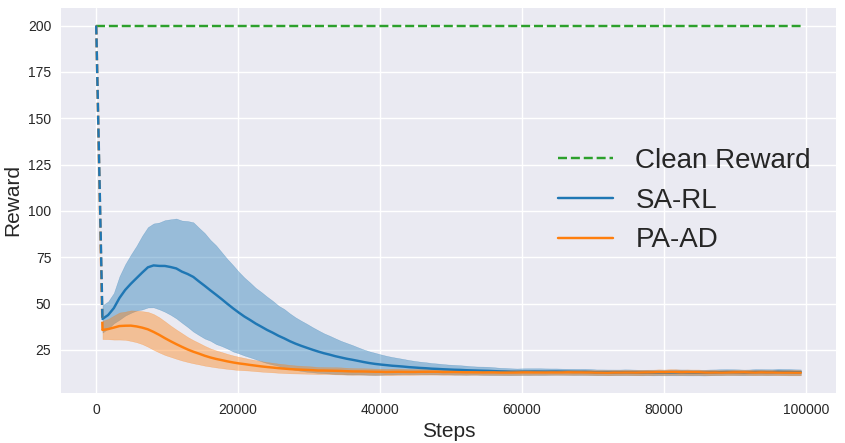}
	\caption{\small{Learning curve of \saname and \ours attacker against an A2C victim in CartPole.}}
	\label{sfig:conv_cart}
 \end{subfigure}
 \hfill
 \begin{subfigure}[t]{0.45\columnwidth}
  	\includegraphics[width=\textwidth]{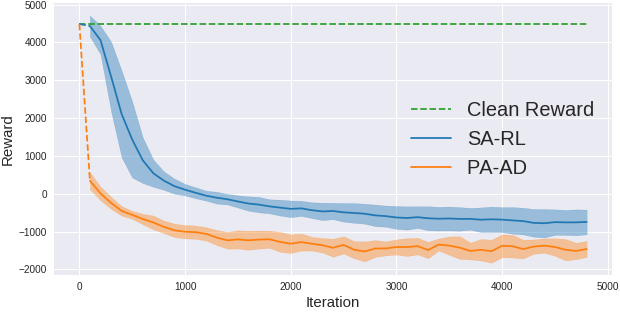}
	\caption{\small{Learning curve of \saname and \ours attacker against a PPO victim in Ant.}}
	\label{sfig:conv_ant}
 \end{subfigure}
\vspace{-0.5em}
\caption{\small{Comparison of convergence rate between \saname and \ours in Ant and Cartpole. Results are averaged over 10 random seeds.}}
\label{fig:converge}
\end{figure}

\textbf{\ours is more computationally efficient than \saname.} 
Our experiments in Section~\ref{sec:exp} show that \ours converges to a better adversary than \saname given the same number of training steps, which verifies the sample efficiency of \ours. Another aspect of efficiency is based on the computational resources, including running time and required memory. 
For RL algorithms, the computation cost comes from the interaction with the environment (the same for \saname and \ours) and the policy/value update. 
If the state space $\mathcal{S}$ is higher-dimensional than the action space $\mathcal{A}$, then \saname requires a larger policy network than \ours since \saname has a higher-dimensional output, and thus \saname has more network parameters than \ours, which require more memory cost and more computation operations. On the other hand, \ours requires to solve an additional optimization problem defined by the actor objective \eqref{eq:actor} or \eqref{eq:actor_det_main}. In our implementation, we use FGSM which only requires one-step gradient computation and is thus efficient. But if more advanced optimization algorithms (e.g. PGD) are used, more computations may be needed.
In summary, if $\mathcal{S}$ is much larger than $\mathcal{A}$, PA-AD is more computational efficient than SA-RL; if $\mathcal{A}$ is much larger than $\mathcal{S}$, SA-RL is more efficient than PA-AD; if the sizes of $\mathcal{S}$ and $\mathcal{A}$ are similar, PA-AD may be slightly more expensive than SA-RL, depending on the optimization methods selected for the actor.

To verify the above analysis, we compare computational training time for training \saname and \ours attackers, which shows that \ours is more computationally efficient. Especially on the environment with high-dimensional states like Ant, \ours takes significantly less training time than \saname (and finds a better adversary than \saname), which quantifies the efficiency of our algorithm in empirical experiments.

\begin{table}[!htbp]
  \centering
  \renewcommand{\arraystretch}{1.2}
  \setlength{\tabcolsep}{4pt}
  \begin{tabular}{c|cccc}
  \toprule
    Method      & \textbf{Hopper} & \textbf{Walker2d} & \textbf{HalfCheetah} & \textbf{Ant}  \\
    \hline
    \textbf{\saname} & 1.80   & 1.92     & 1.76        & 4.88 \\
    \textbf{\ours} & 1.43   & 1.46     & 1.40        & 3.76 \\
  \bottomrule
  \end{tabular}
  \vspace{1em}
  \caption{Average training time (in hours) of \saname and \ours in MuJoCo environments, using GeForce RTX 2080 Ti GPUs. For Hopper, Walker2d and HalfCheetah, \saname and \ours are both trained for 2 million steps; for Ant, \saname and \ours are both trained for 5 million steps}
  \label{efficiency}
  \vspace{-0.5em}
  \end{table}

\textbf{\ours is less sensitive to hyperparameters settings than \saname.}
In addition to better final attacking results and convergence property, we also observe that \ours is much less sensitive to hyerparameter settings compared to SA-RL. On the Walker environment, we run a grid search over 216 different configurations of hyperparameters, including actor learning rate, critic learning rate, entropy regularization coefficient, and clipping threshold in PPO. Here for comparison we plot two histograms of the agent's final attacked results across different hyperparameter configurations. \\

\begin{figure}[!htbp]
\centering
 \begin{subfigure}[t]{0.45\columnwidth}
  \centering
  \includegraphics[width=\textwidth]{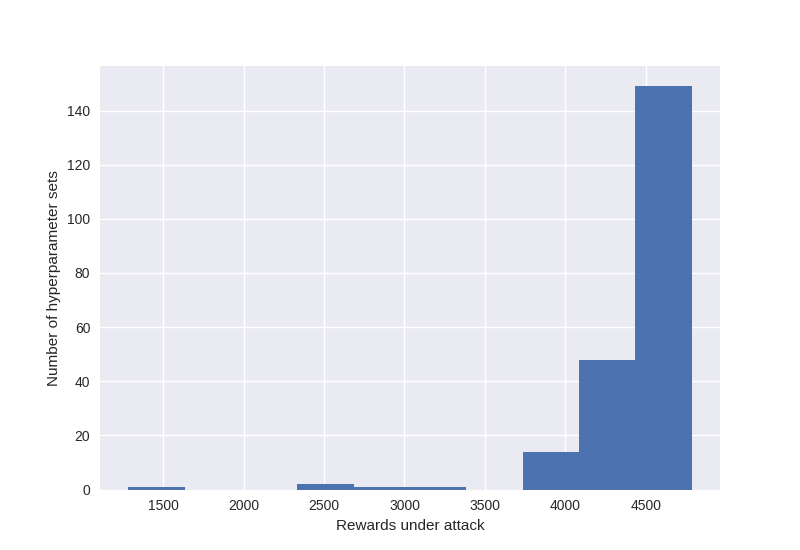}
	\caption{\small{\saname Attacker}}
	\label{sfig:hyper_sa}
 \end{subfigure}
 \hfill
 \begin{subfigure}[t]{0.45\columnwidth}
  	\includegraphics[width=\textwidth]{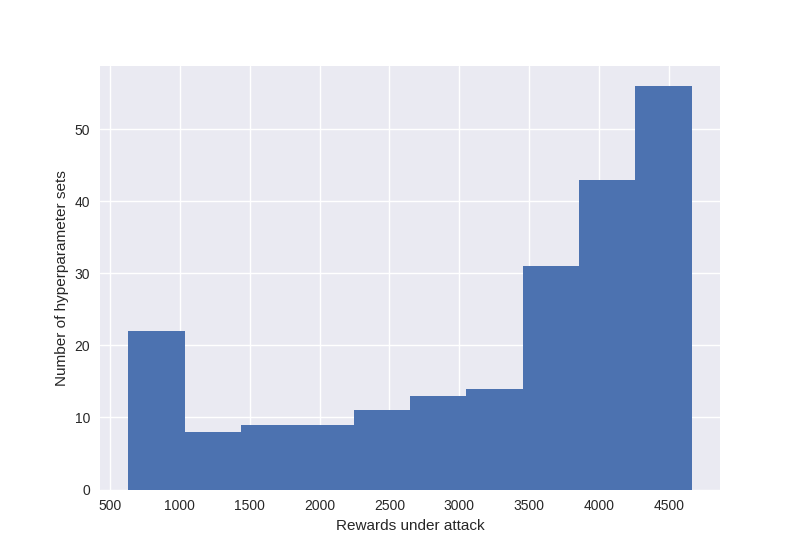}
	\caption{\small{\ours Attacker}}
	\label{sfig:hyper_ours}
 \end{subfigure}
\vspace{-0.5em}
\caption{\small{Histograms of victim rewards under different hyperparameter settings of \saname and \ours on Walker.}}
\label{fig:hyper}
\end{figure}

The perturbation radius is set to be 0.05, for which the mean reward reported by~\citet{zhang2020robust} is 1086. However, as we can see from this histogram, only one out of the 216 configurations of \saname achieves an attacking reward within the range 1000-2000, while in most hyperparameter settings, the mean attacked return lies in the range 4000-4500. In contrast, about $10\%$ hyperparameter settings of \ours algorithm are able to reduce the reward to 500-1000, and another $10\%$ settings could reduce the reward to 1000-2000.
Therefore, the performance of \ours attacker is generally better and more robust across different hyperparameter configurations than \saname.

\subsubsection{Robust Training Efficiency on MuJoCo by \ouratla}
\label{app:atla_mujoco}

In the ATLA process proposed by~\citet{zhang2021robust}, one alternately trains an agent and an adversary. 
As a result, the agent policy may learn to adapt to the specific type of attacker it encounters during training.
In Table~\ref{tbl:atla}, we present the performance of our robust training method \oursppo compared with \atlappo under different types of attacks during testing. 
\atlappo uses \saname to train the adversary, while \oursppo uses \ours to train the adversary during alternating training.
As a result, we can see that \atlappo models perform better under the \saname attack, and \oursppo performs better under the \ours attack.
However, the advantage of \atlappo over \oursppo against \saname attack is much smaller than the advantage of \oursppo over \atlappo against \ours attack.
In addition, our \oursppo models significantly outperform \atlappo models against other heuristic attack methods, and achieve higher average rewards across all attack methods.
Therefore, \oursppo is generally more robust than \atlappo.

Furthermore, the efficiency of training an adversary could be the bottleneck in the ATLA~\cite{zhang2021robust} process for practical usage.
Appendix~\ref{app:exp_compare} suggests that our \ours generally converges faster than \saname. Therefore, when the computation resources are limited, \oursppo can train robust agents faster than \atlappo. We conduct experiments on continuous control environments to empirically show the efficiency comparison between \oursppo and \atlappo.
In Table~\ref{tbl:atla_efficient}, we show the robustness performance of two ATLA methods with 2 million training steps for Hopper, Walker and Halfcheetah and 5 million steps for Ant (Compared with results in Table~\ref{tbl:atla}, we have reduced training steps by half or more).
It can be seen that our \oursppo models still significantly outperform the original \atlappo models under different types of attacks. More importantly, our \oursppo achieves higher robustness under \saname attacks in Walker and Ant, suggesting the efficiency and effectiveness of our method.


\begin{table}[!htbp]
\centering
\vspace{-0.5em}
\resizebox{\textwidth}{!}{%
\setlength{\tabcolsep}{5pt}
\begin{tabular}{*{9}{c}}
\toprule
\textbf{Environment} & $\boldsymbol{\epsilon}$ & \textbf{step}(million) & \textbf{Model} & \textbf{\specialcell{Natural \\Reward}} & \textbf{\specialcell{RS \\ \cite{zhang2020robust}}} & \textbf{\specialcell{\saname \\ \cite{zhang2021robust}}} & \textbf{\specialcell{\ours \\(ours)}} & {\small{\textbf{\specialcell{Average reward \\ across attacks}}}}\\
\midrule
\multirow{2}{*}{\textbf{Hopper}}  & \multirow{2}{*}{0.075} & \multirow{2}{*}{2} & ATLA-PPO               & $ 1763 \pm 818 $  & $ 1349 \pm 174  $ & $ 1172 \pm 344 $ & $ \boldsymbol{477 \pm 30} $ & $ 999.3 $ \\
\cmidrule(l){4-9}
                            &                  &  & \textbf{PA-ATLA-PPO}   & $ 2164 \pm 121 $  & $ 1720 \pm 490 $ & $ 1119 \pm 123  $ & $ \boldsymbol{1024 \pm 188} $  & $ \cellcolor{lightgray}{1287.7} $ \\
\midrule
\multirow{2}{*}{\textbf{Walker}}  & \multirow{2}{*}{0.05} & \multirow{2}{*}{2} & ATLA-PPO               & $ 3183 \pm 842 $  & $ 2405 \pm 529  $ & $ 2170 \pm 1032  $ & $ \boldsymbol{516 \pm 47} $  & $ 1697.0 $  \\
\cmidrule(l){4-9}
                            &                  & & \textbf{PA-ATLA-PPO}   & $ 3206 \pm 445 $  & $ 2749 \pm 106  $ & $ 2332 \pm 198 $ & $ \boldsymbol{1072 \pm 247} $ & $ \cellcolor{lightgray}{2051.0} $  \\
\midrule
\multirow{2}{*}{\textbf{Halfcheetah}} & \multirow{2}{*}{0.15} & \multirow{2}{*}{2} & ATLA-PPO            & $ 4871 \pm 112 $  & $ 3781 \pm 645  $ & $ 3493 \pm 372  $ & $ \boldsymbol{856 \pm 118} $ & $ 2710.0 $ \\
\cmidrule(l){4-9}
                            &                &  & \textbf{PA-ATLA-PPO} & $ 5257 \pm 94 $  & $ 4012 \pm 290  $ & $ 3329 \pm 183   $ & $ \boldsymbol{1670 \pm 149} $ & $ \cellcolor{lightgray}{3003.7} $  \\
\midrule
\multirow{2}{*}{\textbf{Ant}}     & \multirow{2}{*}{0.15}  & \multirow{2}{*}{5} & ATLA-PPO               & $ 3267 \pm 51 $  & $ 3062 \pm 149  $ & $ 2208 \pm 56  $ & $ \boldsymbol{-18 \pm 100} $ & $ 1750.7 $ \\
\cmidrule(l){4-9}
                            &                  & & \textbf{PA-ATLA-PPO}   & $ 3991 \pm 71 $   & $ 3364 \pm 254  $ & $ 2685 \pm 41  $ & $ \boldsymbol{2403 \pm 82} $  & $ \cellcolor{lightgray}{2817.3} $ \\
        
\bottomrule
\end{tabular}
}
\vspace{0.3em}
\caption{\small{Average episode rewards $\pm$ standard deviation of robust models with fewer training steps under different evasion attack methods. Results are averaged over 50 episodes. We bold the strongest attack in each row. The \colorbox{lightgray}{gray} cells are the most robust agents with the highest average rewards across all attacks.}}
\label{tbl:atla_efficient}
\vspace{-1em}
\end{table}

\subsubsection{Attacking Robustly Trained Agents on Atari}
\label{app:robust_atari}
In this section, we show the attack performance of our proposed algorithm \ours against DRL agents that are trained to be robust by prior works~\citep{zhang2020robust,oikarinen2020robust} in Atari games.

\citet{zhang2020robust} propose \textbf{SA-DQN}, which minimizes the action change under possible state perturbations within $\ell_p$ norm ball, i.e., to minimize the extra loss
\begin{equation}
  \mathcal{R}_{\mathrm{DQN}}(\theta):=\sum_{s} \max \left\{\max _{\hat{s} \in B(s)} \max _{a \neq a^{*}} Q_{\theta}(\hat{s}, a)-Q_{\theta}\left(\hat{s}, a^{*}(s)\right),-c\right\}
\end{equation}
where $\theta$ refers to the Q network parameters, $a^*(s) = \mathrm{argmax}_a Q_\theta(a|s)$, and $c$ is a small constant.
\citet{zhang2020robust} solve the above optimization problem by a convex relaxation of the Q network, which achieves $100\%$ action certification (i.e. the rate that action changes with a constrained state perturbation) in Pong and Freeway, over $98\%$ certification in BankHeist and over $47\%$ certification in RoadRunner under attack budget $\epsilon=1/255$.

\citet{oikarinen2020robust} propose another robust training method named RADIAL-RL. By adding a adversarial loss to the classical loss of the RL agents, and solving the adversarial loss with interval bound propagation, the proposed \textbf{RADIAL-DQN} and \textbf{RADIAL-A3C} achieve high rewards in Pong, Freeway, BankHeist and RoadRunner under attack budget $\epsilon=1/255$ and $\epsilon=3/255$.

\paragraph{Implementation of the Robust Agents and Environments.}
We directly use the trained SA-DQN agents provided by \citet{zhang2020robust}, as well as RADIAL-DQN and RADIAL-A3C agents provided by \citet{oikarinen2020robust}.
During test time, the agents take actions deterministically.
In order to reproduce the results in these papers, we use the same environment configurations as in \citet{zhang2020robust} and ~\citet{oikarinen2020robust}, respectively.
But note that the environment configurations of SA-DQN and RADIAL-RL are simpler versions of the traditional Atari configurations we use (described in Appendix~\ref{app:exp_imple_atari}). 
Both SA-DQN and RADIAL-RL use a single frame instead of the stacking as 4 frames. Moreover, SA-DQN restricts the number of actions as 6 (4 for Pong) in each environment, although the original environments have 18 actions (6 for Pong). 
The above simplifications in environments can make robust training easier since the dimensionality of the input space is much smaller, and the number of possible outputs is restricted.

\paragraph{Attack Methods}
In experiments, we find that the robust agents are much harder to attack than vanilla agents in Atari games, as claimed by the robust training papers~\citep{zhang2020robust,oikarinen2020robust}. 
A reason is that Atari games have discrete action spaces, and leading an agent to make a different decision at a state with a limited perturbation could be difficult. 
Therefore, we use a 30-step Projected Gradient Descent for all attack methods (with step size $\epsilon/10$), including \minbest~\citep{huang2017adversarial} and our \ours which use FGSM for attacking vanilla models.
Note that the PGD attacks used by~\citet{zhang2020robust} and \citet{oikarinen2020robust} in their experiments are the same as the \minbest-PGD attack we use.
For our \ours, we use PPO to train the adversary since PPO is relatively stable.
The learning rate is set to be $5\mathrm{e}-4$, and the clip threshold is 0.1. 
Note that SA-DQN, RADIAL-DQN and RADIAL-A3C agents all take deterministic actions, so we use the deterministic formulation of \ours as described in Appendix~\ref{app:algo_det}. In our implementation, we simply use a CrossEntropy loss for the actor as in Equation~\eqref{eq:actor_det_relax}.
\begin{equation}
\label{eq:actor_det_relax}
  g_D(\widehat{a},s) = \mathrm{argmin}_{s'\in B_{\epsilon}(s)} \mathsf{CrossEntropy}(\pi(s^\prime), \widehat{a}). 
\end{equation}


\begin{table}[!t]
\vspace{-0.5em}
\hspace{-1em}
\resizebox{1.03\textwidth}{!}{%
\setlength{\tabcolsep}{1pt}
\begin{tabular}{*{10}{c}}
\toprule
& \textbf{Environment} & \textbf{\specialcell{Natural \\Reward}} & $\boldsymbol{\epsilon}$ & \textbf{Random} & \textbf{\specialcell{\minbest\\ \cite{huang2017adversarial}} } & \textbf{\small{\specialcell{\minbest + \\ Momentum\\ \cite{ezgi2020nesterov}}} } & \textbf{\specialcell{\minq\\ \cite{pattanaik2018robust}}} & \textbf{\specialcell{\maxdiff\\ \cite{zhang2020robust}}} & \textbf{\specialcell{\ours \\(ours)}} \\
\midrule
\multirow{3}{*}{\textbf{SA-DQN}} & \textbf{RoadRunner} &  {\small{$46440 \pm 5797	$}} & $\frac{1}{255}$  & {\small{$45032 \pm 7125$}} & {\small{$40422 \pm 8301$}} & {\small{$43856 \pm 5445$}} & {\small{$42790 \pm 8456$}} & {\small{$45946 \pm 8499$}} & {\small{$\boldsymbol{38652 \pm 6550}$}} \\
\cmidrule(l){2-10}
&\textbf{BankHeist} &  {\small{$1237 \pm 11$}} & $\frac{1}{255}$ & {\small{$1236 \pm 12$}} & {\small{$1235 \pm 15$}} & {\small{$1233 \pm 17$}} & {\small{$1237 \pm 14$}} & {\small{$1236 \pm 13$}} & {\small{$1237 \pm 14$}} \\
\midrule
\multirow{5}{*}{\textbf{\specialcell{RADIAL\\-DQN}}} & \multirow{2}{*}{\textbf{RoadRunner}} &  \multirow{2}{*}{\small{$39102 \pm 13727$}} & $\frac{1}{255}$  & {\small{$41584 \pm 8351$}} & {\small{$41824 \pm 7858$}} & {\small{$42330 \pm 8925$}} & {\small{$40572 \pm 9988$}} & {\small{$42014 \pm 8337$}} & {\small{$\boldsymbol{38214 \pm 9119}$}} \\
\cmidrule(l){4-10}
& &  & $\frac{3}{255}$  & {\small{$23766 \pm 6129$}} & {\small{$9808 \pm 4345$}} & {\small{$35598 \pm 8191$}} & {\small{$39866 \pm 6001$}} & {\small{$18994 \pm 6451$}} & {\small{$\boldsymbol{1366 \pm 3354}$}} \\
\cmidrule(l){2-10}
& \multirow{2}{*}{\textbf{BankHeist}} & {\small{\multirow{2}{*}{$1060 \pm 95$}}}  & $\frac{1}{255}$  & {\small{$1037 \pm 103$}} & {\small{$991 \pm 105$}} & {\small{$\boldsymbol{988 \pm 102}$}} & {\small{$1021 \pm 96$}} & {\small{$1042 \pm 112$}} & {\small{$999 \pm 100$}} \\
\cmidrule(l){4-10}
& &  & $\frac{3}{255}$  & {\small{$1011 \pm 130$}} & {\small{$801 \pm 114$}} & {\small{$460 \pm 310$}} & {\small{$842 \pm 33$}} & {\small{$1023 \pm 110$}} & {\small{$\boldsymbol{397 \pm 172}$}} \\
\midrule
\multirow{5}{*}{\textbf{\specialcell{RADIAL\\-A3C}}} & \multirow{2}{*}{\textbf{RoadRunner}} &  \multirow{2}{*}{\small{$30854 \pm 7281$}} & $\frac{1}{255}$  & {\small{$30828 \pm 7297$}} & {\small{$31296 \pm 7095$}} & {\small{$31132 \pm 6861$}} & {\small{$30838 \pm 5743$}} & {\small{$32038 \pm 6898$}} & {\small{$\boldsymbol{30550 \pm 7182}$}} \\
\cmidrule(l){4-10}
& &  & $\frac{3}{255}$  & {\small{$30690 \pm 7006$}} & {\small{$30198 \pm 6075$}} & {\small{$29936 \pm 5388$}} & {\small{$29988 \pm 6340$}} & {\small{$31170 \pm 7453$}} & {\small{$\boldsymbol{29768 \pm 5892}$}} \\
\cmidrule(l){2-10}
& \multirow{2}{*}{\textbf{BankHeist}} &  \multirow{2}{*}{$847 \pm 31$} & $\frac{1}{255}$  & {\small{$847 \pm 31$}} & {\small{$847 \pm 33$}} & {\small{$848 \pm 31$}} & {\small{$848 \pm 31$}} & {\small{$848 \pm 31$}} & {\small{$848 \pm 31$}} \\
\cmidrule(l){4-10}
& &  & $\frac{3}{255}$ & {\small{$848 \pm 31$}}  & {\small{$644 \pm 158$}} & {\small{$822 \pm 11$}} & {\small{$842 \pm 33$}} & {\small{$834 \pm 30$}} & {\small{$\boldsymbol{620 \pm 168}$}} \\
\bottomrule
\end{tabular}
}
\vspace{0.3em}
\caption{\small{Average episode rewards $\pm$ standard deviation of SA-DQN, RADIAL-DQN, RADIAL-A3C robust agents under different evasion attack methods in Atari environments RoadRunner and BankHeist. 
All attack methods use 30-step PGD to compute adversarial state perturbations.
Results are averaged over 50 episodes. In each row, we bold the strongest attack, except for the rows where none of the attacker reduces the reward significantly (which suggests that the corresponding agent is relatively robust).)}}
\label{tbl:atari_robust}
\vspace{-2em}
\end{table}

\paragraph{Experiment Results}
In Table~\ref{tbl:atari_robust}, we reproduce the results reported by~\citet{zhang2020robust} and~\citet{oikarinen2020robust}, and demonstrate the average rewards gained by these robust agents under different attacks in RoadRunner and BankHeist.
Note that SA-DQN is claimed to be robust to attacks with budget $\epsilon=1/255$, and RADIAL-DQN and RADIAL-A3C are claimed to be relatively robust against up to $\epsilon=3/255$ attacks. ($\ell_\infty$ is used in both papers.) So we use the same $\epsilon$'s for these agents in our experiments.

It can be seen that compared with vanilla agents in Table~\ref{tbl:atari}, SA-DQN, RADIAL-DQN and RADIAL-A3C are more robust due to the robust training processes. 
However, in some environments, \ours can still decrease the rewards of the agent significantly.
For example, in RoadRunner with $\epsilon=3/255$, RADIAL-DQN gets 1k+ reward against our \ours attack, although RADIAL-DQN under other attacks can get 10k+ reward as reported by~\citet{oikarinen2020robust}.
In contrast, we find that RADIAL-A3C is relatively robust, although the natural rewards gained by RADIAL-A3C are not as high as RADIAL-DQN and SA-DQN.
Also, as SA-DQN achieves over $98\%$ action certification in BankHeist, none of the attackers is able to noticeably reduce its reward with $\epsilon=1/255$.

Therefore, our \ours can approximately evaluate the worst-case performance of an RL agent under attacks with fixed constraints, i.e., \emph{\ours can serve as a ``detector'' for the robustness of RL agents.} 
For agents that perform well under other attacks, \ours may still find flaws in the models and decrease their rewards; for agents that achieve high performance under \ours attack, they are very likely to be robust against other attack methods.

\subsubsection{Improving Robustness on Atari by \ouratla}
\label{app:atla_atari}
Note that different from SA-DQN~\citep{zhang2020robust} and RADIAL-RL~\citep{oikarinen2020robust} discussed in Appendix~\ref{app:robust_atari}, we use the traditional Atari configurations~\citep{mnih2015human} without any simplification (e.g. disabling frame stacking, or restricting action numbers). 
We aim to improve the robustness of the agents in original Atari environments, as in real-world applications, the environments could be complex and unchangeable.

\paragraph{Baselines}
We propose PA-ATLA-A2C by combining our \ours and the ATLA framework proposed by~\citet{zhang2021robust}.
We implement baselines including vanilla A2C, adversarially trained A2C (with \minbest~\citep{huang2017adversarial} and \maxdiff~\citep{zhang2020robust} adversaries attacking 50 frames). SA-A2C~\citep{zhang2020robust} is implemented using SGLD and convex relaxations in Atari environments.

In Table 6, naive adversarial training methods have unreliable performance under most strong attacks and SA-A2C is ineffective under \ours strongest attack. To provide evaluation using different $ \epsilon $, we provide the attack rewards of all robust models with different attack budgets $ \epsilon $. Under all attacks with different $ \epsilon $ value, PA-ATLA-A2C models outperform all other robust models and achieve consistently better average rewards across attacks. We can observe that our PA-ATLA-A2C training method can considerably enhance the robustness in Atari environments.


\begin{table}[!htbp]
\centering
\vspace{-0.5em}
\resizebox{\textwidth}{!}{%
\setlength{\tabcolsep}{5pt}
\begin{tabular}{*{9}{c}}
\toprule
\textbf{Model} & \textbf{\specialcell{Natural \\Reward}} & $\boldsymbol{\epsilon}$ & \textbf{Random} & \textbf{\specialcell{\minbest \\ \cite{huang2017adversarial}}} & \textbf{\specialcell{\maxdiff\\ \cite{zhang2020robust}}} & \textbf{\specialcell{\saname\\ \cite{zhang2021robust}}} & \textbf{\specialcell{\ours \\(ours)}} & {\small{\textbf{\specialcell{Average reward\\across attacks}}}}\\
\midrule
\multirow{2}{*}{\textbf{\specialcell{A2C \\ vanilla}}} & \multirow{2}{*}{\specialcell{$ 1228 \pm 93 $}} & 1/255 & $ 1223 \pm 77 $ & $ 972 \pm 99 $ & $ 1095 \pm 107 $ & $ 1132 \pm 30 $ & $ \boldsymbol{436 \pm 74} $ & $971.6$ \\
\cmidrule(l){3-9}
 & & 3/255 & $ 1064 \pm 129 $ & $ 697 \pm 153 $ & $ 913 \pm 164 $ & $ 928 \pm 124 $ & $ \boldsymbol{284 \pm 116} $ & $777.2$\\
\midrule
\multirow{2}{*}{\textbf{\specialcell{A2C \\ (adv: \minbest\cite{huang2017adversarial})}}} & \multirow{2}{*}{\specialcell{$ 948 \pm 94 $}} & 1/255 & $ 932 \pm 69 $ & $ 927 \pm 30 $ & $ 936 \pm 11 $ & $ 940 \pm 103 $ & $ \boldsymbol{704 \pm 19} $ & $ 887.8 $\\
\cmidrule(l){3-9}
 & & 3/255 & $ 874 \pm 51 $ & $ 813 \pm 32 $ & $ 829 \pm 27 $ & $ 843 \pm 126 $ & $ \boldsymbol{521 \pm 72} $ & $774.2$\\
\midrule
\multirow{2}{*}{\textbf{\specialcell{A2C \\ (adv: \maxdiff\cite{zhang2020robust})}}} & \multirow{2}{*}{\specialcell{$ 743 \pm 29 $}} & 1/255 & $ 756 \pm 42 $ & $ 702 \pm 89 $ & $ 752 \pm 79 $ & $ 749 \pm 85 $ & $ \boldsymbol{529 \pm 45} $ & $697.6$\\
\cmidrule(l){3-9}
& & 3/255 & $ 712 \pm 109 $ & $ 638 \pm 133 $ & $ 694 \pm 115 $ & $ 686 \pm 110 $ & $ \boldsymbol{403 \pm 101} $ & $626.6 $\\
\midrule
\multirow{2}{*}{\textbf{\specialcell{SA-A2C\cite{zhang2021robust}}}} & \multirow{2}{*}{\specialcell{$ 1029 \pm 152 $}}  & 1/255 & $ 1054 \pm 31 $ & $ 902 \pm 89 $ & $ 1070 \pm 42 $ & $ 1067 \pm 18 $ & $ \boldsymbol{836 \pm 70} $ & $985.8 $\\
\cmidrule(l){3-9}
& & 3/255 & $ 985 \pm 47 $ & $ 786 \pm 52 $ & $ 923 \pm 52 $ & $ 972 \pm 126 $ & $ \boldsymbol{644 \pm 153} $ & $ 862.0$ \\
\midrule
\multirow{2}{*}{\textbf{\specialcell{PA-ATLA-A2C \\(ours)}}} & \multirow{2}{*}{\specialcell{$ 1076 \pm 56 $}}  & 1/255 & $ 1055 \pm 204 $ & $ 957 \pm 78 $ & $ 1069 \pm 94 $ & $ 1045 \pm 143 $ & $ \boldsymbol{862 \pm 106} $ & \cellcolor{lightgray}{$ 997.6 $}\\
\cmidrule(l){3-9}
& & 3/255 & $ 1026 \pm 78 $ & $ 842 \pm 154 $ & $ 967 \pm 82 $ & $ 976 \pm 159 $ & $ \boldsymbol{757 \pm 132} $ & \cellcolor{lightgray}{$ 913.6 $ }\\
\bottomrule
\end{tabular}
}
\vspace{0.3em}
\caption{\small{Average episode rewards $\pm$ standard deviation over 50 episodes of A2C, A2C with adv. training, SA-A2C and our PA-ATLA-A2C robust models under different evasion attack methods in Atari environment BankHeist. In each row, we bold the strongest attack. The \colorbox{lightgray}{gray} cells are the most robust agents with the highest average rewards across all attacks. }}
\label{tbl:atla_atari}
\vspace{-1.5em}
\end{table}


\section{Additional Discussion of Our Algorithm}
\label{app:discuss}

\subsection{Optimality of Our Relaxed Objective for Stochastic Victims}
\label{app:optimal_relax}

\textbf{Proof of Concept: Optimality Evaluation in A Small MDP}

\begin{figure}[!htbp]
\centering
	\begin{subfigure}[t]{0.32\columnwidth}
		\centering
		\includegraphics[width=\textwidth]{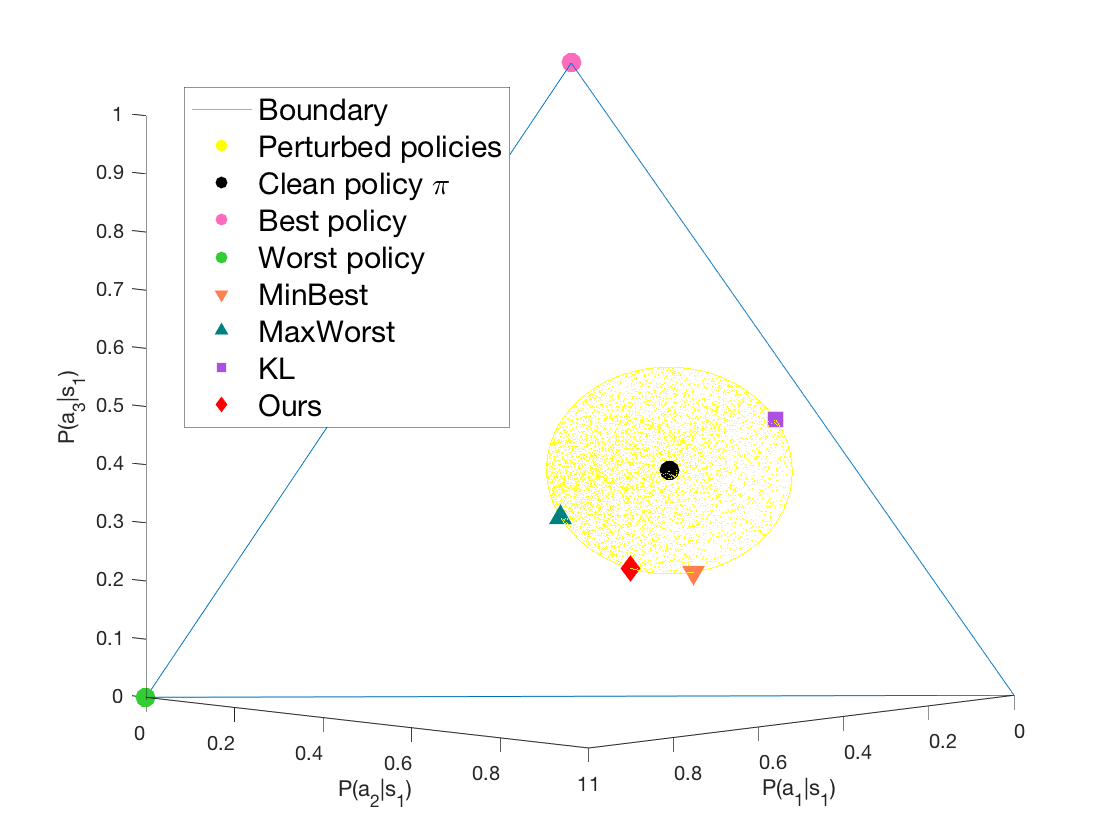}
	\caption{}
	\label{sfig:policy}
	\end{subfigure}
	\hfill
	\begin{subfigure}[t]{0.32\columnwidth}
		\centering
		\includegraphics[width=\textwidth]{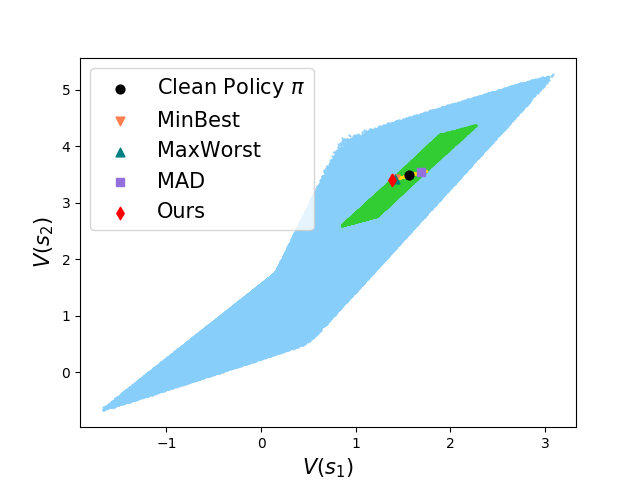}
	\caption{}
	\label{sfig:value_2}
	\end{subfigure}
	\hfill
	\begin{subfigure}[t]{0.32\columnwidth}
		\centering
		\includegraphics[width=\textwidth]{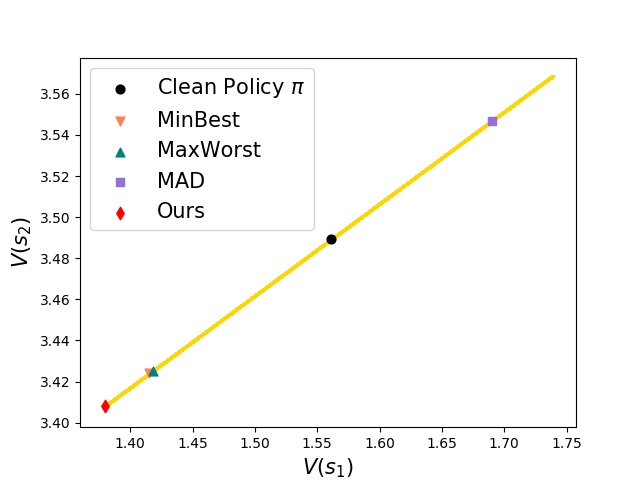}
	\caption{}
	\label{sfig:value_1}
	\end{subfigure}
	\caption{\small{Comparison of the optimality of different adversaries. \textbf{(a)} The policy perturbation generated for $s_1$ by all attack methods. \textbf{(b)} The values of corresponding policy perturbations. \textbf{(c)} A zoomed in version of (b), where the values of all possible policy perturbations are rendered. Our method finds the policy perturbation that achieves the lowest reward among all perturbations.}}
	\label{fig:optimal}
\end{figure}

We implemented and tested heuristic attacks and our \ours in the 2-state MDP example used in Appendix~\ref{app:example}, and visualize the results in Figure~\ref{fig:optimal}. 
For simplicity, assume the adversaries can perturb only perturb $\pi$ at $s_1$ within a $\ell_2$ norm ball of radius 0.2. 
And we let all adversaries perturb the policy directly based on their objective functions.
As shown in Figure~\ref{sfig:policy}, all possible $\tilde{\pi}(s_1)$'s form a disk in the policy simplex, and executing above methods, as well as our \ours, leads to 4 different policies on this disk. All these computed policy perturbations are on the boundary of the policy perturbation ball, justifying our Theorem~\ref{thm:boundary}.

As our theoretical results suggest, the resulted value vectors lie on a line segment shown in Figure~\ref{sfig:value_2} and a zoomed in version Figure~\ref{sfig:value_1}, where one can see that MinBest, MaxWorst and MAD all fail to find the optimal adversary (the policy with lowest value). On the contrary, our \ours finds the optimal adversary that achieves the lowest reward over all policy perturbations.

\textbf{For Continuous MDP: Optimality Evaluation in CartPole and MountainCar}

We provided a comparison between \saname and \ours in the CartPole environment in Figure \ref{fig:optimal_cart}, where we can see the \saname and \ours converge to the same result (the learned SA-RL adversary and PA-AD adversary have the same attacking performance).

\begin{figure}[!htbp]
\centering
  \includegraphics[width=0.5\textwidth]{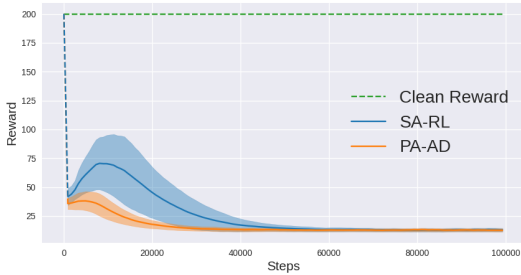}
\caption{\small{Learning curve of \saname and \ours attacker against an A2C victim in CartPole.}}
\label{fig:optimal_cart}
\end{figure}

CartPole has a 4-dimensional state space, and contains 2 discrete actions. Therefore since \saname has an optimal formulation, we expect \saname to converge to the optimal adversary in a small MDP like CartPole. Then the result in Figure \ref{fig:optimal_cart} suggests that our \ours algorithm, although with a relaxation in the actor optimization, also converges to the optimal adversary with even a faster rate than \saname (the reason is explained in Appendix ~\ref{app:exp_compare}). 

In addition to CartPole, we also run experiments in MountainCar with a 2-dimensional state space against a DQN victim. The \saname attacker reduces the victim reward to -128, and our \ours attacker reduces the victim reward to -199.45 within the same number of training steps. Note that the lowest reward in MountainCar is -200, so our \ours indeed converges to a near-optimal adversary, while \saname fails to converge to a near-optimal adversary. This is because MountainCar is an environment with relatively spare rewards. The actor in \textbf{\ours utilizes our Theorem \ref{thm:boundary} and only focuses on perturbations in the outermost boundary, which greatly reduces the exploration burden in solving an RL problem}. In contrast, SA-RL directly uses RL algorithms to learn the perturbation, and thus it has difficulties in converging to the optimal solution.

\subsection{More Comparison between \saname and \ours}
\label{app:compare_sa}

We provide a more detailed comparison between \saname and \ours from the following multiple aspects to claim our contribution.

\textbf{1. Size of the Adversary MDP}\\
Suppose the original MDP has size $|\states|$, $|\actions|$ for its state space and action space, respectively. Both PA-AD and SA-RL construct an adversary's MDP and search for the optimal policy in it. But the adversary's MDPs for PA-AD and SA-RL have different sizes.

\textbf{PA-AD}: state space is of size $|\states|$, action space is of size $\mathbb{R}^{|\actions|-1}$ for a stochastic victim, or $|\actions|$ for a deterministic victim.\\
\textbf{SA-RL}: state space is of size $|\states|$, action space is of size $|\states|$.

\textbf{2. Learning Complexity and Efficiency}\\
When the state space is larger than the action space, which is very common in RL environments, PA-AD solves a smaller MDP than SA-RL and thus more efficient. In environments with pixel-based states, SA-RL becomes computationally intractable, while PA-AD still works. It is also important to note that the actor's argmax problem in PA-AD further accelerates the convergence, as it rules out the perturbations that do not push the victim policy to its outermost boundary. Our experiment and analysis in Appendix~\ref{app:exp_compare} verify the efficiency advantage of our PA-AD compared with SA-RL, even in environments with small state spaces.

\textbf{3. Optimality}\\
\textbf{PA-AD}: (1) the formulation is optimal for a deterministic victim policy; (2) for a stochastic victim policy, the original formulation is optimal, but in practical implementations, a relaxation is used which may not have optimality guarantees. \\
\textbf{SA-RL}: the formulation is optimal. \\
Note that both SA-RL and PA-AD require training an RL attacker, but the RL optimization process may not converge to the optimal solution, especially in deep RL domains. Therefore, SA-RL and PA-AD are both approximating the optimal adversary in practical implementations.

\textbf{4. Knowledge of the Victim}\\
\textbf{PA-AD}: needs to know the victim policy  (white-box). Note that in a black-box setting, \ours can still be used based on the transferability of adversarial attacks in RL agents, as verified by~\citet{huang2017adversarial}. But the optimality guarantee of \ours does not hold in the black-box setting.\\
\textbf{SA-RL}: does not need to know the victim policy (black-box).\\
It should be noted that the white-box setting is realistic and helps in robust training:\\
(1) The white-box assumption is common in existing heuristic methods.\\
(2) It is always a white-box process to evaluate and improve the robustness of a given agent, for which PA-AD is the SOTA method. As discussed in our Ethics Statement, the ultimate goal of finding the strongest attacker is to better understand and improve the robustness of RL agents. During the robust training process, the victim is the main actor one wants to train, so it is a white-box setting. The prior robust training art ATLA~\citep{zhang2021robust} uses the black-box attacker SA-RL, despite the fact that it has white-box access to the victim actor. Since SA-RL does not utilize the knowledge of the victim policy, it usually has to deal with a more complex MDP and face converging difficulties. In contrast, if one replaces SA-RL with our PA-AD, PA-AD can make good use of the victim policy and find a stronger attacker with the same training steps as SA-RL, as verified in our Section \ref{sec:exp} and Appendix \ref{app:atla_atari}.

\textbf{5. Applicable Scenarios}\\
\textbf{\saname is a good choice if} (1) the action space is much larger than the state space in the original MDP, or the state space is small and discrete; (2) the attacker wants to conduct black-box attacks.\\
\textbf{\ours is a good choice if} (1) the state space is much larger than the state space in the original MDP; (2) the victim policy is known to the attacker; (3) the goal is to improve the robustness of one's own agent via adversarial training.

In summary, as we discussed in Section \ref{sec:algo}, there is a trade-off between efficiency and optimality in evasion attacks in RL. SA-RL has an optimal RL formulation, but empirical results show that \emph{SA-RL usually do not converge to the optimal adversary in a continuous state space, even in a low-dimensional state space} (e.g. see Appendix~\ref{app:optimal_relax} for an experiment in MountainCar). Therefore, the difficulty of solving an adversary's MDP is the bottleneck for finding the optimal adversary. Our PA-AD, although may sacrifice the theoretical optimality in some cases, greatly reduces the size and the exploration burden of the attacker's RL problem (can also be regarded as trading some estimation bias off for lower variance). Empirical evaluation shows our PA-AD significantly outperforms SA-RL in a wide range of environments.

Though PA-AD requires to have access to the victim policy, PA-AD solves a smaller-sized RL problem than SA-RL by utilizing the victim's policy and can be applied on evaluating/improving the robustness of RL policy. It is possible to let PA-AD work in a black-box setting based on the transferability of adversarial attacks. For example, in a black-box setting, the attacker can train a proxy agent in the same environment, and use PA-AD to compute a state perturbation for the proxy agent, then apply the state perturbation to attack the real victim agent. This is out of the scope of this paper, and will be a part of our future work.

\subsection{Vulnerability of RL Agents}
\label{app:rl_vulnerable}
It is commonly known that neural networks are vulnerable to adversarial attacks~\citep{ian2015explaining}. Therefore, it is natural that deep RL policies, which are modeled by neural networks, are also vulnerable to adversarial attacks~\citep{huang2017adversarial}.
However, there are few works discussing the difference between deep supervised classifiers and DRL policies in terms of their vulnerabilities.
In this section, we take a step further and investigate the vulnerability of DRL agents, through a comparison with standard adversarial attacks on supervised classifiers. 
Our main conclusion is that commonly used deep RL policies can be instrinsically \textbf{much more vulnerable} to small-radius adversarial attacks. The reasons are explained below.\newline

\textbf{1. Optimization process} \newline
Due to the different loss functions that RL and supervised learning agents are trained on, the size of robustness radius of an RL policy is much smaller than that of a vision-based classifier.\newline
On the one hand, computer vision-based image classifiers are trained with cross-entropy loss. Therefore, the classifier is encouraged to make the output logit of the correct label to be larger than the logits of other labels to maximize the log probability of choosing the correct label. On the other hand, RL agents, in particular DQN agents, are trained to minimize the Bellman Error instead. Thus the agent is not encouraged to maximize the absolute difference between the values of different actions. Therefore, if we assume the two networks are lipschitz continuous and their lipschitz constants do not differ too much, it is clear that a supervised learning agent has a much larger perturbation radius than an RL agent.

To prove our claim empirically, we carried out a simple experiment, we compare the success rate of target attacks of a well-trained DQN agent on Pong with an image classifier trained on the CIFAR-10 dataset with similar network architecture. For a fair comparison, we use the same image preprocessing technique, which is to divide the pixel values by 255 and no further normalization is applied. On both the image-classifier and DQN model, we randomly sample a target label other than the model predicted label and run the same 100-step projected gradient descent (PGD) attack to minimize the cross-entropy loss between the model output and the predicted label. We observe that for a perturbation radius of 0.005 ($l_\infty$ norm), the success rate of a targeted attack for the image classifier is only $15\%$, whereas the success rate of a targeted attack for the DQN model is $100\%$. This verifies our claim that a common RL policy is much more vulnerable to small-radius adversarial attacks than image classifiers.\newline

\textbf{2. Network Complexity}\newline
In addition, we also want to point out that the restricted network complexity of those commonly used deep RL policies could play an important role here. Based on the claim by \citet{madry2019deep}, a neural network with greater capacity could have much better robustness, even when trained with only clean examples. But for the neural network architectures commonly used in RL applications, the capacity of the networks is very limited compared to SOTA computer vision applications. For example, the commonly used DQN architecture proposed in \citet{mnih2015human} only has 3 convolutional layers and 2 fully connected layers. But in vision tasks, a more advanced and deeper structure (e.g. ResNet has $\> 100$ layers) is used. Therefore, it is natural that the perturbation radius need for attacking an RL agent is much smaller than the common radius studied in the supervised evasion attack and adversarial learning literature.

\end{document}